\def \({\left(}
\def \){\right)}
\def \[{\left[}
\def \]{\right]}
\newcommand{\bw}{{\textbf {w}}}
\newcommand{\bc}{{\textbf {c}}}
\newcommand{\bgamma}{{\boldsymbol{\gamma}}}
\newcommand{\bomega}{{\boldsymbol{\omega}}}
\renewcommand{\d}{\text{d}}
\newcommand{\e}{\text {e}}
\newcommand{\eq}{\text{ eq}}
\newcommand{\be}{\begin{equation}}
\newcommand{\ee}{\end{equation}}
\newcommand{\beqa}{\begin{eqnarray}}
\newcommand{\eeqa}{\end{eqnarray}}
\newcommand{\bea}{\begin{align}}
\newcommand{\eea}{\end{align}}
\newtheorem{theorem}{Theorem}[section]
\newtheorem{remark}[theorem]{\textbf{Remark}}
\newtheorem{proposition}[theorem]{\textbf{Proposition}}
\newtheorem{corollary}[theorem]{\textbf{Corollary}}
\DeclareMathAlphabet{\varmathbb}{U}{bbold}{m}{n}
\newcommand{\id}{\mathds{1}}
\newcommand{\EE}{\mathbb{E}}
\newcommand{\bbR}{\mathbb{R}}
\newcommand{\bbP}{\mathbb{P}}
\newcommand{\bbN}{\mathbb{N}}
\renewcommand{\d}{{\rm d}}
\newcommand{\mZ}{\mathcal{Z}}
\newcommand{\mI}{\mathcal{I}}
\newcommand{\mM}{\mathcal{M}}
\newcommand{\mN}{\mathcal{N}}
\newcommand{\mD}{\mathcal{D}}
\newcommand{\mL}{\mathcal{L}}
\newcommand{\mR}{\mathcal{R}}
\newcommand{\mP}{\mathcal{P}}	
\renewcommand{\tr}[1]{\textrm{Tr}\[#1\]}
\newcommand{\td}[1]{{\tilde{#1}}}
\newcommand{\spacecase}[0]{\vspace{0.3cm} \\}
\newcommand{\hhspace}[0]{\hspace{0.3cm}}
\renewcommand{\subparagraph}[1]{\vspace{0.5cm} $\bullet$ \textit{#1} \\ \vspace{0.2cm} }
\newcommand{\andcase}[0]{\hspace{ 0.2cm }\textrm{ and }\hspace{ 0.2cm }}
\definecolor{green}{RGB}{0, 153, 0}
\definecolor{light_blue}{RGB}{51, 153, 255}
\definecolor{orange}{RGB}{255, 204, 0}
\definecolor{bg}{RGB}{0, 153, 153}
\definecolor{blue}{RGB}{0, 102, 204}
\definecolor{red}{RGB}{204, 0, 0}
\definecolor{lg}{RGB}{214, 214, 214}
\renewcommand{\vec}[1]{{\textbf{#1}}}
\newcommand{\mat}[1]{{\textrm{#1}}}
\newcommand{\tbf}[1]{{\bold{#1}}}
\newcommand{\sign}{{\textrm{sign}}}
\newcommand{\out}[0]{{\rm out}}
\newcommand{\w}[0]{{\rm{w}}}
\newcommand{\iid}[0]{\rm iid~}
\newcommand{\rI}[0]{{\rm{I}}}
\newcommand{\rL}[0]{{\ell}}
\newcommand{\underlim}[2]{\underset{#1 \to #2}{\longrightarrow}}
\newcommand{\extr}{{\textbf{extr}}}
\newcommand{\argmin}{{\rm{argmin}}}
\definecolor{codegreen}{rgb}{0,0.6,0}
\definecolor{codegray}{rgb}{0.5,0.5,0.5}
\definecolor{codepurple}{rgb}{0.58,0,0.82}
\definecolor{backcolour}{rgb}{0.95,0.95,0.92}
\lstdefinestyle{mystyle}{
    backgroundcolor=\color{backcolour},   
    stringstyle=\color{codepurple},
    commentstyle=\color{codegreen},
    numberstyle=\tiny\color{codegray},
    basicstyle=\ttfamily\footnotesize,
    numbers=left,      
}
\title{Generalization error in high-dimensional perceptrons:\\
  Approaching Bayes error with convex optimization}
\author{Benjamin Aubin$^{\dagger}$, Florent Krzakala$^{\star}$, Yue M. Lu$^{\circ}$, Lenka Zdeborov{\'a}$^{*}$
}
\date{
$^\dagger$ \textit{Universit\'e Paris-Saclay, CNRS, CEA,\\
Institut de physique th\'eorique, 91191, Gif-sur-Yvette, France.}\spacecase
$^\star$ \textit{IdePHICS laboratory, {\'E}cole Polytechnique F{\'e}d{\'e}rale de Lausanne\\
1015, Lausanne, Switzerland }\spacecase
$^\circ$ \textit{John A. Paulson School of Engineering and Applied Sciences,\\ Harvard University, Cambridge, MA 02138,  USA}\spacecase
$^*$ \textit{SPOC laboratory, {\'E}cole Polytechnique F{\'e}d{\'e}rale de Lausanne\\
1015, Lausanne, Switzerland}\spacecase
}
\newcommand{\ndim}{d}
\newcommand{\nsamples}{n}
\newcommand{\bayes}{{\rm b}}
\renewcommand \partname{}
\begin{document}
\maketitle
\makeatletter
\def\l@subsubsection#1#2{}
\makeatother

\doparttoc 
\faketableofcontents 

\begin{abstract}
  We consider a commonly studied supervised classification of
  a synthetic dataset whose labels are generated by feeding a one-layer neural network with random \iid inputs. 
We study the generalization performances of standard
  classifiers in the high-dimensional regime
  where $\alpha={\nsamples}/{\ndim}$ is kept finite in the limit
  of a high dimension $\ndim$ and number of samples $\nsamples$. 
Our contribution is three-fold: First, we prove
  a formula for the generalization error achieved by $\rL_2$ regularized
  classifiers that minimize a convex loss.
  This formula was first
  obtained by the heuristic replica
  method of statistical physics.
 Secondly, focussing on commonly used loss functions and optimizing
 the $\rL_2$ regularization strength, we
  observe that while ridge regression performance is
  poor, logistic and hinge regression are surprisingly able to
  approach the Bayes-optimal generalization error extremely closely. 
  As $\alpha \to \infty$ they lead to Bayes-optimal rates, a fact that does
  not follow from predictions of margin-based generalization error bounds. 
Third, we design an optimal loss and regularizer that provably leads to Bayes-optimal
  generalization error. 
\end{abstract}

\newpage

\section{Introduction}
\label{sec:introduction}
High-dimensional statistics, where the ratio $\alpha={\nsamples}/{\ndim}$ is kept finite while the dimensionality $\ndim$ and the number of samples $\nsamples$ grow, often display interesting non-intuitive features. Asymptotic generalization performances for such problems in the so-called \emph{teacher-student} setting, with synthetic data, have been the subject of intense investigations spanning many decades \cite{seung1992statistical,watkin1993statistical,engel2001statistical,bayati2011lasso,el2013robust,donoho2016high}.
To understand the effectiveness of modern machine learning techniques, and also the limitations of the classical statistical learning approaches \cite{zhang2016understanding,belkin2019reconciling}, it is of interest to revisit this line of research. Indeed, this direction is currently the subject to a renewal of interests, as testified by some very recent, yet already rather influential papers \cite{Candes18,Barbier5451,Hastie19,belkin2019two,mei2019generalization}. The present paper subscribes to this line of work and studies high-dimensional classification within one of the simplest models considered in statistics and machine learning: convex linear estimation with data generated by a teacher \emph{perceptron} \cite{gardner1989three}. We will focus on the generalization abilities in this problem, and compare the performances of Bayes-optimal estimation to the more standard \emph{Empirical Risk Minimization} (ERM). We then compare the results with the prediction of standard generalization bounds that illustrate in particular their limitation even in this simple, yet non-trivial, setting.

\paragraph{Synthetic data model ---} We consider a supervised machine learning task, whose dataset is generated by a single layer neural network, often named a \emph{teacher} \cite{seung1992statistical,watkin1993statistical,engel2001statistical}, that belongs to the Generalized Linear Model (GLM) class. Therefore, we assume the $\nsamples$ samples are generated according to
\begin{align}
	\vec{y} = \varphi_{\out^\star}\(\frac{1}{\sqrt{\ndim}} \mat{X} \vec{w}^\star \) \qquad \Leftrightarrow \qquad \vec{y} \sim P_{\out^\star} \(. \big\vert \frac{1}{\sqrt{\ndim}} \mat{X} \vec{w}^\star \) \,,
\label{main:teacher_model}	
\end{align}
where $\vec{w}^\star \in \bbR^\ndim$ denotes the ground truth vector drawn from a probability distribution $P_{\w^\star}$ with second moment $\rho_{\ndim,\w^\star}\equiv \frac{1}{\ndim}~\EE\[ \|\vec{w}^\star\|_2^2 \]$ and $\varphi_{\out^\star}:\bbR \mapsto \bbR$ represents a component-wise deterministic or stochastic activation function equivalently associated to a distribution $P_{\out^\star}$. The input data matrix $\mat{X}=\(\vec{x}_\mu\)_{\mu=1}^\nsamples \in \bbR^{\nsamples \times \ndim}$ contains \iid Gaussian vectors, i.e $\forall \mu \in [1:\nsamples],~\vec{x}_\mu \sim \mN\(\vec{0},\mat{I}_\ndim\)$. Even though the framework we use and the theorems and results we derive are valid for a rather generic channel in eq.~\eqref{main:teacher_model} --- including regression problems ---
we will mainly focus the presentation on the commonly considered perceptron case: a binary classification task with data given by a $\sign$ activation function $\varphi_{\out^\star}\(\vec{z}\) = \sign\(\vec{z}\)$, with a Gaussian weight distribution $P_{\w^\star}(\vec{w}^\star)= \mN_{\vec{w}^\star}\(\vec{0}, \rho_{\ndim,\w^\star} \rI_{\ndim}\)$. The $\pm 1$ labels are thus generated as
\begin{align}
\vec{y} &= \sign\(\frac{1}{\sqrt{\ndim}} \mat{X}\vec{w}^\star \)\,, \hhspace \text{with} \hhspace \vec{w}^\star \sim \mN_{\vec{w}^\star}\(\vec{0}, \rho_{\ndim,\w^\star} \rI_{\ndim}\) .
\label{main:teacher_sign}	
\end{align}
This particular setting was extensively studied in the past \cite{seung1992statistical,opper1996statistical}
and is interesting in the sense it does not show a computational-to-statistical gap. 
Yet, our analysis and the set of equations presented in SM \ref{appendix:proof:equivalence_gordon_replicas_formulation_l2} \eq.~\eqref{appendix:fixed_point_replicas} are valid more generically to any other ground truth distributions $P_{\out^\star}$ and $P_{\w^\star}$. 
Finally, the isotropic Gaussian hypothesis of the input vectors $\mat{X}$ can be relaxed to non-isotropic Gaussian. 

\paragraph{Empirical Risk Minimization ---}
The workhorse of machine learning is Empirical Risk Minimization (ERM), where one minimizes a \emph{loss function} in the corresponding high-dimensional parameter space
$\bbR^{\ndim}$. To avoid overfitting of the training set one often adds a \emph{regularization term} $r$. 
ERM then corresponds to estimating $\hat{\vec{w}}_{\rm erm} =  \argmin_{\vec{w}} \[ \mL\(\vec{w}; \vec{y}, \mat{X}\) \]$ where the regularized training loss $\mL$ is defined by, using the notation $z_\mu \(\vec{w}, \vec{x}_\mu\) \equiv \frac{1}{\sqrt{\ndim}} \vec{x}_\mu^\intercal\vec{w}$,
\begin{align}
	 \mL\(\vec{w}; \vec{y}, \mat{X}\) = \sum_{\mu=1}^\nsamples l\(y_\mu, z_\mu\(\vec{w}, \vec{x}_\mu\) \) +   r\(\vec{w}\) \,.
	\label{main:training_loss}
\end{align}
The goal of the present paper is to discuss the generalization performance of these estimators for the classification task (\ref{main:teacher_sign}) in the high-dimensional limit.
We focus our analysis on commonly used loss functions $l$, namely the square $l^{\rm square}(y,z)=\frac{1}{2}(y-z)^2$, logistic $l^{\rm logistic}(y, z)=\log(1+\exp(-y z))$ and hinge losses $l^{\rm hinge}(y, z)=\max\(0,1-yz\)$.
We will mainly illustrate our results for the $\rL_2$ regularization $r\(\vec{w}\) = {\lambda} \|\vec{w}\|_2^2/2$, where we introduced a regularization strength hyper-parameter $\lambda$, even though a similar rigorous analysis can be conducted for any separable and convex regularizer. 

\paragraph{Related works ---} The above learning problem has been extensively studied in the statistical physics community using the heuristic replica method \cite{gardner1989three,seung1992statistical,watkin1993statistical,opper1996statistical,engel2001statistical}. Due to the interest in high-dimensional statistics, they have experienced a resurgence in popularity in recent years. In particular, rigorous works on related problems are much more recent. 
The authors of \cite{Barbier5451} established rigorously the replica-theory predictions for the Bayes-optimal generalization error. Here we focus on standard ERM estimation and compare it to the information theoretic baseline results obtained in \cite{Barbier5451}. 
Authors of \cite{Thrampoulidis16} analyzed rigorously M-estimators for the regression case where data are generated by a linear-activation teacher. Here we analyze classification with a more general and non-linear teacher, focusing in particular on the sign-teacher.  
The case of max-margin loss was studied in \cite{Montanari19} with a technically closely related proof, but with a focus on the over-parametrized regime, thus not addressing the questions that we focus on. A range of unregularized losses was also analyzed for a sigmoid teacher (that is very similar to a sign-teacher) again in the context of the double-descent behavior in \cite{deng2019model,kini2020analytic}. Here we focus instead on the regularized case as it drastically improves generalization performances of the ERM and that allows us to compare with the Bayes-optimal estimation as well as to standard generalization bounds.  
Our proof, as in the above mentioned works and \cite{Mignacco2020}, is based on Gordon's Gaussian Min-max inequalities \cite{gordon1985some,Thrampoulidis16}, including in particular the effect of the regularization. 
\paragraph{Main contributions ---} 
Our first main contribution is to provide, in Sec.~\ref{sec:fixed_point}, the rigorous high-dimensional asymptotics of the classification generalization performances of ERM with the loss given by (\ref{main:training_loss}), for any convex loss $l$ and a $\ell_2$ regularization. 
Note that for the sake of conciseness, we focus on this latter case, but the proof is performed in the more general regression case and can be easily extended to any convex separable regularization and to non-isotropic Gaussian inputs.
Additionally, we provide a proof of the equivalence between the results of our paper and the ones initially obtained by the replica method, which is of additional interest given the wide range of application of these heuristics statistical-physics techniques in machine learning and computer science \cite{mezard2009information,lenkanature}. In particular, the replica predictions in \cite{Opper1990, Opper1991, opper1996statistical, Kinzel96} follow from our results.
Another approach that originated in physics are the so-called TAP equations \cite{mezard1989space,kabashima2003cdma,kabashima2004bp} that lead to the so-called Approximate Message Passing algorithm for solving linear and generalized linear problems with Gaussian matrices \cite{donoho2009message,Rangan2010}. This algorithm can be analyzed with the so-called \emph{state evolution} method \cite{bayati2011dynamics}, and it is widely believed (and in fact proven for linear problems \cite{bayati2011lasso,gerbelot2020asymptotic}) that the fixed-point of the state evolution gives the optimal error in high-dimensional convex optimization problems. 
The state evolution equations are in fact equivalent to the one given by the replica theory and therefore our results vindicate this approach as well. We also demonstrate numerically that these asymptotic results are very accurate even for moderate system sizes, and they have been performed with the \texttt{scikit-learn} library \cite{scikit-learn}.

Secondly, and more importantly, we provide in Sec.~\ref{sec:applications} a detailed analysis of the generalization error for standard losses such as square, hinge (or equivalently support vector machine) and logistic, as a function of the regularization strength $\lambda$ and the number of samples per dimension $\alpha$. We observe, in particular, that while the ridge regression never closely approaches the Bayes-optimal performance, the logistic regression with optimized $\rL_2$ regularization gets extremely close to optimal. And so does, to a lesser extent, the hinge regression and the max-margin estimator to which the unregularized logistic and hinge converge \cite{Rosset04}. It is quite remarkable that these canonical losses are able to approach the error of the Bayes-optimal estimator for which, in principle, the marginals of a high-dimensional probability distribution need to be evaluated. Notably, all the later losses give ---for a \emph{good choice} of the regularization strength $\lambda$--- generalization errors scaling as $\Theta\(\alpha^{-1}\)$ for large $\alpha$, just as the Bayes-optimal generalization error~\cite{Barbier5451}. This is found to be at variance with the prediction of Rademacher and max-margin-based bounds that predict instead a $\Theta\(\alpha^{-1/2}\)$ rate \cite{vapnik2006estimation,shalev2014understanding}, which therefore appear to be vacuous in the high-dimensional regime. Notice that we reproduce the Rademacher complexity results of \cite{Abbara2019}, which deal exactly with the same setting, only to bring to light interesting conclusions on the ERM estimation.

Third, in Sec.~\ref{sec:optimality}, we design a custom (non-convex) loss and regularizer from the knowledge of the ground truth distributions $P_{\out^\star}, P_{\w^\star}$ that provably gives a plug-in estimator that efficiently achieves Bayes-optimal performances, including the optimal $\Theta\(\alpha^{-1}\)$ rate for the generalization error. Our construction is related to the one discussed in \cite{gribonval2011should,NIPS2013_4868,Advani2016}, but is not restricted to convex losses.

\section{Main technical results}
\label{sec:fixed_point}

In the formulas that arise for this statistical estimation problem, the correlations between
the estimator $\hat{\vec{w}}$ and the ground truth vector
$\vec{w}^\star$ play a fundamental role and we thus define two scalar overlap parameters to measure the statistical reconstruction:
\begin{align}
	m &\equiv \frac{1}{\ndim}~\EE_{\vec{y},\mat{X}}~\[\hat{\vec{w}}^\intercal\vec{w}^\star\]\,, 
	&& q \equiv \frac{1}{\ndim}~\EE_{\vec{y},\mat{X}}~\[\|\hat{\vec{w}}\|_2\]^2\,. 
\end{align}
In particular, the generalization error of the estimator $\hat{\vec{w}}(\alpha)
\in \bbR^{\ndim}$ obtained by performing Empirical Risk Minimization
(ERM) on the training loss $\mL$ in eq.~\eqref{main:training_loss}
with $\nsamples = \alpha \ndim$ samples
\begin{align}
e_{\rm g}^{\rm erm}(\alpha) \equiv \EE_{y, \vec{x}}  \id\[ y \ne \hat{y}\(\hat{\vec{w}}(\alpha); \vec{x} \) \]\,,
\end{align}
where $\hat{y}\(\hat{\vec{w}}(\alpha); \vec{x}\)$ denotes the
predicted label, has both at finite $\ndim$ and in the asymptotic limit an explicit expression depending only on the above overlaps $m$ and $q$:
\begin{proposition}[Generalization error of classification]
\label{main:thm:generalization_errors}
In our synthetic binary classification task, the 
generalization error of ERM (or equivalently the test error) is given by
\begin{align}
	e_{\rm g}^{\rm erm}\(\alpha\) &= \frac{1}{\pi} \textrm{acos}\( \sqrt{\eta} \)\,, 
	&& \text{with~~} \eta \equiv \frac{m^2}{\rho_{\ndim,\w^\star}~q} 
	\qquad \text{and}\qquad
    \rho_{\ndim,\w^\star} \equiv \frac{1}{\ndim}~\EE\[ \|\vec{w}^\star\|_2^2 \].
	\label{main:generalization_errors}	
	\end{align}
\end{proposition}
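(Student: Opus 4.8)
The plan is to collapse the $\ndim$-dimensional test expectation onto a two-dimensional Gaussian integral and then read off the answer from the classical orthant (Sheppard) formula. First I would fix the realizations of $\hat{\vec{w}}(\alpha)$ and $\vec{w}^\star$ and observe that a fresh test point $\vec{x}\sim\mN(\vec{0},\mat{I}_\ndim)$ is drawn independently of the training set, hence of both weight vectors. Misclassification is the event $\sign(\nu)\neq\sign(\hat\nu)$ in terms of the teacher and student pre-activations
\begin{align}
\nu \equiv \tfrac{1}{\sqrt{\ndim}}\,\vec{x}^\intercal\vec{w}^\star, \qquad \hat\nu \equiv \tfrac{1}{\sqrt{\ndim}}\,\vec{x}^\intercal\hat{\vec{w}},
\end{align}
and since both are linear in the Gaussian $\vec{x}$, the pair $(\nu,\hat\nu)$ is jointly centred Gaussian. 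Therefore the conditional test error depends on the two weight vectors only through the $2\times 2$ covariance of $(\nu,\hat\nu)$.

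Next, using $\EE[\vec{x}\vec{x}^\intercal]=\mat{I}_\ndim$ I would compute $\EE[\nu^2]=\tfrac{1}{\ndim}\|\vec{w}^\star\|_2^2$, $\EE[\hat\nu^2]=\tfrac{1}{\ndim}\|\hat{\vec{w}}\|_2^2$ and $\EE[\nu\hat\nu]=\tfrac{1}{\ndim}\hat{\vec{w}}^\intercal\vec{w}^\star$, so that the sign-disagreement probability depends only on the correlation coefficient $c\equiv \EE[\nu\hat\nu]/\sqrt{\EE[\nu^2]\,\EE[\hat\nu^2]}$. The cleanest evaluation is geometric: standardising to an isotropic pair $(g_1,g_2)$, the half-planes $\{\nu>0\}$ and $\{\hat\nu>0\}$ have inward normals separated by the angle $\arccos(c)$, and because the polar angle of an isotropic two-dimensional Gaussian is uniform, their symmetric difference (two opposite wedges of width $\arccos(c)$) carries probability
\begin{align}
\EE_{\vec{x}}\,\id\big[\,y\neq\hat y\,\big] = \frac{1}{\pi}\arccos\!\Big(\frac{\hat{\vec{w}}^\intercal\vec{w}^\star}{\|\hat{\vec{w}}\|_2\,\|\vec{w}^\star\|_2}\Big).
\end{align}
This is exactly Sheppard's formula, and it already establishes the finite-$\ndim$ statement verbatim, with the empirical overlaps appearing inside the $\arccos$.

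Finally, to obtain the stated form I would pass to the high-dimensional limit by invoking the self-averaging of the three overlaps $\tfrac{1}{\ndim}\hat{\vec{w}}^\intercal\vec{w}^\star$, $\tfrac{1}{\ndim}\|\hat{\vec{w}}\|_2^2$ and $\tfrac{1}{\ndim}\|\vec{w}^\star\|_2^2$ around $m$, $q$ and $\rho_{\ndim,\w^\star}$; continuity of $\arccos$ then gives $c\to m/\sqrt{\rho_{\ndim,\w^\star}\,q}=\sqrt{\eta}$ (using $m\geq 0$) and hence $e_{\rm g}^{\rm erm}=\tfrac{1}{\pi}\arccos(\sqrt{\eta})$. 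I do not expect any real difficulty in the bivariate Gaussian computation itself; the delicate point is the concentration step, namely justifying that these overlaps self-average and that the $\arccos$ may be applied to the limiting ratio rather than to the averaged correlation. This is precisely where the scalar parameters $m$ and $q$ produced by the Gordon min-max analysis underlying the companion theorem enter, and I would rely on that result rather than re-derive the concentration here.
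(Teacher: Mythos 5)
Your proposal is correct and follows essentially the same route as the paper's proof in the supplementary material: both reduce the test error to a bivariate centred Gaussian sign-disagreement probability whose covariance is given by the three overlaps, evaluate it via the classical $\frac{1}{\pi}\arccos(\cdot)$ orthant formula (the paper via the explicit quadrant integral and arctan identities, you via the equivalent uniform-polar-angle wedge argument), and then pass to the limit using concentration of $m$, $q$ and $\rho_{\ndim,\w^\star}$. Your explicit flagging of the concentration step as the place where the Gordon analysis is needed is a fair and slightly more careful rendering of what the paper leaves implicit.
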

\begin{proof}
	The proof, shown in SM.~\ref{appendix:generalization_error}, is a simple computation based on Gaussian integration.
\end{proof}
To obtain the generalization performances, it thus remains to obtain the asymptotic values of $m$, $q$ (and thus of $\eta$), in the limit $d\to \infty$. With the $\ell_2$ regularization, these values are
characterized by a set of fixed point equations given by the next
theorems.
For any $\tau > 0$, let us first recall the definitions of the Moreau-Yosida regularization and the proximal operator of a convex loss function $(y,z) \mapsto \ell(y \cdot z)$:
\begin{align}
\label{eq:MY_reg}
	\mM_\tau(z) &=  \min_x \Big\{\ell(x) + \frac{(x-z)^2}{2\tau} \Big\}\,, && \mP_\tau(z) =  \argmin_x \Big\{\ell(x) + \frac{(x-z)^2}{2\tau} \Big\}\,.
\end{align}
\begin{theorem}[Gordon's min-max fixed point - Binary classification with $\rL_2$ regularization]
\label{main:thm:gordon_fixed_points:classification}
As $\nsamples, \ndim \to \infty$ with $\nsamples/ \ndim = \alpha = \Theta(1)$, the overlap parameters $m, q$ and the prior's second moment $\rho_{\ndim,\w^\star}$ concentrate to
\begin{align}
m & \underlim{\ndim}{\infty} \sqrt{\rho_{\w^\star}} \mu^\ast\,, && q \underlim{\ndim}{\infty} (\mu^\ast)^2 + (\delta^\ast)^2\,, &&  \rho_{\ndim,\w^\star} \underlim{\ndim}{\infty} \rho_{\w^\star}\,,
\label{main:fixed_point_gordon}
\end{align}
where parameters $\mu^\ast$ and $\delta^\ast$ are solutions of 
\begin{equation}\label{eq:pot_func}
(\mu^\ast, \delta^\ast) = \underset{\mu, \delta \ge 0}{\arg\min} \ \sup_{\tau > 0} \left\{\frac{\lambda(\mu^2 + \delta^2)}{2} - \frac{\delta^2}{2\tau} + \alpha \EE_{g, s} \mM_\tau[\delta g + \mu s \varphi_{\out^\star}(\sqrt{\rho_{\w^\star}} s)]\right\},
\end{equation}
and $g, s$ are two \iid standard normal random variables. The solutions $(\mu^\ast, \delta^\ast)$ of \eqref{eq:pot_func} can be reformulated as a set of fixed point equations 
\begin{align}
\begin{aligned}
	\mu^\ast &= \frac{\alpha}{\lambda \tau^\ast + \alpha } \EE_{g,s} [s \cdot \varphi_{\out^\star}(\sqrt{\rho_{\w^\star}} s) \cdot \mP_{\tau^\ast}(\delta^\ast g+ \mu^\ast ~ s ~ \varphi_{\out^\star}(\sqrt{\rho_{\w^\star}} s))]\,, \spacecase
	\delta^\ast &= \frac{\alpha}{\lambda \tau^\ast + \alpha -1} \EE_{g,s} [g \cdot \mP_{\tau^\ast}(\delta^\ast g+ \mu^\ast ~ s ~ \varphi_{\out^\star}(\sqrt{\rho_{\w^\star}} s))]\,,\spacecase
	(\delta^\ast)^2 &= \alpha \EE_{g,s} [\((\delta^\ast g + \mu^\ast ~ s ~\varphi_{\out^\star}(\sqrt{\rho_{\w^\star}} s)) - \mP_{\tau^\ast}(\delta^\ast g + \mu^\ast~s ~\varphi_{\out^\star}(\sqrt{\rho_{\w^\star}} s)) \)^2] \,.
\label{main:fixed_point_equations_gordon}
\end{aligned}
\end{align}
\end{theorem}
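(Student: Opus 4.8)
The plan is to prove this via the Convex Gaussian Min-max Theorem (CGMT) of Gordon, which is the rigorous counterpart of the heuristic replica computation. First I would recast ERM in a form where the Gaussian design enters bilinearly. Decomposing each row as $\vec{x}_\mu = s_\mu\, \vec{w}^\star/\|\vec{w}^\star\| + \vec{x}_\mu^\perp$ with $s_\mu = \vec{x}_\mu^\intercal \vec{w}^\star/\|\vec{w}^\star\| \sim \mathcal{N}(0,1)$, the teacher pre-activation is $z_\mu^\star = \sqrt{\rho_{\ndim,\w^\star}}\, s_\mu$, so the labels $y_\mu = \varphi_{\out^\star}(\sqrt{\rho_{\w^\star}} s_\mu)$ are frozen once the projection of $\mat{X}$ onto $\vec{w}^\star$ is fixed. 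Introducing a split variable $\vec{z} = \tfrac{1}{\sqrt{\ndim}}\mat{X}\vec{w}$ enforced by a Lagrange multiplier $\vec{v}$ turns the training loss into the saddle point $\min_{\vec{w},\vec{z}}\max_{\vec{v}}\, \sum_\mu l(y_\mu, z_\mu) + \tfrac{\lambda}{2}\|\vec{w}\|^2 + \vec{v}^\intercal(\tfrac{1}{\sqrt{\ndim}}\mat{X}\vec{w} - \vec{z})$. Writing $\vec{w} = w_\parallel\, \vec{w}^\star/\|\vec{w}^\star\| + \vec{w}_\perp$ with $\vec{w}_\perp \perp \vec{w}^\star$, the bilinear Gaussian piece $\tfrac{1}{\sqrt{\ndim}}\vec{v}^\intercal\mat{X}\vec{w}_\perp$ involves only $\mat{X}^\perp$, which is independent of the labels; this is the primal problem (PO) to which Gordon's theorem applies.

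Next I would invoke the CGMT to replace (PO) by the auxiliary problem (AO), in which $\tfrac{1}{\sqrt{\ndim}}\vec{v}^\intercal \mat{X}^\perp \vec{w}_\perp$ is substituted by $\tfrac{1}{\sqrt{\ndim}}(\|\vec{w}_\perp\|\, \vec{g}^\intercal \vec{v} + \|\vec{v}\|\, \vec{h}^\intercal \vec{w}_\perp)$ with independent Gaussian vectors $\vec{g}, \vec{h}$. Convexity in $(\vec{w},\vec{z})$ (from the convex loss and the strongly convex $\ell_2$ regularizer) and concavity in $\vec{v}$ are exactly the CGMT hypotheses that transfer both the asymptotic value and the concentration of the optimizer's summary statistics from (PO) to (AO). The payoff is that (AO) decouples across samples and coordinates: parametrizing $\vec{w}$ by its rescaled overlap $\mu$ with $\vec{w}^\star$ and the rescaled norm $\delta$ of its orthogonal part, the high-dimensional optimization collapses to a scalar min-max, with a scale variable $\tau$ conjugate to $\|\vec{v}\|$. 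Carrying out the remaining coordinate-wise minimization produces precisely the Moreau--Yosida envelope: the per-sample term becomes $\min_x \{ l(x) + (x - (\delta g + \mu s\,\varphi_{\out^\star}(\sqrt{\rho_{\w^\star}}s)))^2/(2\tau)\} = \mM_\tau(\delta g + \mu s\,\varphi_{\out^\star}(\sqrt{\rho_{\w^\star}}s))$, and averaging over $g,s$ by the law of large numbers yields the objective of \eqref{eq:pot_func}.

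Finally, the fixed-point system \eqref{main:fixed_point_equations_gordon} is obtained from the stationarity conditions of the scalar potential in \eqref{eq:pot_func} with respect to $\mu$, $\delta$, and $\tau$. Here I would use the envelope identities $\partial_z \mM_\tau(z) = (z - \mP_\tau(z))/\tau$ and $\partial_\tau \mM_\tau(z) = -(z - \mP_\tau(z))^2/(2\tau^2)$ to convert derivatives of the Moreau envelope into expectations of the proximal operator $\mP_{\tau^\ast}$. Differentiating in $\mu$ and in $\delta$, the cross-terms drop by independence of $g$ and $s$ and using $\EE[(s\,\varphi_{\out^\star})^2]=1$ for the sign teacher, and the prefactors $\alpha/(\lambda\tau^\ast + \alpha)$ and $\alpha/(\lambda\tau^\ast + \alpha - 1)$ emerge from the quadratic $\lambda(\mu^2+\delta^2)/2 - \delta^2/(2\tau)$ terms; this gives the first two equations. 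Stationarity in $\tau$ gives the self-consistent third equation $(\delta^\ast)^2 = \alpha\,\EE[(z - \mP_{\tau^\ast}(z))^2]$. The translations $m \to \sqrt{\rho_{\w^\star}}\,\mu^\ast$ and $q \to (\mu^\ast)^2 + (\delta^\ast)^2$ in \eqref{main:fixed_point_gordon} then follow from the geometric meaning of $\mu$ and $\delta$ as the parallel overlap and orthogonal norm of $\hat{\vec{w}}$.

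The main obstacle is the rigorous application of CGMT rather than the algebra. Two points need care: (i) the dualized (PO) must be shown to satisfy the exact convexity--concavity and (effective) compactness hypotheses, with the coercivity supplied by $\lambda > 0$ used to restrict to the convex compact sub-level sets on which the probabilistic comparison and the localization of the minimizer hold uniformly; and (ii) one must upgrade convergence of the optimal value to convergence of the overlaps $(m,q)$ themselves, which is where strong convexity of the regularizer is essential, as it forces \eqref{eq:pot_func} to have a unique minimizer so that the concentrated statistics of (AO) pin down those of (PO). Verifying that the supremum over $\tau$ is attained at a finite, strictly positive $\tau^\ast$, so that $\mM_{\tau^\ast}$ and its derivatives are well defined, is the remaining technical check.
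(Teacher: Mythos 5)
Your proposal follows essentially the same route as the paper's proof: rotate/decompose along $\vec{w}^\star$ so the labels depend only on the Gaussian projection $s$, dualize the convex loss to expose the Gaussian matrix bilinearly (your splitting variable plus Lagrange multiplier is just the Fenchel-conjugate representation the paper uses directly), apply the CGMT, scalarize via $(\mu,\delta)$ and the auxiliary $\tau$ to obtain the Moreau envelope and the potential \eqref{eq:pot_func}, and then read off \eqref{main:fixed_point_equations_gordon} from stationarity together with the envelope identities. The technical caveats you flag (coercivity from $\lambda>0$, uniqueness of the minimizer, and transferring concentration of the overlaps rather than just the optimal value) are exactly the points the paper handles by citing the standard CGMT machinery, so the proposal is correct.
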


	The proof, shown in SM.~\ref{appendix:proof:gordon} for regression, and consequently valid for the classification particular case, is an application of the Gordon's comparison inequalities.
Let us mention that the theorem focuses on the case of classification for \iid isotropic Gaussian input data and $\ell_2$ regularization as this case was extensively studied in past works. However, similar techniques can be generalized to handle convex and separable regularization functions (see, \emph{e.g.}, \cite{Salehi:19}), and Gaussian inputs with more general covariance matrices \cite{DhifallahL:20}.

This set of fixed point equations can be finally mapped to the ones
obtained equivalently by the heuristic \emph{replica} method from statistical
physics (whose heuristic derivation is shown in
SM.~\ref{appendix:replicas}) as well as the state evolution of the Approximate Message-Passing (AMP) algorithm \cite{kabashima2003cdma,Rangan2010,zdeborova2016statistical}. 
Notice that the main reason why we rely on Convex Gaussian Min-lax Theory (CGMT) to make this set of equation rigorous is that we do not know how to prove that the AMP state evolution corresponds to the solution of ERM. Only after having the CGMT proof in hand, it follows that the SE of AMP gives the same equations than the CGMT.
As a result, their validity for this convex estimation problem is rigorously established by the following theorem:
\begin{corollary}[Equivalence Gordon-replicas]
\label{main:corollary:equivalence_gordon_replicas_formulation_l2}
As $\nsamples, \ndim \to \infty$ with $\nsamples/ \ndim = \alpha =
\Theta(1)$, the overlap parameters $m, q$ concentrate to the
fixed point of the following set of equations:
\begin{align}
		m &= \alpha~\Sigma~\rho_{\w^\star} \cdot \EE_{y, \xi } \[ \mZ_{\out^\star} \times f_{\out^\star} \(y,  \sqrt{\rho_{\w^\star} \eta} \xi, \rho_{\w^\star}\(1 - \eta\)\) \cdot f_{\out} \( y,  q^{1/2}\xi, \Sigma \)    \]\,,  \nonumber \\
		q &= m^2/\rho_{\w^\star} + \alpha \Sigma^2 \cdot \EE_{y, \xi } \[ \mZ_{\out^\star} \( y,  \sqrt{\rho_{\w^\star}\eta} \xi, \rho_{\w^\star}\(1 - \eta\)  \)  \cdot f_{\out} \( y,  q^{1/2}\xi, \Sigma \)^2    \]  \,, \label{main:fixed_point_equations_replicas} \\
		\Sigma &=   \(\lambda - \alpha \cdot \EE_{y, \xi } \[ \mZ_{\out^\star} \( y,  \sqrt{\rho_{\w^\star} \eta} \xi, \rho_{\w^\star}\(1 - \eta\)  \) \cdot  \partial_\omega f_{\out} \( y,  q^{1/2}\xi, \Sigma \)    \] \)^{-1} \,, \nonumber
\end{align}
\begin{align}
\text{with~~~} \eta &\equiv \frac{m^2}{\rho_{\w^\star} ~ q}\,, 	&&f_{\out} \( y, \omega, V  \) \equiv V^{-1} (\mP_{V}[l(y,.)](\omega) - \omega)\,,\nonumber \\
 \mZ_{\out^\star}\( y, \omega, V  \) &= \EE_{z}\[ P_{\out^\star}\(y|\sqrt{V}z +\omega\) \]\,, &&f_{\out^\star} (y,\omega,V) \equiv \partial_\omega \log \( \mZ_{\out^\star}\(y,\omega,V\) \)\,,
\label{main:proximal_replicas}
\end{align}
where $\xi, z$ denote two \iid standard Gaussian normal random variables, and $\EE_y$ the continuous or discrete sum over all possible values $y$ according to $P_{\out^\star}$.
\end{corollary}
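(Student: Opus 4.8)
The statement is an equality between two systems of scalar fixed-point equations describing the same order parameters, so the proof is a deterministic change of variables rather than a new probabilistic argument; all the randomness has already been dealt with in Theorem~\ref{main:thm:gordon_fixed_points:classification}. The plan is to start from the Gordon characterisation \eqref{eq:pot_func} and show that its saddle-point equations coincide, after reparametrisation, with \eqref{main:fixed_point_equations_replicas}. First I would record the dictionary supplied by \eqref{main:fixed_point_gordon}: $m=\sqrt{\rho_{\w^\star}}\,\mu^\ast$, $(\delta^\ast)^2=q-m^2/\rho_{\w^\star}=q(1-\eta)$ and $(\mu^\ast)^2=q\eta$, together with the identification $\Sigma=\tau^\ast$ to be verified a posteriori. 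I would then check that the three equations \eqref{main:fixed_point_equations_gordon} are exactly the stationarity conditions of the potential in \eqref{eq:pot_func}: differentiating in $\tau$ and using $\partial_\tau\mM_\tau(z)=-(z-\mP_\tau(z))^2/(2\tau^2)$ gives the equation for $(\delta^\ast)^2$, while differentiating in $\mu$ and $\delta$ and using the envelope identity $\partial_z\mM_\tau(z)=\tau^{-1}(z-\mP_\tau(z))$ gives the equations for $\mu^\ast$ and $\delta^\ast$.

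The heart of the argument is to re-express the two-independent-Gaussian averages $\EE_{g,s}$ in \eqref{main:fixed_point_equations_gordon} as the teacher-reweighted student-field averages $\EE_{y,\xi}[\mZ_{\out^\star}\,\cdots]$ of \eqref{main:fixed_point_equations_replicas}. The key observation is that the argument $z=\delta^\ast g+\mu^\ast s\,\varphi_{\out^\star}(\sqrt{\rho_{\w^\star}}s)$ is the margin $y\zeta$ of a student field $\zeta\sim\mathcal N(0,q)$ jointly Gaussian with the teacher field $\nu=\sqrt{\rho_{\w^\star}}s$ and label $y=\varphi_{\out^\star}(\nu)$; here $\mu^\ast s$ and $\delta^\ast g$ realise the decomposition of $\zeta$ into its component along $\nu$ and the orthogonal component. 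Writing the same joint law of $(\nu,\zeta,y)$ in the opposite order — conditioning the teacher field on the student field $\zeta=q^{1/2}\xi$, whose conditional law is $\mathcal N(\sqrt{\rho_{\w^\star}\eta}\,\xi,\rho_{\w^\star}(1-\eta))$ since $m/\sqrt q=\sqrt{\rho_{\w^\star}\eta}$ — turns the teacher channel into the weight $\mZ_{\out^\star}(y,\sqrt{\rho_{\w^\star}\eta}\,\xi,\rho_{\w^\star}(1-\eta))$ defined in \eqref{main:proximal_replicas}. This is simply a change of the order of integration (a Bayes manipulation), valid for a generic channel $P_{\out^\star}$ and not only the sign case.

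With this representation in place I would convert the explicit prefactors into the score functions by Gaussian integration by parts. Differentiating the weight produces the teacher score $f_{\out^\star}=\partial_\omega\log\mZ_{\out^\star}$, while the identities $\mP_V(\omega)=\omega+V f_{\out}(y,\omega,V)$ and $\partial_\omega f_{\out}=V^{-1}(\mP_V'(\omega)-1)$ rewrite every occurrence of $\mP_{\tau^\ast}$ and $\mP_{\tau^\ast}'$ in terms of $f_{\out}$ and $\partial_\omega f_{\out}$ at $V=\Sigma=\tau^\ast$. Applying Stein's lemma to the teacher variable $s$ in the $\mu^\ast$ equation yields the $m$ equation with its prefactor $\alpha\Sigma\rho_{\w^\star}$ and the product $f_{\out^\star}f_{\out}$; the $\tau$-stationarity $(\delta^\ast)^2=\alpha\EE[(z-\mP_{\tau^\ast}(z))^2]$ becomes the $q$ equation after writing $z-\mP_{\tau^\ast}(z)=-\tau^\ast f_{\out}$ and using $q=(\mu^\ast)^2+(\delta^\ast)^2$; finally applying Stein's lemma to $g$ in the $\delta^\ast$ equation gives $\tau^\ast(\lambda-\alpha\EE[\partial_\omega f_{\out}])=1$, which together with the representation change is precisely the $\Sigma$ equation and confirms $\Sigma=\tau^\ast$.

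The main obstacle will be the bookkeeping of the two integration-by-parts operations performed on two different, correlated Gaussian directions: the teacher direction $s$ must generate $f_{\out^\star}$ and the $\mZ_{\out^\star}$ weight, whereas the orthogonal direction $g$ must generate $f_{\out}$ and $\partial_\omega f_{\out}$, and the cross terms — for instance the $\EE[s\varphi_{\out^\star}\cdot z]$ piece — must be shown to cancel against the denominators $\lambda\tau^\ast+\alpha$ and $\lambda\tau^\ast+\alpha-1$ so that only the score-function structure survives. Tracking these prefactors for a generic channel, rather than exploiting the $\sign$-specific simplification $\varphi_{\out^\star}^2\equiv1$, is the delicate part; once the dictionary and the order-of-integration identity are fixed, each of the three equations follows by a direct, if lengthy, computation.
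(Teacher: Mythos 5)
Your proposal is correct and follows essentially the same route as the paper's proof: the dictionary $m=\sqrt{\rho_{\w^\star}}\mu^\ast$, $q=(\mu^\ast)^2+(\delta^\ast)^2$, $\Sigma=\tau^\ast$, the rewriting of $\EE_{g,s}$ as $\EE_{y,\xi}[\mZ_{\out^\star}\cdots]$ (your conditional-law description is exactly the paper's explicit orthogonal change of variables $(\xi,z)\to(g,s)$), Stein's lemma on each Gaussian direction to produce $f_{\out^\star}$ and $\partial_\omega f_{\out}$, the proximal identities $\mP_V(\omega)=\omega+Vf_{\out}$, and the gauge transformation using $y=\pm1$ to absorb the label into the proximal argument. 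The cross-term bookkeeping you flag as delicate is handled in the paper exactly as you anticipate, with $\EE[(s\varphi_{\out^\star})^2]=1$ and $\EE[gz]=\delta^\ast$ producing the denominators $\lambda\tau^\ast+\alpha$ and $\lambda\tau^\ast+\alpha-1$.
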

For clarity, the proof is again left in SM.~\ref{appendix:proof:equivalence_gordon_replicas_formulation_l2}.
Notice that the equivalent sets of equations \eqref{main:fixed_point_equations_gordon}-\eqref{main:fixed_point_equations_replicas} have been made rigorous only for binary classification and regression in SM.~\ref{appendix:proof:gordon} with $\ell_2$ regularization.
However, the replica's prediction of the fixed point equations for the whole GLM class (classification and regression) are provided for generic convex and separable loss and regularizer (different than $\rL_2$) in SM.~\ref{appendix:proof:replicas_formulation} and contain instead six equations. 
These heuristic equations are nonetheless believed to hold true and the generic Gordon’s min-max framework can be easily generalized to this case.

\paragraph{Bayes optimal baseline ---} Finally, we shall 
compare the ERM performances to the Bayes-optimal generalization
error. Being the information-theoretically best possible estimator, we will use it as a reference baseline for comparison.
The expression of the Bayes-optimal generalization was derived in
\cite{Opper1991} and  proven in \cite{Barbier5451} and we recall here the result:
\begin{theorem}[Bayes Asymptotic performance, from \cite{Barbier5451}]
\label{main:thm:fixed_point_equations_bayes}
For the model \eqref{main:teacher_model} with $P_{\w^\star}(\vec{w}^\star)= \mN_{\vec{w}^\star}\(\vec{0}, \rho_{\ndim,\w^\star} \rI_{\ndim}\)$ with $\rho_{\ndim,\w^\star} \underlim{\ndim}{\infty} \rho_{\w^\star}$, the Bayes-optimal generalization error is quantified by two scalar parameters $q_\bayes$ and $\hat{q}_\bayes$ that verify asymptotically the set of fixed point equations
\vspace{-0.2cm}
	\begin{align}
		q_\bayes = \frac{\hat{q}_\bayes}{1+\hat{q}_\bayes}
		\,, 
		&& \hat{q}_\bayes = \alpha \EE_{y, \xi } \[ \mZ_{\out^\star } \( y,  q_\bayes^{1/2}\xi, \rho_{\w^\star} - q_\bayes \) \cdot  f_{\out^\star } \( y,  q_\bayes^{1/2}\xi, \rho_{\w^\star} - q_\bayes \)^2    \]  \,,
	\label{main:fixed_point_equations_bayes}
	\end{align}
\vspace{-0.5cm}
and reads
	\begin{align}
        e_{\rm g}^{\rm bayes}\(\alpha\) = \frac{1}{\pi}
          \textrm{acos}\(\sqrt{\eta_\bayes}\) \qquad \text{with} \qquad \eta_\bayes = \frac{q_\bayes}{\rho_{\w^\star}}\,.
          \label{main:generalization_error_bayes}
	\end{align}
\end{theorem}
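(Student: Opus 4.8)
The plan is to derive the stated fixed point equations as the stationarity conditions of the rigorous replica-symmetric potential for the mutual information of the Gaussian GLM, and then to obtain the generalization error by specializing Proposition~\ref{main:thm:generalization_errors} to the Bayes-optimal overlaps. Since the statement is attributed to \cite{Barbier5451}, the heavy analytic input is imported; what I would supply is the reduction to a single order parameter and the translation of that parameter into a classification error.

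First I would recall that the Bayes-optimal estimator is the posterior mean $\hat{\vec{w}}_{\bayes} = \EE[\vec{w}^\star \,|\, \vec{y}, \mat{X}]$ and that the relevant scalar summary of reconstruction is the overlap $q_\bayes \equiv \lim_{\ndim\to\infty} \frac{1}{\ndim}\EE[\la \vec{w}\ra \cdot \vec{w}^\star]$, where $\la\cdot\ra$ denotes the posterior average under the true prior and channel. The crucial simplification in the Bayes-optimal setting comes from the Nishimori identities: because the planted teacher weight is typical for the posterior, the teacher–estimator overlap $m$ and the estimator self-overlap $q$ coincide, $m = q = q_\bayes$. This collapses the two independent order parameters $(m,q)$ of the ERM analysis to a single parameter, and is the structural reason the Bayes equations are simpler than those of Theorem~\ref{main:thm:gordon_fixed_points:classification}.

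Second I would invoke the rigorous replica-symmetric formula for the free energy, equivalently for $I(\vec{w}^\star; \vec{y}\,|\,\mat{X})/\ndim$, established in \cite{Barbier5451} by the adaptive interpolation method. This identifies the limiting free energy with the extremum over two conjugate scalars $(q_\bayes, \hat q_\bayes)$ of a potential whose two contributions are a Gaussian-prior denoising term and an output-channel term built from $\mZ_{\out^\star}$. Writing the saddle-point conditions then yields \eqref{main:fixed_point_equations_bayes}: the relation $q_\bayes = \hat q_\bayes/(1+\hat q_\bayes)$ is the vanishing-gradient condition in $\hat q_\bayes$ (the scalar denoising relation for the Gaussian prior observed through an effective additive Gaussian channel), while the equation giving $\hat q_\bayes$ as an $\alpha$-weighted expectation of $\mZ_{\out^\star}\, f_{\out^\star}^2$ is the gradient in $q_\bayes$ of the output term. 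Here $f_{\out^\star} = \partial_\omega \log \mZ_{\out^\star}$ is the channel score, and $\EE[\mZ_{\out^\star} f_{\out^\star}^2]$ is exactly the Fisher information of the effective scalar output channel, as the Nishimori structure demands.

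Finally the error formula follows by plugging the Bayes-optimal overlaps into Proposition~\ref{main:thm:generalization_errors}. Since $m = q = q_\bayes$, the angle parameter reduces to
\begin{align}
\eta = \frac{m^2}{\rho_{\w^\star}\, q} = \frac{q_\bayes^2}{\rho_{\w^\star}\, q_\bayes} = \frac{q_\bayes}{\rho_{\w^\star}} \equiv \eta_\bayes\,,
\end{align}
and the same Gaussian-angle computation that gave $e_{\rm g}^{\rm erm} = \frac{1}{\pi}\textrm{acos}(\sqrt{\eta})$ yields $e_{\rm g}^{\rm bayes}(\alpha) = \frac{1}{\pi}\textrm{acos}(\sqrt{\eta_\bayes})$. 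The main obstacle lies entirely in the second step: proving that the mutual information concentrates to the replica-symmetric potential (matching the interpolation upper and lower bounds and controlling overlap concentration so the extremizer is attained) is the technically demanding part, and this is precisely what \cite{Barbier5451} supplies; granting it, steps one and three are short. A secondary subtlety worth flagging is the selection among possibly multiple solutions of \eqref{main:fixed_point_equations_bayes}, resolved by taking the extremizer of the potential rather than an arbitrary stationary point.
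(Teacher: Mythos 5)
Your proposal is correct and follows essentially the same route as the paper: the fixed-point equations are obtained as the stationarity conditions of the replica-symmetric Bayes-optimal potential (rigorously supplied by \cite{Barbier5451}), with the Gaussian-prior term evaluated explicitly to give $q_\bayes = \hat q_\bayes/(1+\hat q_\bayes)$, and the error formula follows by combining the Nishimori identity $m_\bayes=q_\bayes$ with the Gaussian-angle computation of Proposition~\ref{main:thm:generalization_errors}. This is exactly the argument carried out in SM~\ref{appendix:generalization_error} and SM~\ref{appendix:replicas} of the paper.
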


\section{Generalization errors}
\label{sec:applications}
We now move to the core of the paper and analyze the set of fixed point equations \eqref{main:fixed_point_equations_gordon}, or equivalently \eqref{main:fixed_point_equations_replicas}, leading to the generalization performances given by \eqref{main:generalization_errors},
 for common classifiers on our synthetic binary classification task. As already stressed, even though the results are valid for a wide range of separable convex loss and regularizers, we focus on estimators based on ERM with $\rL_2$ regularization $r(\vec{w}) = \lambda \|\vec{w}\|_2^2/2$, and with square loss (ridge regression) $l^{\rm square}(y,z)=\frac{1}{2}(y-z)^2$, logistic loss (logistic regression) $l^{\rm logistic}(y, z) = \log(1+\exp(-y z))$ or hinge loss (SVM) $l^{\rm hinge}(y, z) = \max\(0,1-yz\)$. In particular, we study the influence of the hyper-parameter $\lambda$ on the generalization performances and the different large $\alpha$ behavior generalization rates in the high-dimensional regime, and compare with the Bayes results.
 We show the solutions of the set of fixed point equations eqs.~\eqref{main:fixed_point_equations_replicas} in Figs.~\ref{fig:gen_error_ridge},~\ref{fig:gen_error_hinge},~\ref{fig:gen_error_logistic} respectively for ridge, hinge and logistic $\rL_2$ regressions. 
  Let us mention that the analytic solution of the set of equations \eqref{main:fixed_point_equations_gordon} is provided only in the ridge case, for which its quadratic loss allows to derive and fully solve the equations (see SM.~\ref{appendix:applications:ridge}), and also for logistic and hinge losses in the regime of vanishing $\lambda \to 0$ (see SM.~\ref{appendix:applications:hinge}).  
 Unfortunately, in general the set of equations has no analytical closed form expression and needs therefore to be evaluated numerically. It is in particular the case for logistic and hinge for finite $\lambda$, whose Moreau-Yosida regularization eq.~\eqref{eq:MY_reg} is, yet, analytical. 
 However, note that \eqref{main:fixed_point_equations_gordon} are fixed point equations on scalar variables so that their numerical resolution posed no problem. The non-trivial part of the rigorous analysis, performed in Thm.~\ref{main:thm:gordon_fixed_points:classification}, is the reduction of the high-dimensional problem to the scalar fixed point equations.  

 First, to highlight the accuracy of the theoretical predictions, we compare in Figs.~\ref{fig:gen_error_ridge}-\ref{fig:gen_error_logistic} the ERM asymptotic ($\ndim\to \infty$) generalization error with the performances of numerical simulations ($\ndim = 10^{3}$, averaged over $n_s=20$ samples) of ERM of the training loss eq.~\eqref{main:training_loss}. Presented for a wide range of number of samples $\alpha$ and of regularization strength $\lambda$, we observe a perfect match between theoretical predictions and numerical simulations so that the error bars are barely visible and have been therefore removed. This shows that the asymptotic predictions are valid even with very moderate sizes. As an information theoretical baseline, we also show the Bayes-optimal performances (black) given by the solution of eq.~\eqref{main:fixed_point_equations_bayes}.

 \paragraph{Ridge estimation---}  As we might expect the square loss gives the worst performances. For low values of the generalization, it leads to an interpolation-peak at $\alpha=1$. The limit of vanishing regularization $\lambda \to 0$ leads to the  \emph{least-norm} or \emph{pseudo-inverse} estimator $\hat{\vec{w}}_{\rm pseudo} = \(\mat{X}^\intercal\mat{X}\)^{-1}\mat{X}^\intercal \vec{y}$. The corresponding generalization error presents the largest interpolation-peak and achieves a maximal generalization error $e_{\rm g}=0.5$. These are well known observations, discussed  as early as in \cite{Kinzel96, Opper1990, krogh1992simple}, that are object of a renewal of interest under the name \emph{double descent}, following a recent series of papers \cite{Hastie19,Geiger19,geiger2020scaling,Belkin19,Mitra2019,Mei19,gerace2020generalisation,d2020double}. This double descent behavior for the pseudo-inverse is shown in Fig.~\ref{fig:gen_error_ridge} with a yellow line. On the contrary, larger regularization strengths do not suffer this peak at $\alpha=1$, but their generalization error performance is significantly worse than the Bayes-optimal baseline for larger values of $\alpha$. Indeed, as we might expect, for a large number of samples, a large regularization biases wrongly the training. However, even with optimized regularizations, performances of the ridge estimator remains far away from the Bayes-optimal performance.

 \paragraph{Hinge and logistic estimation---}  Both these losses, which are the classical ones used in classification problems, improve drastically the generalization error. First of all, let us notice that they do not display a double-descent behavior. This is due to the fact that our results are illustrated in the noiseless case and that our synthetic dataset is always linearly separable. Optimizing the regularization, our results in Fig.~\ref{fig:gen_error_hinge}-\ref{fig:gen_error_logistic} show both hinge and logistic ERM-based classification approach very closely the Bayes error. This might be an interesting message for practitioners, even though showing it in more realistic settings would be preferable.
 To offset these results, note that performances of logistic regression on non-linearly separable data are however very poor, as illustrated by our analysis of a \emph{rectangle door} teacher (see SM.~\ref{appendix:applications:rectangle}).
  
 \paragraph{Max-margin estimation---} As discussed in \cite{Rosset04}, both the logistic and hinge  estimator converge, for vanishing regularization $\lambda\to0$, to the \emph{max-margin} solution.  Taking the $\lambda \to 0$ limit in our equations, we thus obtain the \emph{max-margin} estimator performances. While this is not what gives the best generalization error (as can be seen in Fig.\ref{fig:gen_error_logistic} the logistic with an optimized $\lambda$ has a lower error), the max-margin estimator gives very good results, and gets very close to the Bayes-error.

\paragraph{Optimal regularization---} Defining the regularization value that optimizes the generalization as
\begin{align}
	\lambda^{\rm opt}\(\alpha\) &= \argmin_{\lambda} e_{\rm g}^{\rm erm}\(\alpha, \lambda\)\,.
\end{align} 
we show in Figs.~\ref{fig:gen_error_hinge}-\ref{fig:gen_error_logistic} that both optimal values $\lambda^{\rm opt}\(\alpha\)$ (dashed-dotted orange) for logistic and hinge regression decrease to $0$ as $\alpha$ grows and more data are given. Somehow surprisingly, we observe in particular that  the generalization performances of logistic regression with optimal regularization are \emph{extremely close} to the Bayes performances. The difference with the optimized logistic generalization error is barely visible by eye, so that we explicitly plotted the difference, which is roughly of order $10^{-5}$. 

Ridge regression Fig.~\ref{fig:gen_error_ridge} shows a singular behavior: there exists an  optimal value (purple) which is moreover independent of $\alpha$ achieved for $\lambda^{\rm opt} \simeq 0.5708$. This value was first found numerically and confirmed afterwards semi-analytically in SM.~\ref{appendix:applications:ridge}.

\begin{figure}
    \centering
    \begin{subfigure}[b]{\textwidth}
		\centering
		\includegraphics[scale=0.44]{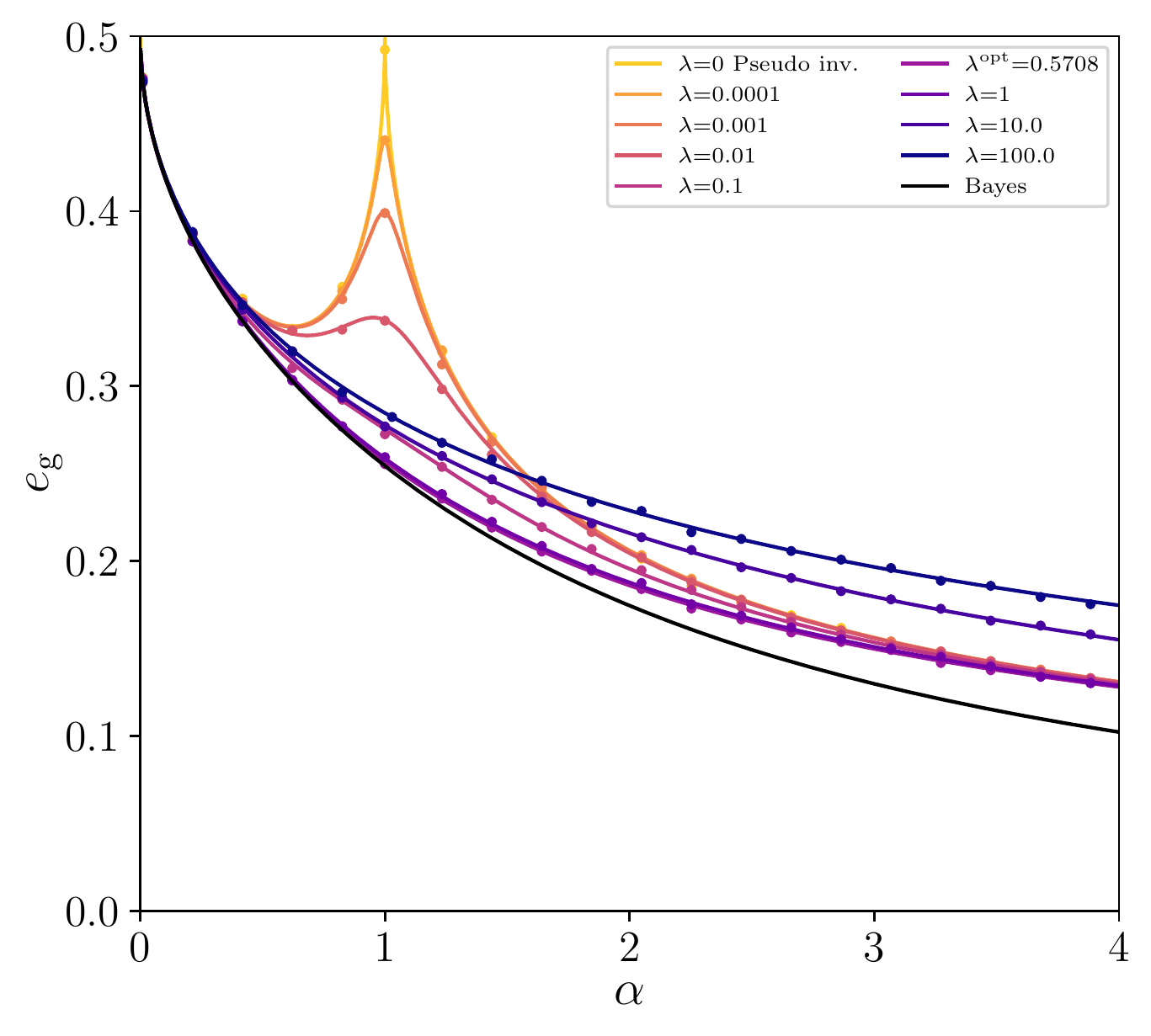}
		\hfill
		\includegraphics[scale=0.44]{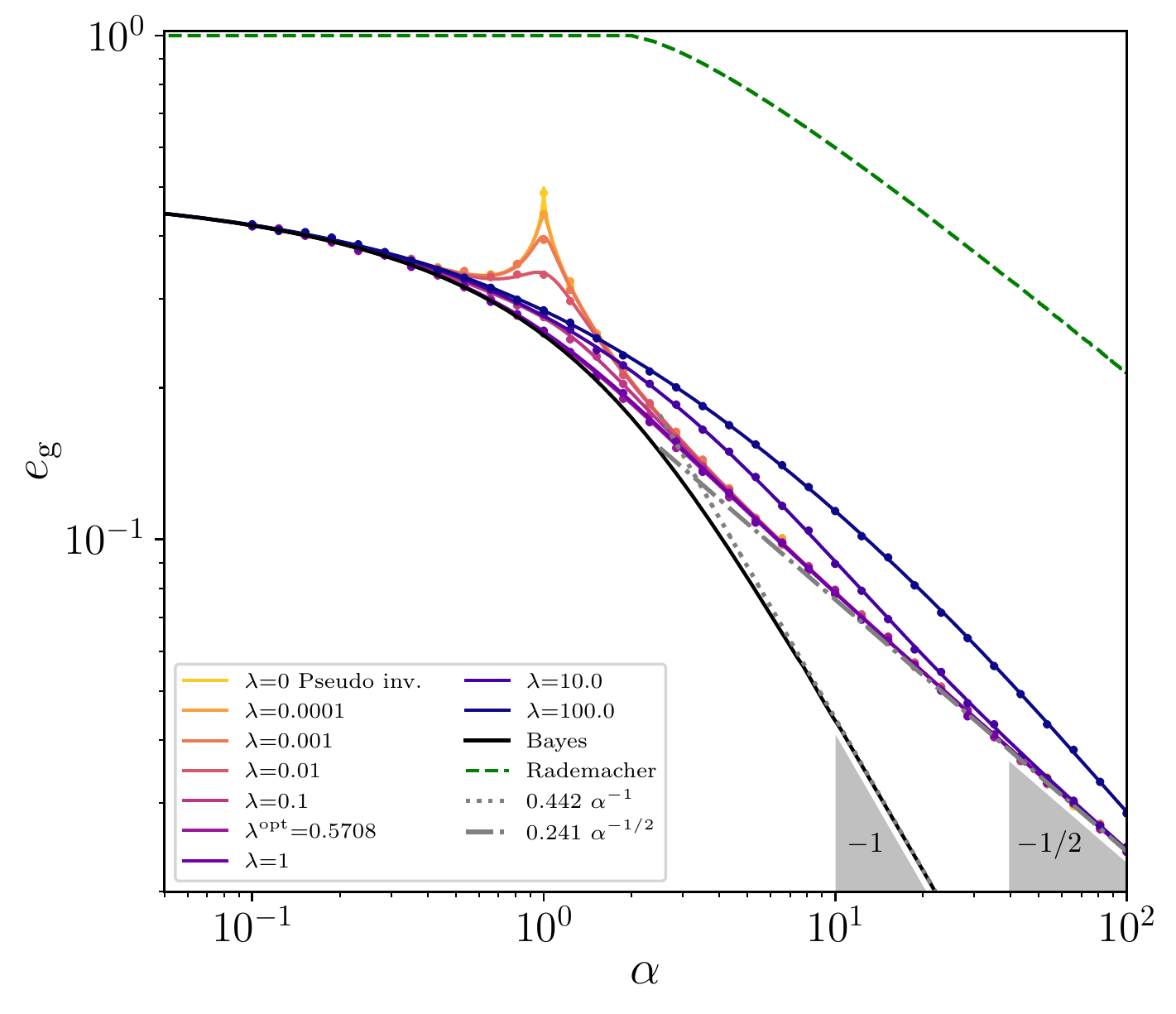}
		\caption{Ridge regression: square loss with $\rL_2$ regularization. Interpolation-peak, at $\alpha=1$, is maximal  for the pseudo-inverse estimator $\lambda=0$ (yellow line) that reaches $e_{\rm g}=0.5$. 
		}
		\label{fig:gen_error_ridge}	
    \end{subfigure}
    \vskip\baselineskip
    \begin{subfigure}[b]{\textwidth}
		\centering
	\includegraphics[scale=0.44]{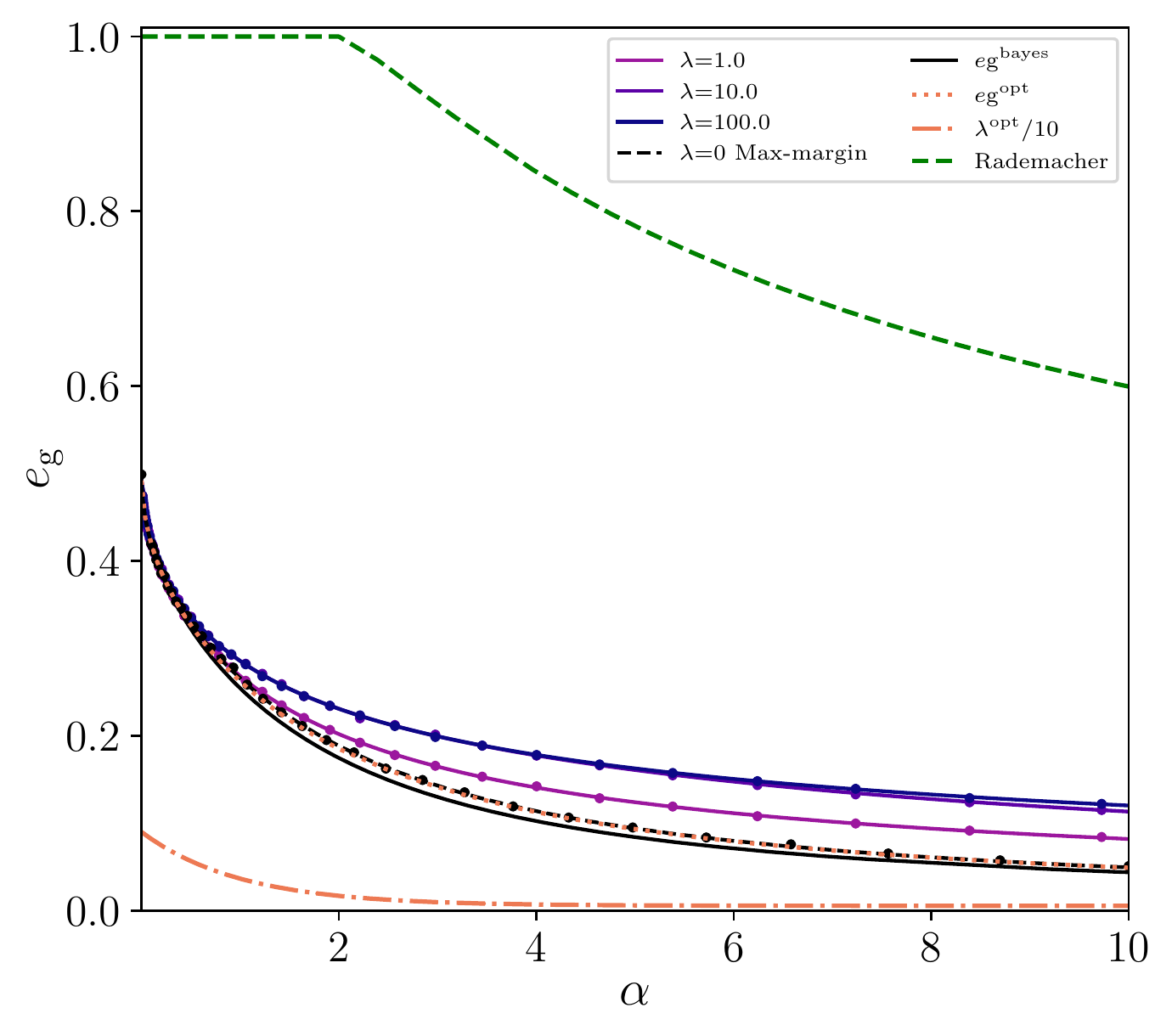}
	\hfill
	\includegraphics[scale=0.44]{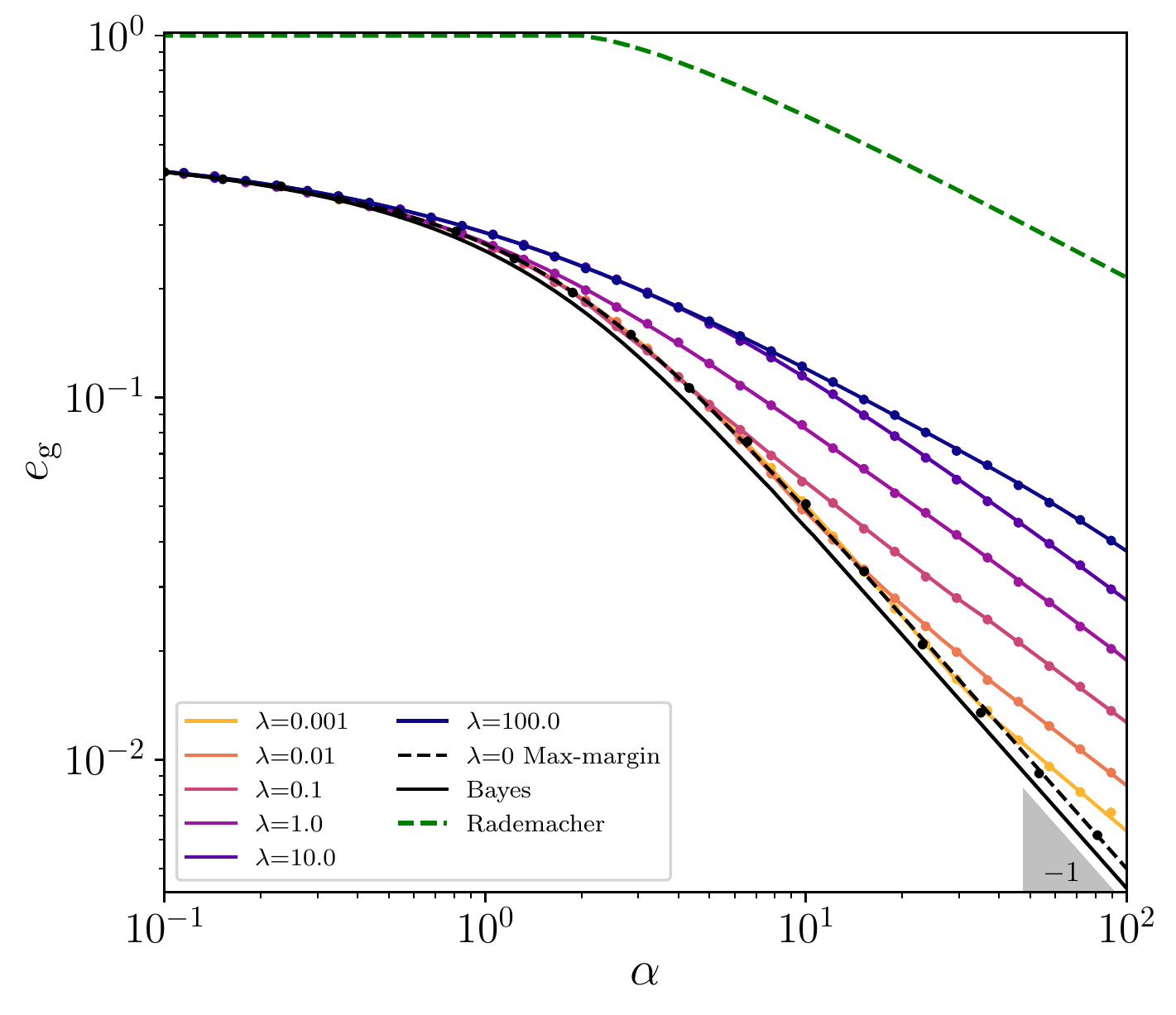}
		\caption{Hinge regression: hinge loss with $\rL_2$ regularization. For clarity the rescaled value of $\lambda^{\rm opt}/10$ (dotted-dashed orange) is shown as well as its generalization error $e_{\rm g}^{\rm opt}$ (dotted orange) that is slightly below and almost indistinguishable of the max-margin performances (dashed black).}
		\label{fig:gen_error_hinge}	
	\end{subfigure}
\end{figure}
\begin{figure}[p]
\ContinuedFloat
    \begin{subfigure}[b]{\textwidth}
		\centering
		\includegraphics[width=0.44\linewidth]{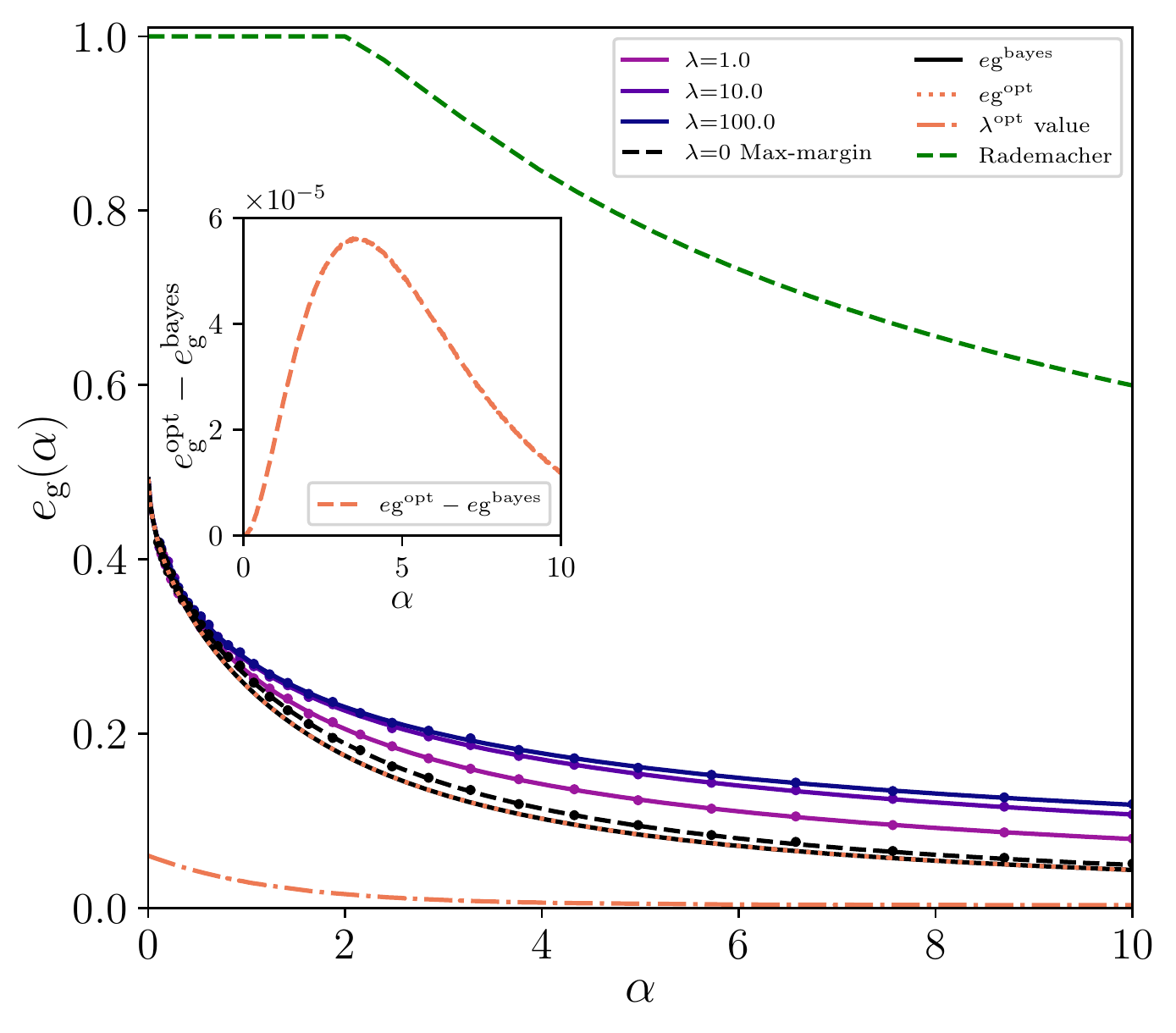}
		\hfill
		\includegraphics[scale=0.44]{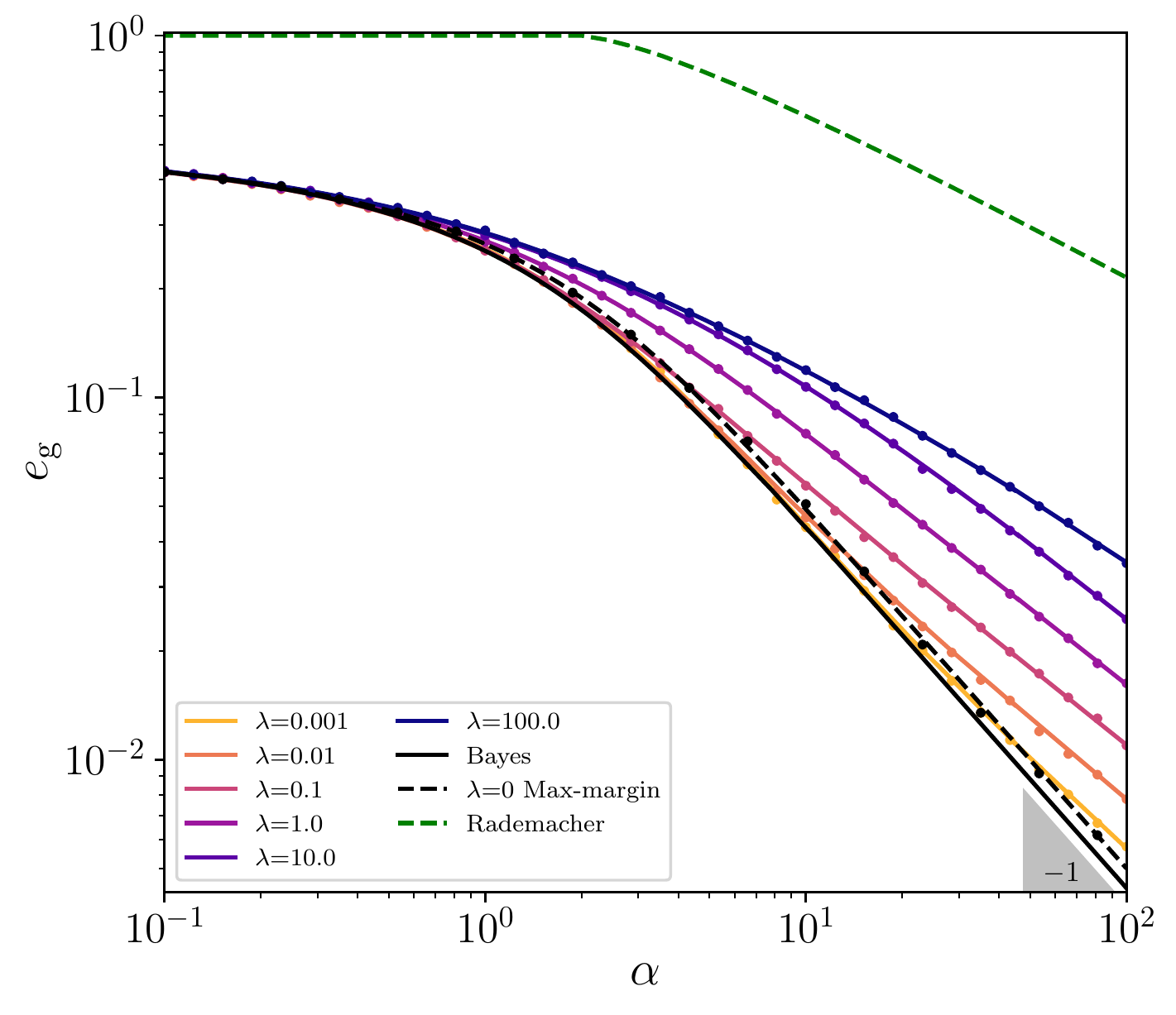}
	 \caption{Logistic regression: logistic loss with $\rL_2$ regularization - The value of $\lambda^{\rm opt}$ (dotted-dashed orange) is shown as well as its generalization error $e_{\rm g}^{\rm opt}$ (dotted orange). Visually indistinguishable from the Bayes-optimal line, their difference $e_{\rm g}^{\rm opt}-e_{\rm g}^{\rm bayes}$ is shown as an inset (dashed orange).}
	 \label{fig:gen_error_logistic}	
	 \end{subfigure}
     \caption{Asymptotic generalization error for $\rL_2$ regularization ($\ndim \to \infty$) as a function of $\alpha$ for different regularizations strengths $\lambda$, compared to numerical simulation (points) of ridge regression for $\ndim=10^{3}$ and averaged over $n_s=20$ samples. Numerics has been performed with the default methods \textit{Ridge}, \textit{LinearSVC},  \textit{LogisticRegression} of \texttt{scikit-learn} package \cite{scikit-learn}. Bayes optimal performances are shown with a black line and goes as $\Theta\(\alpha^{-1}\)$, while the Rademacher complexity \cite{Abbara2019} (dashed green) decreases as $\Theta\(\alpha^{-1/2}\)$. Both hinge and logistic converge to max-margin estimator (limit $\lambda=0$) which is shown in dashed black and deceases as $\Theta(\alpha^{-1})$, while Ridge decreases as $\Theta(\alpha^{-1/2})$.
     }
     \label{fig:gen_error_global}
\end{figure}

\paragraph{Generalization rates at large $\alpha$---} Finally, we turn to the very instructive behavior at large values of $\alpha$ when a large amount of data is available. First, we notice that the Bayes-optimal generalization error, whose large $\alpha$ analysis is performed in  SM.~\ref{appendix:applications:bayes}, decreases as $e_{\rm g}^{\rm bayes} \underset{\alpha \to \infty}{\sim} 0.4417 \alpha^{-1}$. Compared to this optimal value, ridge regression gives poor performances in this regime. For any value of the regularization $\lambda$ --- and in particular for both the pseudo-inverse case at $\lambda=0$ and the optimal estimator $\lambda^{\rm opt}$ --- its generalization performances decrease much slower than the Bayes rate, and goes only as $e_{\rm g}^{\rm ridge} \!\underset{\alpha \to \infty}{\sim} \!  0.2405 \alpha^{-1/2}$ (see SM.~\ref{appendix:applications:ridge} for the derivation).
Hinge and logistic regressions present a radically different, and more favorable, behavior. Fig.~\ref{fig:gen_error_hinge}-\ref{fig:gen_error_logistic} show that keeping $\lambda$ finite when $\alpha$ goes to $\infty$, does not yield the Bayes-optimal rates. However the max-margin solution (that corresponds to the $\lambda \to 0$ limit of these estimators) gives extremely good performances $e_{\rm g}^{\rm logistic,hinge} \underset{\lambda \to 0}{\sim} e_{\rm g}^{\rm max-margin} \!\underset{\alpha \to \infty}{\sim} \! 0.500 \alpha^{-1}$ see derivation in SM.~\ref{appendix:applications:hinge}). 
This is the same rate as the Bayes one, only that the constant is slightly higher.
However, we do not know whether there is a general criteria that would distinguish when the decay is $\Theta (\alpha^{-1})$ or $\Theta (\alpha^{-1/2})$. Providing such a generic criteria is definitely a line of research we would like to investigate in the future.
{\color{black}
Moreover, let us point out the work \cite{sridharan2009fast} that discusses fast convergence rates $\Theta(\nsamples^{-1})$ for the hinge loss, whose analysis for only very large $\alpha$ and Lipshitz functions does not directly apply to our setting.
}

\paragraph{Comparison with VC and Rademacher statistical bounds---} Given the fact that both the max-margin estimator and the optimized logistic achieve optimal generalization rates going as $\Theta\(\alpha^{-1}\)$, it is of interest to compare those rates to the prediction of statistical learning theory bounds. Statistical learning analysis (see e.g.~\cite{vapnik2006estimation,Bartlett98,shalev2014understanding}) relies to a large extent on the \emph{Vapnik-Chervonenkis} dimension (VC) analysis and on the so-called \emph{Rademacher complexity}. The uniform convergence result states that if the Rademacher complexity or the Vapnik-Chervonenkis dimension $d_{\rm VC}$ is finite, then for a large enough number of samples the generalization gap will vanish uniformly over all possible values of parameters. Informally, uniform convergence tells us that with high probability, for any value of the weights $\vec{w}$, the generalization gap satisfies $ {\mR}_{\rm population}(\vec{w}) - {\cal R}_{\rm empirical}^{\nsamples} (\vec{w}) = \Theta\(\sqrt{d_{\rm VC} / \nsamples}\)$ where $d_{\rm VC} = \ndim-1$ for our GLM hypothesis class. Therefore, given that the empirical risk can go to \emph{zero} (since our data are separable), this provides a generalization error upper-bound $ e_{\rm g} \!\leq\!  \Theta(\alpha^{-1/2})$.
This is much worse that what we observe in practice, where we reach the Bayes rate $e_{\rm g} = \Theta(\alpha^{-1})$. Tighter bounds can be obtained using the Rademacher complexity, and this was studied recently (using the aforementioned \emph{replica method}) in \cite{Abbara2019} for the very same problem. To bring to light interesting conclusions, we reproduced their results and plotted the Rademacher complexity generalization bound in Fig.\ref{fig:gen_error_global} (dashed-green) that decreases as $\Theta\(\alpha^{-1/2}\)$ for the binary classification task eq.~\eqref{main:teacher_sign}.

One may wonder if this could be somehow improved. Another statistical-physics heuristic computation, however, suggests that, unfortunately, uniform bound are plagued to a slow rate $\Theta\(\alpha^{-1/2}\)$. Indeed, the authors of \cite{engel1993statistical} showed with a replica method-style computation that \emph{there exists} some set of weights, in the binary classification task eq.~\eqref{main:teacher_sign}, that leads to $\Theta\(\alpha^{-1/2}\)$ rates: the uniform bound is thus tight. The gap observed between the uniform bound and the almost Bayes-optimal results observed in practice in this case is therefore not a paradox, but an illustration that the price to pay for uniform convergence is the inability to describe the optimal rates one can sometimes get in practice. Therefore, we believe, that the fact this phenomena can be observed in a such simple problem sheds an interesting light on the current debate in understanding generalization in deep learning \cite{zhang2016understanding}.

Remarking our synthetic dataset is linearly separable, we may try to take this fact into consideration to improve the generalization rate. In particular, it can be done using the max-margin  based generalization error for separable data:
\begin{theorem}[Hard-margin generalization bound \cite{vapnik2006estimation,Bartlett98,shalev2014understanding}]
	Given $S=\{\vec{x}_1,\cdots, \vec{x}_\nsamples\}$ such that $\forall \mu \in [1:\nsamples], \|\vec{x}_\mu\| \leq r$. Let $\hat{\vec{w}}$ the hard-margin SVM estimator on $S$ drawn with distribution $D$. With probability $1-\delta$, the generalization error is bounded by
	\begin{align}
		e_{\rm g}(\alpha) \underset{\alpha \to \infty}{\leq} \(4 r \|\hat{\vec{w}}\|  + \sqrt{\log\(4/\delta\) \log_2 \|\hat{\vec{w}}\|} \)/\sqrt{\nsamples}\,.
	\end{align}
\end{theorem}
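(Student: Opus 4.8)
The plan is to establish this as a standard margin-based Rademacher generalization bound, combined with a peeling (structural-risk-minimization) argument to accommodate the data-dependent norm $\|\hat{\vec{w}}\|$; the i.i.d.\ sampling of $S$ from $D$ is used throughout for the concentration step.

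First, I would invoke the defining property of the hard-margin estimator: $\hat{\vec{w}}$ satisfies $y_\mu \langle\hat{\vec{w}}, \vec{x}_\mu\rangle \ge 1$ for every $\mu\in[1:\nsamples]$, so the normalised direction $\hat{\vec{w}}/\|\hat{\vec{w}}\|$ separates all training points with geometric margin at least $1/\|\hat{\vec{w}}\|$. I would then bound the $0/1$ test error $e_{\rm g}$ from above by the population expectation of the $1$-Lipschitz ramp (margin) loss at margin $1$. Because $\hat{\vec{w}}$ separates $S$ with margin $1$, the corresponding empirical ramp loss vanishes, so only the generalization gap remains to be controlled.

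Second, for a radius $B$ fixed independently of the data, I would bound that gap uniformly over the linear class $\mathcal{F}_B = \{\vec{x}\mapsto\langle\vec{w},\vec{x}\rangle : \|\vec{w}\|\le B\}$. The empirical Rademacher complexity of $\mathcal{F}_B$ on inputs obeying $\|\vec{x}_\mu\|\le r$ is at most $rB/\sqrt{\nsamples}$ (Cauchy--Schwarz followed by Jensen's inequality), and Talagrand's contraction lemma carries this estimate through the Lipschitz ramp loss. The usual symmetrization and bounded-difference (McDiarmid) concentration then yield, with probability $\ge 1-\delta$, a bound $e_{\rm g}(\vec{w}) \le \tfrac{2rB}{\sqrt{\nsamples}} + O\!\left(\sqrt{\log(1/\delta)/\nsamples}\right)$ valid simultaneously for all $\vec{w}$ with $\|\vec{w}\|\le B$ and zero empirical margin loss.

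The hard part will be the final step, since $B$ must be chosen before observing $S$ whereas $\|\hat{\vec{w}}\|$ is itself learned, so the fixed-$B$ bound cannot be instantiated at $B=\|\hat{\vec{w}}\|$ directly. I would circumvent this by peeling over the dyadic grid $B_i = 2^i$, applying the fixed-$B$ bound at scale $B_i$ with confidence $\delta_i = \delta/2^{i+1}$; a union bound then makes all scale-wise statements hold simultaneously with probability $\ge 1-\delta$. Choosing the scale $i=\lceil\log_2\|\hat{\vec{w}}\|\rceil$ gives $B_i\le 2\|\hat{\vec{w}}\|$, which promotes the leading term to $4r\|\hat{\vec{w}}\|/\sqrt{\nsamples}$, while the per-scale penalty $\log(2/\delta_i)=\log(4/\delta)+(i+1)\log 2$ injects the $\log_2\|\hat{\vec{w}}\|$ dependence and assembles into a confidence contribution of the form $\sqrt{\log(4/\delta)\,\log_2\|\hat{\vec{w}}\|}/\sqrt{\nsamples}$. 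This peeling bookkeeping is the only genuinely delicate point: it is what legitimizes the data-dependent radius and simultaneously fixes both the constant $4$ and the logarithmic factor in the stated bound, everything else reducing to the textbook Rademacher-complexity computation for linear predictors.
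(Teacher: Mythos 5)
Your sketch is correct, but note that the paper does not prove this statement at all: it is imported verbatim from the cited references \cite{vapnik2006estimation,Bartlett98,shalev2014understanding}, and your argument (ramp-loss surrogate with zero empirical margin error, Rademacher complexity $rB/\sqrt{\nsamples}$ of the norm ball via Cauchy--Schwarz, Talagrand contraction, McDiarmid, then dyadic peeling with $\delta_i=\delta/2^{i+1}$ to legitimize the data-dependent radius) is exactly the standard textbook proof given there. So the proposal matches the paper's (cited) route rather than offering an alternative.
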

In our case one has $r^2 \simeq \frac{1}{\ndim} \EE_{\vec{x}} \| \vec{x} \|_2^2 = \frac{1}{\ndim}\sum_{i=1}^\ndim \EE x_i^2 = 1$. On the other hand, in the large size limit, the  norm of the estimator $\|\hat{\vec{w}}\|_2 / \sqrt{\ndim}\simeq \sqrt{q}$, that yields $e_{\rm g}(\alpha)\leq 4 \sqrt{\frac{q}{\alpha}}$. We now need to plug the values of the norm $q$ obtained by our max-margin solution to finally obtain the results. Unfortunately, this bound turns out to be even worse than the previous one. Indeed the norm of the hard margin estimator $q$ is found to grow with $\alpha$ in the solution of the fixed point equation, and therefore the margin decay rather fast, rendering the bound vacuous. For small values of $\alpha$, one finds that $q\sim \alpha$ that provides a vacuous constant generalisation bound $e_{\rm g}\leq \Theta\(1\)$, while for large $\alpha$, $q\sim \alpha^2$ that yields an even worse bound $e_{\rm g}\leq \Theta\(\sqrt{\alpha}\)$. Clearly, max-margin based bounds do not perform well in this high-dimensional example.

\section{Reaching Bayes optimality}
\label{sec:optimality}
Given the fact that logistic and hinge losses reach values extremely close to Bayes optimal generalization performances, one may wonder if by somehow slightly altering these losses one could actually reach the Bayesian values with a plug-in estimator obtained by ERM. 
This is what we achieve in this section, by constructing a (non-convex) optimization problem with a specially tuned loss and regularization from the knowledge of the teacher distributions $P_{\rm out^\star}$, $P_{\rm w^\star}$, whose solution yields Bayes-optimal generalization. 
Indeed, in the Bayes-optimal setting, we may directly use the Bayes-optimal AMP algorithm to achieve optimal performances as proven in \cite{Barbier5451}. Nevertheless, it seems to us interesting to point out that Bayes performances, which require in principle to compute an intractable high-dimensional posterior sampling, can be obtained instead by the easier, more common and practical ERM estimation.
Recent insights have shown that indeed one can sometime re-interpret Bayesian estimation as an optimization program in inverse
problems\cite{gribonval2011should,NIPS2013_4868,gribonval2018characterization,gribonval2019bayesian}. In
particular, \cite{Advani2016} showed explicitly, on the basis of the non-rigorous replica method of statistical mechanics, that some Bayes-optimal reconstruction problems could be turned into convex M-estimation. 

Matching ERM and Bayes-optimal generalization errors eqs.~\eqref{main:generalization_errors}-\eqref{main:generalization_error_bayes} with overlaps respectively solutions of eq.~\eqref{main:fixed_point_equations_replicas}-\eqref{main:fixed_point_equations_bayes} and assuming that $\mZ_{\w^\star}\(\gamma,\Lambda\)\equiv \EE_{w \sim P_{\w^\star}} \exp(- 1/2\Lambda w^2 + \gamma w)$ and $\mZ_{\out^\star}\(y,\omega,V\)$, defined in \eqref{main:proximal_replicas}, are log-concave in $\gamma$ and $\omega$, we define the optimal loss and regularizer $l^{\rm opt}$, $r^{\rm opt}$:
\begin{align}
	\begin{aligned}
		l^{\rm opt}\(y,z\) &=- \min_\omega \( \frac{(z-\omega)^2}{2 (\rho_{\w^\star}-q_\bayes)} + \log \mZ_{\out^\star} \(y,\omega,\rho_{\w^\star}-q_\bayes\) \)\,, \spacecase
		r^{\rm opt}\(w\) &= - \min_{\gamma} \( \frac{1}{2} \hat{q}_\bayes w^2 - \gamma w + \log \mZ_{\w^\star} \(\gamma, \hat{q}_\bayes\) \)\,, \text{ with } (q_\bayes,\hat{q}_\bayes) \text{ solution of eq.~\eqref{main:fixed_point_equations_bayes}}\,.  
		\label{main:opt_loss_reg}
	\end{aligned}
\end{align}
See SM.~\ref{appendix:optimal_loss_reg} for the derivation. Following these considerations, we provide the following theorem:
\begin{theorem}
	The result of empirical risk minimization eq.~(\ref{main:training_loss}) with $l^{\rm opt}$ and $r^{\rm opt}$ in eq.~\eqref{main:opt_loss_reg}, leads to Bayes optimal generalization error in the high-dimensional regime.
\end{theorem}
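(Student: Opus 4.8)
The plan is to reduce everything to a comparison of fixed points. By Proposition~\ref{main:thm:generalization_errors} the ERM test error is a monotone function of $\eta = m^2/(\rho_{\w^\star} q)$ alone, while by Theorem~\ref{main:thm:fixed_point_equations_bayes} the Bayes error depends only on $\eta_\bayes = q_\bayes/\rho_{\w^\star}$. It therefore suffices to prove that, with the loss and regularizer of \eqref{main:opt_loss_reg}, the ERM fixed point satisfies $m^2/q = q_\bayes$, i.e. $\eta=\eta_\bayes$. I would carry out the comparison in the replica/Gordon form of Corollary~\ref{main:corollary:equivalence_gordon_replicas_formulation_l2} (in its general-regularizer extension stated in the SM), because there the loss enters the self-consistent equations only through the channel function $f_{\out}(y,\omega,V)=V^{-1}(\mP_V[l(y,\cdot)](\omega)-\omega)$ and the regularizer only through its proximal map. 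The goal is then to show these equations collapse onto \eqref{main:fixed_point_equations_bayes}.

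The heart of the argument is two exact proximal identities that encode the ``plug-in'' design. First, taking the envelope scale equal to the Bayes effective noise $V=\rho_{\w^\star}-q_\bayes$, I would unfold the nested minimisation in \eqref{main:opt_loss_reg}: writing $h(\omega)=\log\mZ_{\out^\star}(y,\omega,V)$, the inner problem makes $l^{\rm opt}(y,\cdot)=-\mathrm{env}_V h$ the negative Moreau envelope of $h$, and a short Moreau-calculus computation on the stationarity condition gives $\mP_V[l^{\rm opt}(y,\cdot)](\omega)=\omega+V\,\partial_\omega\log\mZ_{\out^\star}(y,\omega,V)$, so that $f_{\out}[l^{\rm opt}]\equiv f_{\out^\star}$. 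Second, at the regularizer scale $1/\hat{q}_\bayes$ the quadratic $-\tfrac12\hat{q}_\bayes w^2$ built into $r^{\rm opt}$ exactly cancels the proximal penalty, leaving a pure Legendre transform of $\gamma\mapsto\log\mZ_{\w^\star}(\gamma,\hat{q}_\bayes)$; convex duality then yields $\mP_{1/\hat{q}_\bayes}[r^{\rm opt}](z)=\partial_\gamma\log\mZ_{\w^\star}(\hat{q}_\bayes z,\hat{q}_\bayes)$, which is precisely the Bayes prior denoiser.

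With $f_{\out}=f_{\out^\star}$ and the regularizer proximal equal to the Bayes denoiser, I would substitute both identities into the ERM fixed-point system and verify that the Bayes pair $(q_\bayes,\hat q_\bayes)$ closes the equations under the identification of the effective noise $\Sigma=\rho_{\w^\star}-q_\bayes$ and of the conjugate scale with $\hat q_\bayes$. Here the three ERM overlap parameters should collapse onto the two Bayes ones by the same mechanism that the Nishimori identities force $q_\bayes=m_\bayes$ in the Bayes-optimal problem; matching the $m$- and $q$-equations against the definition of $\hat q_\bayes$ then gives $m^2/q = q_\bayes$, hence $\eta=\eta_\bayes$, and Proposition~\ref{main:thm:generalization_errors} finishes the proof.

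The main obstacle is not this bookkeeping but the very applicability of the rigorous characterisation. The functions in \eqref{main:opt_loss_reg} are built from $-\log\mZ$ of the channel and of the prior, and are guaranteed to be convex, proper and lower-semicontinuous with well-defined proximal maps only under the stated log-concavity of $\mZ_{\out^\star}$ and $\mZ_{\w^\star}$; this hypothesis is exactly what licenses both the min--max/Legendre swaps used in the two proximal identities and the invocation of the CGMT-based Theorem~\ref{main:thm:gordon_fixed_points:classification}, which was proved for convex losses. I would therefore spend most care verifying that log-concavity genuinely renders $l^{\rm opt}$ and $r^{\rm opt}$ convex, and that the matched fixed point is the relevant minimiser of the Gordon potential \eqref{eq:pot_func} rather than a spurious stationary point. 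When log-concavity fails the construction is truly non-convex, the CGMT guarantee no longer applies, and only the heuristic replica derivation of \eqref{main:fixed_point_equations_replicas} survives.
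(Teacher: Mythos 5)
Your two proximal identities are exactly the content of the paper's construction (SM Sec.~\ref{appendix:optimal_loss_reg}): inverting the Moreau--Yosida envelope under log-concavity gives $f_{\out}^{\rm erm}(l^{\rm opt})=f_{\out^\star}$ and $f_{\w}^{\rm erm}(r^{\rm opt})=f_{\w^\star}$ at the Bayes scales $V=\rho_{\w^\star}-q_\bayes$, $\Lambda=\hat q_\bayes$. Where you diverge from the paper is in how this matching is turned into a statement about the ERM minimizer. You substitute the matched denoisers into the \emph{static} replica/Gordon fixed-point system and check that $(q_\bayes,\hat q_\bayes)$ closes it; the paper instead matches the denoisers \emph{inside the GAMP iteration}, so that ERM-AMP and Bayes-AMP are literally the same algorithm, then invokes (i) provable convergence of Bayes-AMP to the Bayes-optimal error and (ii) the fact that AMP fixed points are stationarity conditions of the ERM objective. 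The algorithmic route buys something essential that your route does not: it never needs the CGMT/Gordon characterization of the ERM fixed point.

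That is the genuine gap in your plan. Theorem~\ref{main:thm:gordon_fixed_points:classification} and Corollary~\ref{main:corollary:equivalence_gordon_replicas_formulation_l2} are proved only for a loss that is convex in $z$ and for $\ell_2$ regularization, and $l^{\rm opt}$, $r^{\rm opt}$ satisfy neither hypothesis in general: the paper explicitly calls the construction non-convex, and even under log-concavity of $\mZ_{\out^\star}$ the function $l^{\rm opt}=-\mM_V[\log\mZ_{\out^\star}]$ is an inf-convolution of a quadratic with a \emph{concave} function and need not be convex (the sign-teacher example of Fig.~\ref{fig:opt_loss_reg_sign_gaussian} is only quasi-convex). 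So the step "substitute into the ERM fixed-point system of Corollary~\ref{main:corollary:equivalence_gordon_replicas_formulation_l2}" is not licensed: for this loss only the heuristic replica equations are available, and your proof would then rest on an unproven characterization. Your closing remark shows you are aware of the tension, but your proposed fix (verify that log-concavity renders $l^{\rm opt}$ convex) fails, so the argument cannot be completed along this line without either extending CGMT beyond convexity or switching, as the paper does, to the AMP trajectory argument.
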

\begin{proof}
\vspace{-0.2cm}
  We present only the sketch of the proof here. 
First we note that the so called Bayes-optimal Generalized Approximate Message Passing (GAMP) algorithm \cite{Rangan2010}, recalled in SM.~\ref{appendix:optimal_loss_reg:amp}, with Bayes-optimal updates $f_{\rm out}^{\rm bayes}$ and $f_{\rm w}^{\rm bayes}$ in SM.~\ref{appendix:sec:definitions:update_bayes}
  is provably convergent and reaches Bayes-optimal performances (see \cite{Barbier2017b}). 
  Second, we remark that the GAMP algorithm is valid for ERM estimation
   with the corresponding updates $f_{\rm out}^{\rm erm}(l,r), f_{\rm w}^{\rm erm}(l,r)$, defined in  SM.~\ref{appendix:definitions:map_updates}, for a given loss $l$ and regularizer $r$. 
  To achieve Bayes-optimal performances, we design optimal loss $l^{\rm opt}$ and regularizer $r^{\rm opt}$ eq.~\ref{main:opt_loss_reg} such that at each time step the ERM denoisers match the Bayes-optimal ones: $f_{\rm out}^{\rm erm}(l^{\rm opt},r^{\rm opt}) = f_{\rm out}^{\rm bayes}$ and $f_{\rm w}^{\rm erm}(l^{\rm opt},r^{\rm opt}) = f_{\rm w}^{\rm bayes}$.
  In this context, AMP algorithm for ERM with loss and regularization given by (\ref{main:opt_loss_reg}) is exactly identical to the Bayes-optimal AMP. 
  This shows that AMP applied to the ERM problem corresponding to \eqref{main:opt_loss_reg} both converge to its fixed point and reach Bayes-optimal performances. The theorem finally follows by noting (see \cite{montanari2012graphical,gerbelot2020asymptotic}) that the AMP fixed point corresponds to the extremization conditions of the loss.
\end{proof}
\begin{figure}[!htb]
	\vspace{-0.3cm}
 	\centering
        \includegraphics[scale=0.38]{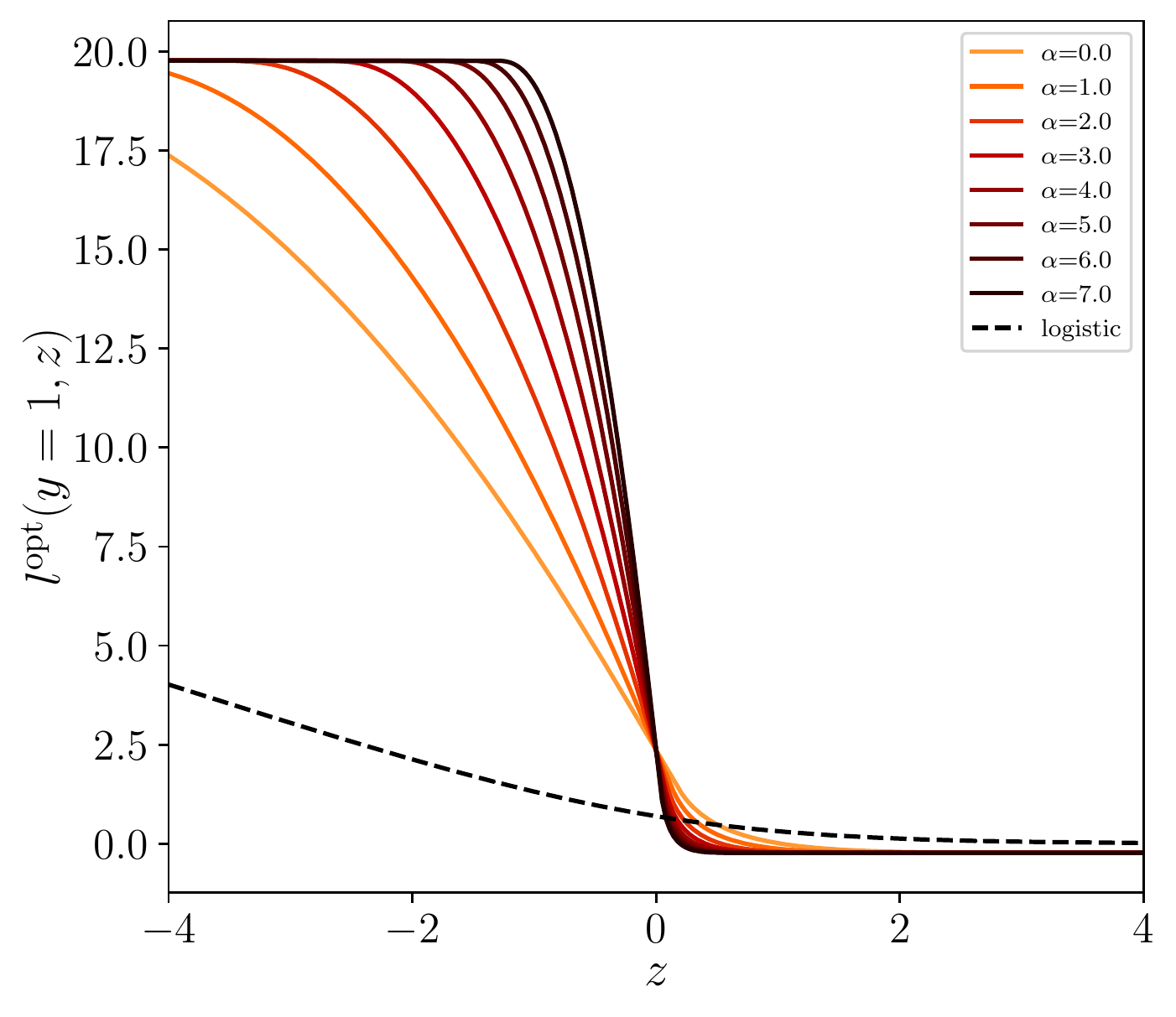}
        \hspace{2cm}
        \includegraphics[scale=0.38]{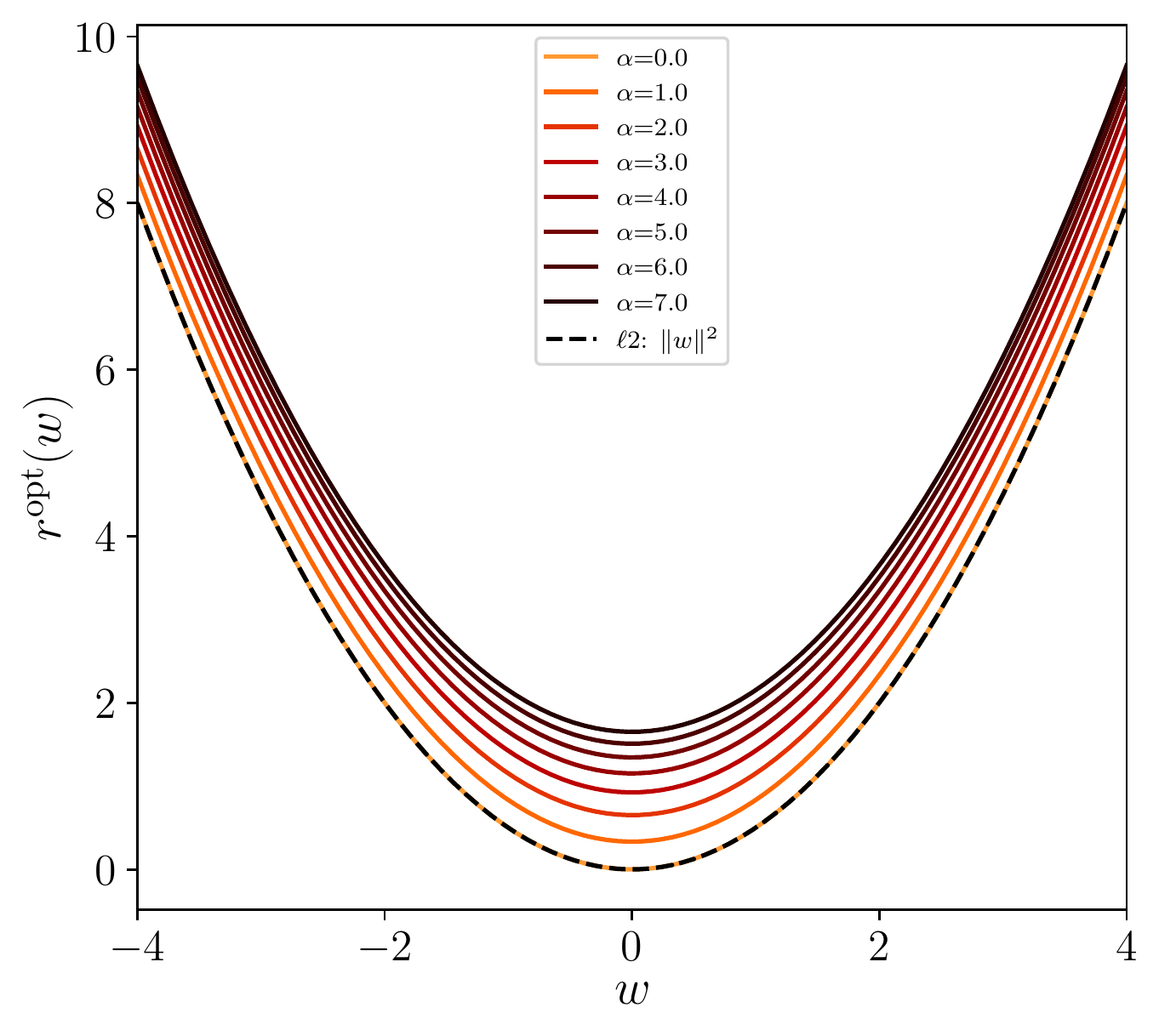}
        \vspace{-0.2cm}
        \caption{Optimal loss $l^{\rm opt}\(y=1,z\)$ and regularizer $r^{\rm opt}\(w\)$ for model eq.~\eqref{main:teacher_sign} as a function of $\alpha$.
     			}
        \label{fig:opt_loss_reg_sign_gaussian}
 \end{figure}
The optimal loss and regularizer $\lambda^{\rm opt}$ and $r^{\rm opt}$ for the model \eqref{main:teacher_sign} are illustrated in Fig.~\eqref{fig:opt_loss_reg_sign_gaussian}. And numerical evidences of ERM with (\ref{main:opt_loss_reg}) compared to $\rL_2$ logistic regression and Bayes performances are presented in SM.~\ref{appendix:optimal_loss_reg}.

\newpage
\section*{Acknowledgments}
This work is supported by the ERC under the
European Unions Horizon 2020 Research and Innovation
Program 714608-SMiLe, by the French Agence Nationale
de la Recherche under grant ANR-17-CE23-0023-01 PAIL
and ANR-19-P3IA-0001 PRAIRIE, and by the US National Science Foundation under grants CCF-1718698 and
CCF-1910410. 
We would also like to thank the Kavli Institute for Theoretical Physics (KITP)  for welcoming us during part of this research, with the support of the National Science Foundation under Grant No. NSF PHY-1748958.
We also acknowledge support from the chaire CFM-ENS “Science des données”. 
Part of this work was done when Yue M. Lu was visiting Ecole Normale as a CFM-ENS “Laplace” invited researcher.

\bibliographystyle{unsrt}
\bibliography{assets/refs}

\newpage
\appendix
\renewcommand{\partname}{}
\renewcommand{\thesection}{\Roman{section}}
\part{Supplementary material}

In this supplementary material (SM), we provide the proofs and computation details leading to the results presented in the main manuscript. 
In Sec.~\ref{appendix:definitions}, we first recall the definition of the statistical model used in Sec.~\ref{sec:introduction} and we give proper definitions of the denoising distributions involved in the analysis of the Bayes-optimal and Empirical Risk Minimization (ERM) estimation. In particular, we provide the analytical expressions of the denoising functions used in Sec.~\ref{sec:applications} to analyze ridge, hinge and logistic regressions. 
In Sec.~\ref{appendix:generalization_error}, we detail the computation of the binary classification generalization error leading to the expressions in Proposition.~\ref{main:thm:generalization_errors} and Thm.~\ref{main:thm:fixed_point_equations_bayes} respectively for ERM and Bayes-optimal estimation.
In Sec.~\ref{appendix:proof}, we present the proofs of the central theorems stated in Sec.~\ref{sec:fixed_point}. In particular, we derive the Gordon-based proof of the Thm.~\ref{main:thm:gordon_fixed_points:classification} in the more general \emph{regression} (real-valued) version and provide as well the proof of Corollary.~\ref{main:corollary:equivalence_gordon_replicas_formulation_l2} which establishes the equivalence between the set of fixed-point equations of the Gordon's proof in the binary classification case and the one resulting from the heuristic replica computation.
The corresponding statistical physics framework used to analyze Bayes and ERM statistical estimations and the replica computation leading to expressions in Corollary.~\ref{main:corollary:equivalence_gordon_replicas_formulation_l2} are detailed In Sec.~\ref{appendix:replicas}.
The section \ref{appendix:applications} is devoted to provide additional technical details on the results with $\rL_2$ regularization addressed in Sec.~\ref{sec:applications}. In particular, we present the large $\alpha$ expansions of the generalization error for the \emph{Bayes-optimal}, \emph{ridge}, \emph{pseudo-inverse} and \emph{max-margin} estimators, and we investigate the performances of logistic regression on non-linearly separable data.
Finally in Sec.~\ref{appendix:optimal_loss_reg}, we show the derivation of the fine-tuned loss and regularizer provably leading to Bayes-optimal performances, as explained and advocated in Sec.~\ref{sec:optimality}, and we show some numerical evidences that ERM achieves indeed Bayes-optimal error in Fig.~\ref{fig:opt_loss_reg_sign_gaussian_numerics}.

\newpage
\setcounter{tocdepth}{4}
\parttoc 
\newpage


\section{Definitions and notations}
\label{appendix:definitions}

\subsection{Statistical model}

We recall the supervised machine learning task considered in the main manuscript eq.~\eqref{main:teacher_model}, whose dataset is generated by a single layer neural network, often named a \emph{teacher}, that belongs to the Generalized Linear Model (GLM) class. Therefore we assume the $\nsamples$ samples are drawn according to
\begin{align}
	\vec{y} = \varphi_{\out}^\star\(\frac{1}{\sqrt{\ndim}} \mat{X} \vec{w}^\star \) \Leftrightarrow \vec{y} \sim P_{\out}^\star \(.\) \,,
\label{appendix:teacher_model}	
\end{align}
where $\vec{w}^\star \in \bbR^\ndim$ denotes the ground truth vector drawn from a probability distribution $P_{\w^\star}$ with second moment $\rho_{\w^\star}\equiv \frac{1}{\ndim}\EE\[ \|\vec{w}^\star\|_2^2 \]$ and $\varphi_{\out}^\star$ represents a deterministic or stochastic activation function equivalently associated to a distribution $P_{\out}^\star$. The input data matrix $\mat{X}=\(\vec{x}_\mu\)_{\mu=1}^\nsamples \in \bbR^{\nsamples \times \ndim}$ contains \iid Gaussian vectors, i.e $\forall \mu \in [1:\nsamples],~\vec{x}_\mu \sim \mN\(\vec{0},\mat{I}_\ndim\)$. 

\subsection{Bayes-optimal and ERM estimation}
Inferring the above statistical model from observations $\{\vec{y}, \mat{X}\}$ can be tackled in several ways. In particular, Bayesian inference provides a generic framework for statistical estimation based on the high-dimensional, often intractable, posterior distribution
\begin{align}
	\bbP\(\vec{w} | \vec{y}, \mat{X}\)  &= \frac{\bbP\(\vec{y} | \vec{w}, \mat{X} \) \bbP\(\vec{w}\) }{\bbP\(\vec{y},\mat{X}\)}   \,.
	\label{appendix:posterior_distribution}
\end{align}
Estimating the average of the above posterior distribution in the case we have access to the ground truth prior distributions $\bbP\(\vec{y} | \vec{w}, \mat{X} \) = P_{\out^\star}\(\vec{y} | \vec{z}\)$ with $\vec{z}\equiv\frac{1}{\sqrt{\ndim}}\mat{X}\vec{w}$ and $\bbP\(\vec{w}\) = P_{\w^\star}\(\vec{w}\)$, refers to Bayes-optimal estimation and leads to the corresponding Minimal Mean-Squared Error (MMSE) estimator $\hat{\vec{w}}_{\rm mmse} = \EE_{\bbP\(\vec{w} | \vec{y}, \mat{X}\)}\[\vec{w}\]$. It has been rigorously analyzed in details in \cite{Barbier5451} for the whole GLM class eq.~\eqref{appendix:teacher_model}.
Another celebrated approach and widely used in practice is the Empirical Risk Minimization (ERM) that minimizes instead a regularized loss: $\hat{\vec{w}}_{\rm erm} =  \argmin_{\vec{w}} \[ \mL\(\vec{w}; \vec{y}, \mat{X}\) \]$ with
\begin{align}
	 \mL\(\vec{w}; \vec{y}, \mat{X}\) = \sum_{\mu=1}^\nsamples l\(\vec{w}; y_\mu, \vec{x}_\mu \) +   r\(\vec{w}\) \,.
	\label{appendix:loss}
\end{align}
Interestingly analyzing the ERM estimation may be included in the above Bayesian framework. Indeed exponentiating eq.~\eqref{appendix:loss}, we see that minimizing the loss $\mL$ is equivalent to maximize the posterior distribution $\bbP\(\vec{w} | \vec{y}, \mat{X}\) = e^{- \mL\(\vec{w}; \vec{y}, \mat{X}\)}$ if we choose carefully the prior distributions as functions of the regularizer $r$ and the loss $l$:
\begin{align}
\begin{aligned}
	 - \log \bbP\(\vec{y} | \vec{w}, \mat{X} \) &= l\(\vec{w}; \vec{y}, \mat{X} \) \,, &&-\log \bbP\(\vec{w}\) &= r\(\vec{w}\)\,.
\end{aligned}
\label{appendix:ERM_map_mapping}
\end{align}
Computing the maximum of the posterior $\bbP\(\vec{y} | \vec{w}, \mat{X} \)$ refers instead to the so-called Maximum A Posteriori (MAP) estimator, and therefore analyzing the empirical minimization of \eqref{appendix:loss} is equivalent to obtain the performance of the MAP estimator with prior distributions given by \eqref{appendix:ERM_map_mapping}. Thus both the study of ERM (MAP) and Bayes-optimal (MMSE) estimations are simply reduced to the analysis of the posterior eq.~\eqref{appendix:posterior_distribution}. 

\subsection{Denoising distributions and updates}
\label{appendix:sec:definitions:update_generic}
Analyzing the posterior distribution eq.~\eqref{appendix:posterior_distribution} in the high-dimensional regime \cite{Barbier5451} will boil down to introducing the scalar denoising distributions $Q_{\w}, Q_{\out}$ and their respective normalizations $\mZ_{\w}$, $\mZ_{\out}$
\begin{align}
	\begin{aligned}
		Q_{\w}(w; \gamma,\Lambda) &\equiv \displaystyle \frac{P_{\w}(w) }{\mZ_{\w} (\gamma,\Lambda)} e^{ - \frac{1}{2} \Lambda w^2  + \gamma w  }\,, && Q_{\out} (z;y, \omega,V) \equiv  \displaystyle\frac{P_{\out}\(y|z\) }{\mZ_{\out}(y,\omega,V)}  \frac{e^{ -\frac{1}{2}V^{-1}  \(z - \omega\)^2  }}{\sqrt{2\pi V}}\,,  \spacecase
		\mZ_{\w} (\gamma,\Lambda) &\equiv \EE_{w \sim P_{\w}} \[ e^{ - \frac{1}{2} \Lambda w^2  + \gamma w  } \]\,, && \mZ_{\out}(y,\omega,V) \equiv \EE_{z \sim \mN(0,1)}\[ P_{\out}\(y| \sqrt{V} z + \omega\) \]  \,.
	\end{aligned}
	\label{appendix:definitions:update_generic}
\end{align}
We define as well the denoising functions, that play a central role in Bayesian inference. Note in particular that they correspond to the \emph{updates} of the Approximate Message Passing algorithm in \cite{Rangan2010} that we recalled in Sec.~\ref{appendix:optimal_loss_reg:amp}. They are defined as the derivatives of $\log \mZ_{\w}$ and $\log \mZ_{\out}$, namely
\begin{align}
\begin{aligned}
	f_{\w}(\gamma ,\Lambda) &\equiv  \partial_\gamma \log\(\mZ_{\w}\) = \EE_{Q_{\w}} \[ w \] \andcase 
	\partial_\gamma f_{\w} (\gamma ,\Lambda) \equiv  \EE_{Q_{\w}} \[w^2 \] - f_{\w}^2 \spacecase	
	f_{\out} (y,\omega,V) &\equiv \partial_\omega \log \( \mZ_{\out} \) =V^{-1}\EE_{Q_{\out}} \[ z - \omega\] \andcase
	\partial_{\omega} f_{\out} (y,\omega,V) \equiv \displaystyle \frac{\partial f_{\out}(y,\omega,V)}{\partial \omega}\,.
	\label{appendix:update_functions_generic}
\end{aligned}
\end{align}

\subsubsection{Bayes-optimal - MMSE denoising functions}
\label{appendix:sec:definitions:update_bayes}
In Bayes-optimal estimation, the ground truth prior and channel distributions $P_{\w^\star}(w)$ and $P_{\out^\star}\(y|z\)$ of the \emph{teacher} eq.~\eqref{main:teacher_model} are known. Hence, replacing $P_{\w}$ and $P_{\out}$ in \eqref{appendix:definitions:update_generic}, we obtain the Bayes-optimal scalar denoising distributions in terms of which the Bayes-optimal free entropy eq.~\eqref{appendix:free_entropy_bayes} is written
\begin{align}
	\begin{aligned}
		Q_{\w^\star}(w; \gamma,\Lambda) &\equiv \displaystyle \frac{P_{\w^\star}(w)}{\mZ_{\w^\star} (\gamma,\Lambda)}  e^{ - \frac{1}{2} \Lambda w^2  + \gamma w  } \,, && Q_{\out^\star} (z;y, \omega,V) \equiv  \displaystyle\frac{P_{\out^\star}\(y|z\)}{\mZ_{\out^\star}(y,\omega,V)}   \frac{e^{ -\frac{1}{2}V^{-1}  \(z - \omega\)^2  }}{\sqrt{2\pi V}} \,.,
	\end{aligned}
	\label{appendix:definitions:update_bayes}
\end{align}
and the denoising updates are therefore given by eq.~\eqref{appendix:update_functions_generic} with the corresponding distributions
\begin{align}
	& f_{\w^\star}(\gamma ,\Lambda) \equiv  \partial_\gamma \log\mZ_{\w^\star} (\gamma,\Lambda) \,, && f_{\out^\star} (y,\omega,V) \equiv \partial_\omega \log  \mZ_{\out^\star}(y,\omega,V) \,.
	\label{appendix:definitions:update_functions_bayes}
\end{align}

\subsubsection{ERM - MAP denoising functions}
\label{appendix:definitions:map_updates}
Before defining similar denoising functions to analyze the MAP for ERM estimation, we first recall the definition of the Moreau-Yosida regularization.
\paragraph{Moreau-Yosida regularization and proximal}
Let $\Sigma>0$, $f(,z)$ a convex function in $z$. Defining the regularized functional
\begin{align}
	\mL_{\Sigma}[f(,.)](z;x) &= f(,z) + \frac{1}{2\Sigma}\(z - x\)^2\,,
	\label{appendix:definitions:proximal_moreau:functionnal}
\end{align}
the Moreau-Yosida regularization $\mM_\Sigma$ and the proximal map $\mP_\Sigma$ are defined by
\begin{align}
	\mP_{\Sigma}[f(,.)](x) &= \argmin_z \mL_{\Sigma}[f(,.)](z;x) = \argmin_z \[f(,z) + \frac{1}{2\Sigma}\(z - x\)^2\] \,, \label{appendix:definitions:proximal}\\
	\mM_{\Sigma}[f(,.)](x) &= \min_z \mL_{\Sigma}[f(,.)](z;x) = \min_z \[f(,z) + \frac{1}{2\Sigma}\(z - x\)^2\] \,, 
	\label{appendix:definitions:moreau}
\end{align}
where $(,z)$ denotes all the arguments of the function $f$, where $z$ plays a central role. The MAP denoising functions for any convex loss $l(,.)$ and convex separable regularizer $r(.)$ can be written in terms of the Moreau-Yosida regularization or the proximal map as follows
\begin{align}
\begin{aligned}
	f_{\rm w}^{{\rm map}, r}(\gamma, \Lambda) &\equiv  \mP_{\Lambda^{-1}}\[ r(.) \](\Lambda^{-1}\gamma) = \Lambda^{-1}\gamma - \Lambda^{-1} \partial_{\Lambda^{-1}\gamma}\mM_{\Lambda^{-1}}\[ r(.) \] (\Lambda^{-1}\gamma)\,, \spacecase
	f_{\out}^{{\rm map}, l} (y, \omega, V) &\equiv  - \partial_{\omega} \mM_{V}[l(y,.)](\omega) = V^{-1} \( \mP_{V}[l(y,.)](\omega) - \omega\)
	\label{appendix:definitions:update_functions_map}\,.
\end{aligned}
\end{align}
The above updates can be considered as definitions, but it is instructive to derive them from the generic definition of the denoising distributions eq.~\eqref{appendix:update_functions_generic} if we maximize the posterior distribution. This is done by taking, in a physics language, a \emph{zero temperature} limit and we present it in details in the next paragraph.

\paragraph{Derivation of the MAP updates}
\label{appendix:definitions:map_updates:derivation}
To have access to the maximum of the generic distributions eq.~\eqref{appendix:definitions:update_generic}, we introduce a \emph{fictive} noise/temperature $\Delta$ or inverse temperature $\beta$, $\Delta = \frac{1}{\beta}$. In particular for Bayes-optimal estimation this temperature is finite and fixed to $\Delta=\beta=1$. 
Indeed with the mapping eq.~\eqref{appendix:ERM_map_mapping}, minimizing the loss function $\mL$ \eqref{appendix:loss} is equivalent to maximize the posterior distribution. Therefore it can be done by taking the \emph{zero noise/temperature} limit $\Delta \to 0$ of the channel and prior denoising distributions $Q_{\out}$ and $Q_{\w}$. It is the purpose of the following paragraphs where we present the derivation leading to the result \eqref{appendix:definitions:update_functions_map}. 

\paragraph{Channel}
Using the mapping eq.~\eqref{appendix:ERM_map_mapping}, we assume that the channel distribution can be expressed as $\bbP\(y | z\) \propto e^{-l\(y, z\)}$. 
Therefore we introduce the corresponding channel distribution $P_\out$ at finite temperature $\Delta$ associated to the convex loss $l(y,z)$ 
\begin{align*}
	P_\out^{\rm map} \(y | z\) &= \frac{e^{-\frac{1}{\Delta}l(y, z)}}{\sqrt{2\pi \Delta}}\,.
\end{align*}
 Note that the case of the square loss $l(y,z)=\frac{1}{2}\(y-z\)^2$ is very specific. Its channel distribution simply reads $P_\out \(y | z\) = \frac{e^{-\frac{1}{2\Delta}(y-z)^2}}{\sqrt{2\pi \Delta}} $ and is therefore equivalent to predict labels $y$ according to a noisy Gaussian linear model $y= z + \sqrt{\Delta} \xi$, where $\xi \sim \mN(0, 1)$ and $\Delta$ denotes therefore the \emph{real} noise of the model.

In order to obtain a non trivial limit and a closed set of equations when $\Delta \to 0$, we must define rescaled variables as follows:
\begin{align*}
	V_\dag &\equiv \lim_{\Delta \to 0} \frac{V}{\Delta}\,,  && f_{\out,\dag}^{\rm map} (y, \omega,V_\dag) \equiv  \lim_{\Delta \to 0} \Delta \times  f_{\out}^{\rm map}(y, \omega,V)  \,,
\end{align*}
where we denote the rescaled quantities after taking the limit $\Delta \to 0$ by $\dagger$.
Similarly to eq.~\eqref{appendix:definitions:proximal_moreau:functionnal}, we introduce therefore the rescaled functional
\begin{align}
	\mL_{V_\dag}[l(y,.)](z;\omega) &= l(y,z) + \frac{1}{2V_\dag}\(z - \omega\)^2 \,,
\end{align}
such that, injecting $P_\out^{\rm map}$, the channel denoising distribution $Q_{\out}^{\rm map}$ and the corresponding partition function $\mZ_\out^{\rm map}$ eq.~\eqref{appendix:definitions:update_generic} simplify in the zero temperature limit as follows:
\begin{align}
	Q_{\out}^{\rm map} \(z;y, \omega,V\)&\equiv \lim_{\Delta \to 0} \frac{e^{-\frac{1}{\Delta} l(y,z) + \frac{1}{2V}\(z - \omega\)^2 } }{\sqrt{2\pi \Delta V_\dag }\sqrt{2\pi \Delta}}  = \lim_{\Delta \to 0} \frac{e^{-\frac{1}{\Delta} \mL_{V_\dag}[l(y,.)](z;\omega) } }{\sqrt{2\pi \Delta V_\dag }\sqrt{2\pi \Delta}}\,, \\
		& \propto \delta\(z - \mP_{V_\dag}[l(y,.)](\omega) \) \nonumber \\
	\mZ_\out^{\rm map} \(y, \omega,V\) &= \lim_{\Delta \to 0} \int_{\bbR} \d z  Q_{\out}^{\rm map} (z;y, \omega,V)  = \lim_{\Delta \to 0} \frac{e^{-\frac{1}{\Delta} \mM_{V_\dag}[l(y,.)](\omega) } }{\sqrt{2\pi \Delta V_\dag }\sqrt{2\pi \Delta}}\,,
\end{align}
that involve the proximal map and the Moreau-Yosida regularization defined in eq.~\eqref{appendix:definitions:moreau}. 
Finally taking the zero temperature limit, the MAP denoising function $f_{\out,\dag}^{\rm map}$ leads to the result \eqref{appendix:definitions:update_functions_map}:
\begin{align}
\begin{aligned}
	f_{\out,\dag}^{\rm map} (y, \omega,V_\dag) &\equiv \lim_{\Delta \to 0} \Delta \times f_{\out}^{\rm map} (y, \omega, V) \\
	& \equiv \lim_{\Delta \to 0} \Delta \times \partial_{\omega} \log \mZ_\out^{\rm map} \equiv \lim_{\Delta \to 0} \Delta V^{-1} \EE_{Q_{\out}^{\rm map}}\[ z- \omega \] \\
	&=  - \partial_{\omega} \mM_{V_\dag}[l(y,.)](\omega) = V_\dag^{-1} \( \mP_{V_\dag}[l(y,.)](\omega) - \omega\)\,.
\end{aligned}
\end{align}

\paragraph{Prior}
Similarly as above, using the mapping eq.~\eqref{appendix:ERM_map_mapping}, for a convex and separable regularizer $r$, the corresponding prior distribution at temperature $\Delta$ can be written
\begin{align*}
	P_\w^{\rm map} \(w\) &= e^{-\frac{1}{\Delta} r(w)}\,.
\end{align*}
Note that at $\Delta=1$ the classical $\rL_1$ regularization with strength $\lambda$, $r^{\rL_1}(w)=-\lambda |w|$, and the $\rL_2$ regularization $r^{\rL_2}(w)=-\lambda  w^2 /2$ are equivalent to choosing a Laplace prior $P_\w(w) \propto e^{- \lambda |w|}$ or a Gaussian prior $P_\w(w) \propto e^{-\frac{\lambda w^2}{2}}$. To obtain a meaningful limit as $\Delta \to 0$, we again introduce the following rescaled variables
\begin{align*}
\Lambda_\dag \equiv \lim_{\Delta \to 0} \Delta \times  \Lambda \,, && \gamma_\dag \equiv \lim_{\Delta \to 0} \Delta \times \gamma \,,
\end{align*}
and the functional
\begin{align}
\begin{aligned}
	\mL_{\Lambda_\dag^{-1}}\[ r(.) \](w;\Lambda_\dag^{-1}\gamma_\dag) &= r(w) + \frac{1}{2}\Lambda_\dag\( w - \Lambda_\dag^{-1}\gamma_\dag \)^2=  \[ r(w) + \frac{1}{2}\Lambda_\dag w^2 - \gamma_\dag w \] + \frac{1}{2} \gamma_\dag^2 \Lambda_\dag^{-1}   \,,
\end{aligned}
\end{align}
such that in the zero temperature limit, the prior denoising distribution $Q_{\w}^{\rm map}$ and the partition function $\mZ_\w^{\rm map}$ reduce to
\begin{align}
	Q_{\w}^{\rm map}  \(w;\gamma, \Lambda\)  &\equiv \lim_{\Delta \to 0} P_{\w}(w) e^{ - \frac{1}{2} \Lambda w^2  + \gamma w  } = \lim_{\Delta \to 0} e^{-\frac{1}{\Delta} \mL_{\Lambda_\dag^{-1}}\[ r \](w;\Lambda_\dag^{-1}\gamma_\dag) }e^{-\frac{1}{2\Delta} \gamma_\dag^2 \Lambda_\dag^{-1}} \nonumber \\
	&\propto \delta\(w - \mP_{\Lambda_\dag^{-1}}\[ r \](\Lambda_\dag^{-1}\gamma_\dag) \)\\
	\mZ_\w^{\rm map}  \(y, \omega,V\) &= \lim_{\Delta \to 0} \int_{\bbR} \d w  Q_{\w}^{\rm map} (w;\gamma, \Lambda) = \lim_{\Delta \to 0} e^{-\frac{1}{\Delta} \mM_{\Lambda_\dag^{-1}}\[ r \](\Lambda_\dag^{-1}\gamma_\dag) } e^{-\frac{1}{2\Delta} \gamma_\dag^2 \Lambda_\dag^{-1}}\,,
\end{align}
that involve again the proximal map $\mP_{\Lambda_\dag^{-1}}$ and the Moreau-Yosida regularization $\mM_{\Lambda_\dag^{-1}}$ defined in eq.~\eqref{appendix:definitions:moreau}. Finally the MAP denoising update $f_{\rm w, \dag}^{\rm map}$ is simply given by:
\begin{align}
	f_{\rm w, \dag}^{\rm map}(\gamma_\dag, \Lambda_\dag) &\equiv \lim_{\Delta \to 0} f_{\rm w}^{\rm map}(\gamma,\Lambda) = \lim_{\Delta \to 0} \partial_\gamma \log \mZ_{\w}^{\rm map} \equiv \lim_{\Delta \to 0} \EE_{Q_{\rm w}^{\rm map}}\[w\] \nonumber \\
	&= \lim_{\Delta \to 0} \partial_{\gamma} \( -\frac{1}{\Delta} \mM_{\Lambda_\dag^{-1}}\[ r(.) \](\Lambda_\dag^{-1}\gamma_\dag) -\frac{1}{2 \Delta} \gamma_\dag^2 \Lambda_\dag^{-1} \) \nonumber\\
	&= \partial_{\gamma_\dag} \( - \mM_{\Lambda_\dag^{-1}}\[ r(.) \](\Lambda_\dag^{-1}\gamma_\dag) -\frac{1}{2} \gamma_\dag^2 \Lambda_\dag^{-1} \)\\
	&= \Lambda_\dag^{-1}\gamma_\dag - \Lambda_\dag^{-1} \partial_{\Lambda_\dag^{-1}\gamma_\dag}\mM_{\Lambda_\dag^{-1}}\[ r(.) \] (\Lambda_\dag^{-1}\gamma_\dag) = \mP_{\Lambda_\dag^{-1}}\[ r(.) \](\Lambda_\dag^{-1}\gamma_\dag) \nonumber\\
	&= \argmin_{w} \[ r(w) + \frac{1}{2}\Lambda_\dag(w - \Lambda_\dag^{-1}\gamma_\dag )^2  \] = \argmin_{w} \[ r(w) + \frac{1}{2}\Lambda_\dag w^2 - \gamma_\dag w  \] \,, \nonumber
\end{align}
and we recover the result \eqref{appendix:definitions:update_functions_map}.

\subsection{Applications}
\label{appendix:bayes_map_updates}

In this section we list the explicit expressions of the Bayes-optimal eq.~\eqref{appendix:definitions:update_functions_bayes} and ERM eq.~\eqref{appendix:definitions:update_functions_map} denoising functions largely used to produce the examples in Sec.~\ref{sec:applications}.

\subsubsection{Bayes-optimal updates}
The Bayes-optimal denoising functions \eqref{appendix:definitions:update_functions_bayes} are detailed in the case of a \emph{linear}, \emph{sign} and \emph{rectangle door} channel with a Gaussian noise $\xi \sim \mN(0,1)$ and variance $\Delta \geq 0$, and for \emph{Gaussian} and \emph{sparse-binary} weights.
\paragraph{Channel\\}
\subparagraph{Linear: $y = \varphi_{\out^\star}(z) = z + \sqrt{\Delta}\xi$}
\begin{align}
	\begin{aligned}
		\mZ_{\out^\star} (y,\omega,V)  &=  \mN_\omega \( y , \Delta^\star +V  \)\,,\spacecase
		f_{\out^\star}  (y,\omega,V) &= \(\Delta^\star +V\)^{-1} \( y - \omega\)\,, 
		&&\partial_\omega f_{\out^\star} (y,\omega,V)  = - \(\Delta^\star +V\)^{-1}\,.
	\end{aligned}
	\label{appendix:definitions:application:linear}
\end{align}

\subparagraph{Sign: $y = \varphi_{\out^\star}(z) = \sign(z) + \sqrt{\Delta^\star}\xi$}
\begin{align}
	\begin{aligned}
		\mZ_{\out^\star} \( y,  \omega, V \) &=  \displaystyle \mN_y(1,\Delta^\star) \frac{1}{2} \(1 + \erf\(\frac{\omega}{\sqrt{2V}} \) \) + \mN_y(-1,\Delta^\star) \frac{1}{2} \(1 - \erf\(\frac{\omega}{\sqrt{2V}} \) \)\,, \spacecase
		f_{\out^\star} \( y,  \omega, V   \) &= \frac{ \mN_y(1,\Delta^\star)  - \mN_y(-1,\Delta^\star) }{\mZ_{\out^\star} \( y,  \omega, V \)} \mN_\omega(0,V)\,.
	\end{aligned}	
	\label{appendix:definitions:application:sign}
\end{align}

\subparagraph{Rectangle door: $y = \varphi_{\out^\star}(z)= \id \( \kappa_m \leq z \leq \kappa_M \) - \id \( z \leq \kappa_m \right.$ or $\left. z \geq \kappa_M \) + \sqrt{\Delta^\star}\xi$}
For $\kappa_m < \kappa_M$, we obtain
\begin{align}
	\begin{aligned}
		\mZ_{\out^\star} (y,\omega,V) &= \displaystyle \mN_y(1,\Delta^\star) \frac{1}{2} \(  \erf\(\frac{\kappa_M - \omega}{\sqrt{2V}} \) - \erf\(\frac{\kappa_m-\omega}{\sqrt{2V}} \) \) \\
		& + \mN_y(-1,\Delta^\star) \frac{1}{2} \(1 - \frac{1}{2}\(  \erf\(\frac{\kappa_M - \omega}{\sqrt{2V}} \) - \erf\(\frac{\kappa_m-\omega}{\sqrt{2V}} \) \) \)\,, \spacecase
		f_{\out^\star} (y,\omega,V) &= \frac{1}{\mZ_\out} \( \mN_y(1,\Delta^\star) \( - \mN_\omega(\kappa_M,V) + \mN_\omega(\kappa_m,V)  \) \right.\\
		& \hspace{2cm} \left. + \mN_y(-1,\Delta^\star) \( \mN_\omega(\kappa_M,V) - \mN_\omega(\kappa_m,V) \)\)\,.
	\end{aligned}
	\label{appendix:definitions:application:door}
\end{align}

\paragraph{Prior\\}
\subparagraph{Gaussian weights: $w\sim P_{\w}(w) = \mN_w(\mu,\sigma)$}
\begin{align}
	\begin{aligned}
		\mZ_{\w^\star}(\gamma,\Lambda) &= \frac{e^{\frac{\gamma^2 \sigma + 2 \gamma  \mu -\Lambda  \mu ^2}{2\( \Lambda  \sigma +1 \)}}}{\sqrt{\Lambda \sigma + 1 }}\,, && f_{\w^\star}(\gamma,\Lambda) = \frac{\gamma  \sigma +\mu }{1 + \Lambda  \sigma}\,, &&
		\partial_\gamma f_{\w^\star}(\gamma,\Lambda) = \frac{\sigma }{1 + \Lambda  \sigma}\,.
	\end{aligned}
	\label{appendix:definitions:application:gaussian}
\end{align}

\subparagraph{Sparse-binary weights: $w \sim P_{\w}(w) = \rho \delta(w) + (\rho-1) \frac{1}{2} \( \delta(w-1) +\delta(w+1) \)$}
 \begin{align}
	\begin{aligned}
	\mZ_{\w^\star}(\gamma,\Lambda) &=\rho + e^{-\frac{\Lambda }{2}} (1 -\rho) \cosh (\gamma)\,, \spacecase
	f_{\w^\star}(\gamma,\Lambda) &= \frac{e^{-\frac{\Lambda }{2}} (1-\rho) \sinh (\gamma )}{\rho + e^{-\frac{\Lambda }{2}} (1 -\rho) \cosh (\gamma)}\,, &&
	\partial_\gamma f_{\w^\star}(\gamma,\Lambda) &=  \frac{e^{-\frac{\Lambda }{2}} (1-\rho) \cosh (\gamma )}{\rho + e^{-\frac{\Lambda }{2}} (1 -\rho) \cosh (\gamma)}\,.
	\end{aligned}
	\label{appendix:definitions:application:binary}
\end{align}

\subsubsection{ERM updates}
The ERM denoising functions \eqref{appendix:definitions:update_functions_map} have, very often, no explicit expression except for the \emph{square} and \emph{hinge} losses, and for $\rL_1$, $\rL_2$ regularizations that are analytical. However, in the particular case of a two times differentiable convex loss the denoising functions can still be written as the solution of an implicit equation detailed below. 

\paragraph{Convex losses\\}
\subparagraph{Square loss}
The proximal map for the square loss $l^{\rm square}(y,z)= \frac{1}{2} (y-z)^2$ is easily obtained and reads 
\begin{align*}
	\mP_{V}\[\frac{1}{2}(y,.)^2\](\omega) =  \argmin_z \[\frac{1}{2}\(y-z\)^2 + \frac{1}{2V}\(z - \omega\)^2\] = \(1+V\)^{-1}\(\omega + y V\)\,.
\end{align*}
Therefore \eqref{appendix:definitions:update_functions_map} yields
\begin{align}
\begin{aligned}
	f_{\out}^{\rm square} (y, \omega, V) &= V^{-1} \( \mP_{V}\[\frac{1}{2}(y,.)^2\](\omega) - \omega\) = \(1 + V\)^{-1}\(y - \omega\)\,,\\
	\partial_\omega f_{\out}^{\rm square} (y, \omega, V) &= - \(1 + V\)^{-1}.
	\label{appendix:definitions:application:square}
\end{aligned}
\end{align}

\subparagraph{Hinge loss}
The proximal map of the hinge loss $l^{\rm hinge}(y,z)= \max\(0, 1 - yz\)$ 
\begin{align*}
	\mP_{V}\[ l^{\rm hinge}(y,.) \](\omega) &=  \argmin_z \[\underbrace{ \max\(0, 1 - yz\) + \frac{1}{2V}\(z - \omega\)^2}_{\equiv\mL_0} \] \equiv z^{\star}(y, \omega, V)\,.
\end{align*}
can be expressed analytically by distinguishing all the possible cases:
\begin{itemize}
	\item $1-yz<0$: $\mL_0 = \frac{1}{2V}\(z - \omega\)^2 \Rightarrow$ $z^\star= \omega$ if $yz^\star<1 \Leftrightarrow z^\star= \omega$ if $ \omega y<1$.
	\item $1-yz>0$: $\mL_0 = \frac{1}{2V}\(z - \omega\)^2 + 1 - yz \Rightarrow (z^\star-\omega)=yV \Leftrightarrow z^\star =  \omega  + Vy $ if $1-yz^\star>0 \Leftrightarrow z^\star =  \omega  + Vy$ if $   \omega y < 1  - y^2V = 1- V$, as $y^2=1$.
	\item Hence we have one last region to study $1 - V< \omega y < 1$. It follows $y(1 - V)< \omega < y $:
	$$ \frac{1}{2V}\(z - y\)^2 \leq \frac{1}{2V}\(z - \omega\)^2 \Rightarrow z^\star = y\,. $$
\end{itemize}
Finally we obtain a simple analytical expression for the proximal and its derivative
\begin{align*}
\begin{aligned}
	\mP_{V}\[l^{\rm hinge}(y,.)\](\omega) &= 
		\begin{cases}
		\omega + V y \textrm{ if } \omega y < 1 - V \\ 
		y \textrm{ if }  1 - V< \omega y < 1 \\ 
		\omega  \textrm{ if } \omega y > 1 \\ 	
		\end{cases} \hspace{-0.5cm} \,, \partial_\omega \mP_{V}\[l^{\rm hinge}(y,.)\](\omega) = 
		\begin{cases}
			1  \textrm{ if } \omega y < 1 - V \\ 
			0 \textrm{ if }  1 - V< \omega y < 1 \\ 
			1 \textrm{ if } \omega y > 1 \\ 	
		\end{cases}	\,.
	\end{aligned}	
\end{align*}
Hence with \eqref{appendix:definitions:update_functions_map}, the hinge denoising function and its derivative read
\begin{align}
\begin{aligned}
	f_\out^{\rm hinge}\(y, \omega, V\) = 
		\begin{cases}
		y \textrm{ if } \omega y < 1 - V \\ 
		\frac{(y-\omega)}{V} \textrm{ if }  1 - V< \omega y < 1 \\ 
		0 \textrm{ otherwise } \\	
		\end{cases}	
		\hspace{-0.4cm} \,,
	\partial_\omega f_\out^{\rm hinge}\(y, \omega, V\) = 
		\begin{cases}
			-\frac{1}{V} \textrm{ if }  1 - V< \omega y < 1 \\ 
			0 \textrm{ otherwise } \\ 	
		\end{cases}	
	\end{aligned}\,.
	\label{appendix:definitions:application:hinge}
\end{align}

\subparagraph{Generic differentiable convex loss}
In general, finding the proximal map in \eqref{appendix:definitions:update_functions_map} is intractable. In particular, it is the case for the logistic loss considered in Sec.~\ref{appendix:applications:logistic}. However assuming the convex loss is a generic two times differentiable function $l \in \mD^2$, taking the derivative of the proximal map
\begin{align*}
	\mP_{V}\[l(y,.)\](\omega) &=  \argmin_z \[l\(y, z\) + \frac{1}{2V}\(z - \omega\)^2\] \equiv z^{\star}(y, \omega, V)\,,
\end{align*}
verifies therefore the implicit equations:
\begin{align}
	 z^{\star}(y, \omega, V) &= \omega - V \partial_z l\(y, z^{\star}(y, \omega, V) \)\,,&& \partial_\omega z^{\star} (y, \omega, V) = \( 1 + V \partial^2_{z} l(y,  z^{\star}(y, \omega, V) )  \)^{-1} \,.
	 \label{appendix:definitions:application:proximal_differentiable}
\end{align}
Once those equations solved, the denoising function and its derivative are simply expressed as
\begin{align}
		f_{\out}^{\rm diff} \( y, \omega, V  \) &= V^{-1} (z^\star \( y, \omega, V  \) - \omega) \,,&& \partial_\omega f_{\out}^{\rm diff} \( y, \omega, V  \)   = V^{-1} \( \partial_\omega z^\star \( y, \omega, V  \) - 1 \) \,,
\label{appendix:definitions:application:differentiable}
\end{align}	
with $z^\star \( y, \omega, V  \)=\mP_{V}\[l(y,.)\](\omega)$ solution of \eqref{appendix:definitions:application:proximal_differentiable}.

\paragraph{Regularizations\\}
\subparagraph{$\rL_2$ regularization}
Using the definition of the prior update in eq.~\eqref{appendix:definitions:update_functions_map} for the $\rL_2$ regularization $r(w) = \frac{\lambda w^2}{2}$, we obtain
\begin{align}
\label{appendix:definitions:application:L2}
\begin{aligned}
	f_{\rm w}^{\rL_2}(\gamma, \Lambda) &= \argmin_{w} \[ \frac{\lambda w^2}{2} + \frac{1}{2}\Lambda w^2 - \gamma w  \] = \frac{\gamma}{\lambda + \Lambda}\,, \\
	\partial_\gamma f_{\rm w}^{\rL_2}(\gamma, \Lambda) &= \frac{1}{\lambda + \Lambda} ~~\text{ and }~~ \mZ_{\w}^{\rL_2}(\gamma, \Lambda) = \exp\(\frac{\gamma ^2 \Lambda }{2 (\lambda +\Lambda )^2}  \)\,.
\end{aligned}
\end{align}

\subparagraph{$\rL_1$ regularization}
Performing the same computation for the $\rL_1$ regularization $r(w) = \lambda |w|$, we obtain
\begin{align}
\begin{aligned}
	f_{\rm w}^{\rL_1}(\gamma, \Lambda) &= \argmin_{w} \[ \lambda \|w\| + \frac{1}{2}\Lambda w^2 - \gamma w  \] = 
	\begin{cases}
\frac{\gamma -\lambda }{\Lambda } & \gamma >\lambda  \\
 \frac{\gamma +\lambda }{\Lambda } & \gamma +\lambda <0 \\
 0 \textrm{ otherwise}
	\end{cases}\,,\\
	\partial_{\gamma} f_{\rm w}^{\rL_1}(\gamma, \Lambda) &= \begin{cases}
\frac{1}{\Lambda } & \|\gamma\| >\lambda \\
 0 \textrm{ otherwise}
	\end{cases}\,.
	\end{aligned}
\label{appendix:definitions:application:L1}
\end{align}


\newpage
\section{Binary classification generalization errors}
\label{appendix:generalization_error}

In this section, we present the computation of the asymptotic generalization error
\begin{align}
e_{\rm g}(\alpha) \equiv \lim_{\ndim \to \infty} \EE_{y, \vec{x}}  \id\[ y \ne \hat{y}\(\hat{\vec{w}}(\alpha); \vec{x} \) \]\,,
\end{align}
 leading to expressions in Proposition.~\ref{main:thm:generalization_errors} and Thm.~\ref{main:thm:fixed_point_equations_bayes}. The computation at finite dimension is similar if we do not consider the limit $d\to \infty$.

\subsection{General case}
\label{appendix:sec:generalization_error:general}

The generalization error $e_{\rm g}$ is the prediction error of the estimator $\hat{\vec{w}}$ on new samples $\{\vec{y}, \mat{X}\}$, where $\mat{X}$ is an \iid Gaussian matrix and $\vec{y}$ are $\pm 1$ labels generated according to \eqref{appendix:teacher_model}:
\begin{align}
	\vec{y} = \varphi_{\out^\star}\(\vec{z}\) \hhspace \textrm{ with } \hhspace \vec{z} = \frac{1}{\sqrt{\ndim}} \mat{X} \vec{w}^\star	\,.
\end{align}
 As the model fitted by ERM may not lead to binary outputs, we may add a non-linearity $\varphi:\bbR \mapsto \{\pm 1\}$ (for example a sign) on top of it to insure to obtain binary outputs $\hat{\vec{y}} = \pm 1$ according to
 \begin{align}
 		\hat{\vec{y}} = \varphi\(\hat{\vec{z}}\) \hhspace \textrm{ with } \hhspace \hat{\vec{z}} = \frac{1}{\sqrt{\ndim}} \mat{X} \hat{\vec{w}} \,.
 \end{align}
The classification generalization error is given by the probability that the predicted labels $\hat{y}$ and the true labels $y$ do not match. To compute it, first note that the vectors $(\vec{z},\hat{\vec{z}})$
averaged over all possible ground truth vectors $\vec{w}^\star$ (or equivalently labels $y$) and input matrix $\mat{X}$
follow in the large size limit a joint Gaussian distribution with zero mean and covariance matrix 
\begin{align}
	\sigma = \lim_{\ndim \to \infty} \EE_{\vec{w}^\star, \mat{X}} \frac{1}{\ndim} 
\begin{bmatrix}
	\vec{w}^{\star \intercal} \vec{w}^\star & \vec{w}^{\star \intercal} \hat{\vec{w}} \\
	\vec{w}^{\star \intercal} \hat{\vec{w}} & \hat{\vec{w}}^{\intercal} \hat{\vec{w}}
\end{bmatrix}
\equiv
\begin{bmatrix}
	\sigma_{\w^\star} & \sigma_{\w^\star \hat{\w}} \\
	\sigma_{\w^\star \hat{\w}} & \sigma_{\hat{\w}}
\end{bmatrix}\,.
\end{align} 
The asymptotic generalization error depends only on the covariance matrix $\sigma$ and as the samples are \iid it reads
\begin{align}
\begin{aligned}
	e_{\rm g}(\alpha) &=\lim_{\ndim \to \infty} \EE_{y, \vec{x}}  \id\[ y \ne \hat{y}\(\hat{\vec{w}}(\alpha); \vec{x} \) \] = 1 - \bbP[y = \hat{y}\(\hat{\vec{w}}(\alpha); \vec{x} \)] =  1 - 2  \int_{\(\bbR^+\)^{2}} d\vec{x} \mN_{\vec{x}} \( \vec{0}, \sigma\)  \\
	&=  1 - \( \frac{1}{2}  + \frac{1}{\pi}  \atan\(\sqrt{\frac{\sigma_{\w^\star\hat{\w}}^2}{\sigma_{\w^\star} \sigma_{\hat{\w}} - \sigma_{\w^\star \hat{\w}}^2}}\) \)  = \frac{1}{\pi} \acos\(\frac{\sigma_{\w^\star\hat{\w}}}{\sqrt{\sigma_{\w^\star} \sigma_{\hat{\w}}}}\) \,,
\end{aligned}
\end{align}
where we used the fact that $\atan(x) = \frac{\pi}{2} - \frac{1}{2}\acos\(\frac{x^2-1}{1+x^2}\)$ and $\frac{1}{2} \acos(2x^2-1) = \acos(x)$. Finally
\begin{align}
	e_{\rm g}(\alpha) &\equiv \lim_{\ndim \to \infty} \EE_{y, \vec{x}}  \id\[ y \ne \hat{y}\(\hat{\vec{w}}(\alpha); \vec{x} \) \] =\frac{1}{\pi} \acos\(\frac{\sigma_{\w^\star\hat{\w}}}{\sqrt{\rho_{\w^\star} \sigma_{\hat{\w}}}}\)\,, 
	\label{appendix:generalization_error:general}
\end{align}
with
\begin{align*}
 	\sigma_{\w^\star\hat{\w}} &\equiv \lim_{\ndim \to \infty} \EE_{\vec{w}^\star,\mat{X}} \frac{1}{\ndim} \hat{\vec{w}}^\intercal \vec{w}^\star \,, 
 	&& \rho_{\w^\star} \equiv \lim_{\ndim \to \infty}  \EE_{\vec{w}^\star} \frac{1}{\ndim}  \|\vec{w}^\star\|_2^2 \,, 
 	&& \sigma_{\hat{\w}} \equiv \lim_{\ndim \to \infty}  \EE_{\vec{w}^\star,\mat{X}} \frac{1}{\ndim} \|\hat{\vec{w}}\|_2^2\,.
\end{align*}

\subsection{Bayes-optimal generalization error}
The Bayes-optimal generalization error for classification is equal to eq.~\eqref{appendix:generalization_error:general} where the Bayes estimator $\hat{\vec{w}}$ is the average over the posterior distribution eq.~\eqref{appendix:posterior_distribution} denoted $\langle . \rangle$, knowing the teacher prior $P_{\w^\star}$ and channel $P_{\out^\star}$ distributions: $\hat{\vec{w}}=\langle \vec{w} \rangle_{\vec{w}}$. Hence the parameters $\sigma_{\hat{\w}}$ and $\sigma_{\w^\star\hat{\w}}$ read in the Bayes-optimal case
\begin{align*}
	\sigma_{\hat{\w}} &\equiv \lim_{\ndim \to \infty}  \EE_{\vec{w}^\star,\mat{X}} \frac{1}{\ndim} \|\hat{\vec{w}}\|_2^2 = \lim_{\ndim \to \infty}  \EE_{\vec{w}^\star,\mat{X}} \frac{1}{\ndim} \|\langle \vec{w} \rangle_{\vec{w}}\|_2^2  \equiv q_{\rm b}\,,\\
	\sigma_{\w^\star\hat{\w}} &\equiv \lim_{\ndim \to \infty} \EE_{\vec{w}^\star,\mat{X}} \frac{1}{\ndim} \hat{\vec{w}}^\intercal \vec{w}^\star = \lim_{\ndim \to \infty} \EE_{\vec{w}^\star,\mat{X}} \frac{1}{\ndim} \langle \vec{w} \rangle_{\vec{w}}^\intercal \vec{w}^\star \equiv m_{\rm b} \,.
\end{align*}
Using Nishimori identity \cite{Nishimori_1980}, we easily obtain $m_{\rm b}=q_{\rm b}$ which is solution of eq.~\eqref{main:fixed_point_equations_bayes}. Therefore the generalization error simplifies
\begin{align}
	e_{\rm g}^{\rm bayes}(\alpha) = \frac{1}{\pi} \acos\( \sqrt{\eta}_\bayes\)\,, &\text{~~with~~~} \eta_\bayes = \frac{q_\bayes}{\rho_{\w^\star}} \,.
	\label{appendix:generalization_error:bayes}
\end{align}

\subsection{ERM generalization error}
The generalization error of the ERM estimator is given again by eq.~\eqref{appendix:generalization_error:general} with parameters
\begin{align*}
	\sigma_{\hat{\w}} &\equiv \lim_{\ndim \to \infty}  \EE_{\vec{w}^\star,\mat{X}} \frac{1}{\ndim} \|\hat{\vec{w}}\|_2^2 = \lim_{\ndim \to \infty}  \EE_{\vec{w}^\star,\mat{X}} \frac{1}{\ndim} \| \hat{\vec{w}}^{\rm erm}\|_2^2  \equiv q\,,\\
	\sigma_{\w^\star\hat{\w}} &\equiv \lim_{\ndim \to \infty} \EE_{\vec{w}^\star,\mat{X}} \frac{1}{\ndim} \hat{\vec{w}}^\intercal \vec{w}^\star =  \lim_{\ndim \to \infty} \EE_{\vec{w}^\star,\mat{X}} \frac{1}{\ndim} \(\hat{\vec{w}}^{\rm erm}\)^\intercal \vec{w}^\star \equiv m \,.
\end{align*}
where the parameters $m, q$ are the asymptotic ERM overlaps solutions of eq.~\eqref{main:fixed_point_equations_replicas} and that finally lead to the ERM generalization error for classification:
\begin{align}
	e_{\rm g}^{\rm erm}(\alpha) &= \frac{1}{\pi} \textrm{acos}\( \sqrt{\eta} \)\,, 
	&& \text{with~~} \eta \equiv \frac{m^2}{\rho_{\w^\star}q}\,. 
	\label{appendix:generalization_error:erm}
\end{align}


\newpage
\section{Proofs of the ERM fixed points}
\label{appendix:proof}

\subsection{Gordon's result and proofs}
\label{appendix:proof:gordon}

We consider in this section that the data have been generated by a teacher \eqref{appendix:teacher_model} with Gaussian weights
\begin{align}
	\vec{w}^\star \sim P_{\w^\star}(\vec{w}^\star) = \mN_{\vec{w}^\star}\(\vec{0},\rho_{\w^\star} \rI_{\ndim}\) ~~\text{ with }~~ \rho_{\w^\star} \equiv \EE\[ (w^\star)^2 \] \,.
	\label{appendix:model:gaussian_weights}
\end{align}

\subsubsection{For real outputs - Regression with $\rL_2$ regularization}

In what follows, we prove a theorem that characterizes the asymptotic performance of empirical risk minimization 
\begin{equation}\label{eq:erm_l2}
\hat{\vec{w}}_{\rm erm} =  \argmin_{\vec{w}} \sum_{i=1}^\nsamples l\(y_i , \tfrac{1}{\sqrt{d}}\vec{x}_i^\intercal \vec{w} \) +   \frac{\lambda \norm{\vec{w}}^2}{2},
\end{equation}
where $\{y_i\}_{1 \le i \le n}$ are general real-valued outputs (that are not necessarily binary), $l(y, z)$ is a loss function that is convex with respect to $z$, and $\lambda > 0$ is the strength of the $\rL_2$ regularization. Note that this setting is more general than the one considered in Thm.~\ref{main:thm:gordon_fixed_points:classification} in the main text, which focuses on binary outputs and loss functions in the form of $l(y, z) = \ell(yz)$ for some convex function $\ell(\cdot)$.

\begin{theorem}[Regression with $\rL_2$ regularization]
	\label{apppendix:thm:gordon_fixed_points:classification}
As $\nsamples, \ndim \to \infty$ with $\nsamples/ \ndim = \alpha = \Theta(1)$, the overlap parameters $m, q$ concentrate to
\begin{align}
m & \underlim{\ndim}{\infty} \sqrt{\rho_{\w^\star}} \mu^\ast\,, && q \underlim{\ndim}{\infty} (\mu^\ast)^2 + (\delta^\ast)^2\,, 
\end{align}
where the parameters $\mu^\ast, \delta^\ast$ are the solutions of 
\begin{equation}\label{eq:pot_func_general}
(\mu^\ast, \delta^\ast) = \underset{\mu, \delta \ge 0}{\arg\min} \ \sup_{\tau > 0} \left\{\frac{\lambda(\mu^2 + \delta^2)}{2} - \frac{\delta^2}{2\tau} + \alpha \EE_{g, s} \mM_{\tau}[l(\varphi_{\out^\star}(\sqrt{\rho_{\w^\star}} s),.)](\mu s + \delta g)\right\}.
\end{equation}
Here, $\mM_{\tau}[l(,.)](x)$ is the Moreau-Yosida regularization defined in \eqref{appendix:definitions:moreau}, and $g, s$ are two \iid standard normal random variables.
\end{theorem}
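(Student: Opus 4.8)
The plan is to prove the theorem by the Convex Gaussian Min-max Theorem (CGMT), the two-sided rigorous form of Gordon's Gaussian comparison inequality. First I would rewrite the ERM \eqref{eq:erm_l2} so that the Gaussian matrix $\mat{X}$ enters only through a single bilinear form. Introducing the preactivations $z_i = \tfrac{1}{\sqrt{\ndim}} \vec{x}_i^\intercal \vec{w}$ as free variables and enforcing them with a dual vector $\vec{u}\in\bbR^\nsamples$ gives the \emph{primary optimisation} (PO)
\begin{equation}
\min_{\vec{w}}\ \min_{\vec{z}}\ \max_{\vec{u}}\ \Big\{ \sum_{i=1}^\nsamples l(y_i, z_i) + \vec{u}^\intercal\big(\tfrac{1}{\sqrt{\ndim}}\mat{X}\vec{w} - \vec{z}\big) + \tfrac{\lambda}{2}\|\vec{w}\|^2 \Big\}.
\end{equation}
The obstruction is that the labels $y_i = \varphi_{\out^\star}(\tfrac{1}{\sqrt{\ndim}}\vec{x}_i^\intercal\vec{w}^\star)$ also depend on $\mat{X}$. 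I would remove this coupling by splitting $\vec{w}$ along and orthogonal to the teacher: with $P$ the projector onto $(\vec{w}^\star)^\perp$, write $\vec{w} = \tfrac{\vec{w}^\intercal\vec{w}^\star}{\|\vec{w}^\star\|^2}\vec{w}^\star + \vec{w}_\perp$, $\vec{w}_\perp = P\vec{w}$. Conditioning on the teacher field $\vec{s} = \tfrac{1}{\sqrt{\ndim}}\mat{X}\vec{w}^\star$ (equivalently on $\vec{y}$), the matrix $\mat{X}P$ is an independent Gaussian matrix, so $\tfrac{1}{\sqrt{\ndim}}\mat{X}\vec{w} = \tfrac{m}{\rho_{\w^\star}}\vec{s} + \tfrac{1}{\sqrt{\ndim}}(\mat{X}P)\vec{w}$ with $m = \tfrac{1}{\ndim}\vec{w}^\intercal\vec{w}^\star$, and the only remaining Gaussian bilinear term is $\vec{u}^\intercal(\mat{X}P)\vec{w}$.

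Because the loss is convex in $z$ and the constraint sets can be made convex and compact (the coercive $\ell_2$ term confines the minimiser), CGMT applies: the bilinear term $\tfrac{1}{\sqrt{\ndim}}\vec{u}^\intercal(\mat{X}P)\vec{w}$ may be replaced by $\tfrac{1}{\sqrt{\ndim}}\big(\|\vec{w}_\perp\|\,\vec{g}^\intercal\vec{u} + \|\vec{u}\|\,\vec{h}^\intercal\vec{w}_\perp\big)$ with independent standard Gaussian $\vec{g}\in\bbR^\nsamples$, $\vec{h}\in\bbR^\ndim$ (the projector acting on $\vec h$ being negligible in the limit), yielding the \emph{auxiliary optimisation} (AO) with the same asymptotic cost and, crucially, the same limiting minimiser statistics.

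I would then reduce the AO to a scalar saddle point. Introduce the order parameters $\mu = m/\sqrt{\rho_{\w^\star}}$ and $\delta^2 = \tfrac{1}{\ndim}\|\vec{w}_\perp\|^2$, so that the student field on sample $i$ is $\mu \hat s_i + \delta g_i$ (writing $s_i = \sqrt{\rho_{\w^\star}}\hat s_i$ with $\hat s_i\sim\mN(0,1)$) and $\tfrac{\lambda}{2}\|\vec{w}\|^2 = \tfrac{\lambda\ndim}{2}(\mu^2+\delta^2)$, matching $q = \mu^2+\delta^2$. Optimising over the direction and magnitude of $\vec{u}$ turns, for each sample, the remaining minimisation into $\min_{z}\{ l(y_i,z) + \tfrac{1}{2\tau}(z - (\mu\hat s_i + \delta g_i))^2\} = \mM_\tau[l(y_i,\cdot)](\mu\hat s_i + \delta g_i)$, with a single scalar saddle variable $\tau>0$ conjugate to $\|\vec{u}\|$ (equivalently to $\delta$) that simultaneously generates the $-\delta^2/(2\tau)$ term. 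By the law of large numbers the empirical mean over $i$ concentrates on $\alpha\,\EE_{g,s}\mM_\tau[l(\varphi_{\out^\star}(\sqrt{\rho_{\w^\star}}s),\cdot)](\mu s + \delta g)$, giving exactly the scalar objective \eqref{eq:pot_func_general}.

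Finally, CGMT guarantees that the PO cost concentrates on the deterministic value of this $(\mu,\delta,\tau)$ saddle point, and, since the objective is jointly convex in $(\mu,\delta)$ (because $\mM_\tau$ is convex) and concave in $\tau$, that the empirical overlaps concentrate on the optimiser: $\tfrac{1}{\ndim}\vec{w}^\intercal\vec{w}^\star \to \sqrt{\rho_{\w^\star}}\,\mu^\ast$ and $\tfrac{1}{\ndim}\|\vec{w}\|^2 \to (\mu^\ast)^2 + (\delta^\ast)^2$. The main obstacle, and the place where care is needed, is the rigorous application of CGMT: one must first establish coercivity/compactness so that the optimisers lie in a fixed compact convex set with high probability (this is where $\lambda>0$ is essential), and then upgrade the cost-level comparison to concentration of the overlap observables themselves, which requires uniqueness (strong convexity) of the scalar AO minimiser together with a Lipschitz/local-stability argument. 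Justifying the scalarisation of the dual block and the legitimacy of the order of the min--max in \eqref{eq:pot_func_general} are the remaining technical points.
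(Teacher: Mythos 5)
Your proposal is correct and follows essentially the same route as the paper's proof: both isolate the teacher direction so that the labels decouple from the remaining Gaussian matrix (the paper rotates $\vec{w}^\star$ onto $\vec{e}_1$ and splits $\mat{X}=[\vec{s},\mat{B}]$, which is your projection-and-conditioning step in different coordinates), both dualize the loss to expose a single bilinear form (the paper via the Fenchel conjugate $l(y,z)=\sup_q\{qz-l^\ast(y,q)\}$, you via the equivalent Lagrangian on the preactivation constraint), and both apply Gordon/CGMT, introduce the auxiliary $\tau$, recover the Moreau--Yosida envelope per sample, and conclude by concentration plus uniqueness of the scalar minimiser from $\lambda>0$. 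The only point to tighten is your parenthetical claim that the objective is \emph{jointly convex in $(\mu,\delta)$ for fixed $\tau$} — the $-\delta^2/(2\tau)$ term spoils this slice-wise; convexity and uniqueness should be argued for the function of $(\mu,\delta)$ obtained \emph{after} the $\sup_{\tau}$ (as the paper implicitly does when invoking $\lambda>0$ and the standard CGMT machinery of \cite{Thrampoulidis16}).
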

\begin{proof}
	\label{appendix:proof:gordon_regression}
Since the teacher weight vector $\vec{w}^\star$ is independent of the input data matrix $\mat{X}$, we can assume without loss of generality that
\begin{align*}
\vec{w}^\star = \sqrt{d} \rho_d \vec{e}_1,
\end{align*}
where $\vec{e}_1$ is the first natural basis vector of $\bbR^d$, and $\rho_d = \norm{\vec{w}^\star}/\sqrt{d}$. As $d \to \infty$, $\rho_d \to \sqrt{\rho_{\w^\star}}$. Accordingly, it will be convenient to split the data matrix into two parts:
\begin{equation}\label{eq:X_split}
\mat{X} = \begin{bmatrix}
\vec{s} & \mat{B}
\end{bmatrix},
\end{equation}
where $\vec{s} \in \bbR^{n \times 1}$ and $\mat{B} \in \bbR^{n \times (d-1)}$ are two sub-matrices of \iid standard normal entries. The weight vector $\vec{w}$ in \eqref{eq:erm_l2} can also be written as $\vec{w} = [\sqrt{d} \mu, \vec{v}^\intercal]^\intercal$, where $\mu \in \bbR$ denotes the projection of $\vec{w}$ onto the direction spanned by the teacher weight vector $\vec{w}^\star$, and $\vec{v} \in \bbR^{d-1}$ is the projection of $\vec{w}$ onto the complement subspace. These representations serve to simplify the notations in our subsequent derivations. For example, we can now write the output as
\begin{equation}\label{eq:s2y}
y_i = \varphi_{\out^\star}(\rho_d s_i),
\end{equation}
where $s_i$ is the $i$th entry of the Gaussian vector $\vec{s}$ in \eqref{eq:X_split}.

Let $\Phi_d$ denote the cost of the ERM in \eqref{eq:erm_l2}, normalized by $d$. Using our new representations introduced above, we have
\begin{equation}\label{eq:Phi}
\Phi_d = \min_{\mu, \vec{v}} \frac{1}{d}\sum_{i=1}^\nsamples l\(y_i, \mu s_i + \tfrac{1}{\sqrt{d}}\vec{b}_i^\intercal \vec{v} \) +   \frac{\lambda (d\mu^2 + \norm{\vec{v}}^2)}{2d},
\end{equation}
where $\vec{b}_i^\intercal$ denotes the $i$th row of $\mat{B}$. Since the loss function $l(y_i, z)$ is convex with respect to $z$, we can rewrite it as
\begin{equation}\label{eq:conjugate}
l(y_i, z) = \sup_q \{q z - l^\ast(y_i, q)\},
\end{equation}
where $l^\ast(y_i, q) = \sup_z \{qz - l(y_i, z)\}$ is its convex conjugate. Substituting \eqref{eq:conjugate} into \eqref{eq:Phi}, we have
\begin{equation}\label{eq:Phi_o}
\Phi_d = \min_{\mu, \vec{v}} \, \sup_{\vec{q}}  \left\{\frac{\mu \vec{q}^\intercal \vec{s}}{d} + \frac{1}{d^{3/2}} \vec{q}^\intercal \mat{B} \vec{v} - \frac{1}{d} \sum_{i=1}^n l^\ast(y_i, q_i) + \frac{\lambda \(d\mu^2 + \norm{\vec{v}}^2\)}{2d}\right\}. 
\end{equation}

Now consider a new optimization problem
\begin{equation}\label{eq:Phi_s}
\widetilde{\Phi}_d = \min_{\mu, \vec{v}} \, \sup_{\vec{q}}  \left\{\frac{\mu \vec{q}^\intercal \vec{s}}{d} + \frac{\norm{\vec{q}}}{\sqrt{d}} \frac{\vec{h}^\intercal \vec{v}}{d} + \frac{\norm{\vec{v}}}{\sqrt{d}} \frac{\vec{g}^\intercal \vec{q}}{d} - \frac{1}{d} \sum_{i=1}^n l^\ast(y_i, q_i) + \frac{\lambda \(d\mu^2 + \norm{\vec{v}}^2\)}{2d}\right\},
\end{equation}
where $h \sim \mN\(\vec{0},\rI_{\ndim-1}\)$ and $g \sim  \mN\(\vec{0},\rI_{n}\)$ are two independent standard normal vectors. It follows from Gordon's minimax comparison inequality (see, \emph{e.g.}, \cite{pmlr-v40-Thrampoulidis15}) that
\begin{equation}\label{eq:cgmt}
\mathbb{P}(\abs{\Phi_d - c} \ge \epsilon) \le 2 \mathbb{P}\(\abs{\widetilde{\Phi}_d-c} \ge \epsilon\)
\end{equation}
for any constants $c$ and $\epsilon > 0$. This implies that $\widetilde{\Phi}_d$ serves as a surrogate of $\Phi_d$. Specifically, if $\widetilde{\Phi}_d$ concentrates around some deterministic limit $c$ as $d \to \infty$, so does $\Phi_d$. In what follows, we proceed to solve the surrogate problem in \eqref{eq:Phi_s}. First, let $\delta = \norm{\vec{v}}/\sqrt{d}$. It is easy to see that \eqref{eq:Phi_s} can be simplified as
\begin{align*}
\widetilde{\Phi}_d &= \min_{\mu, \delta \ge 0} \, \sup_{\vec{q}}  \left\{\frac{\vec{q}^\intercal (\mu\vec{s} + \delta \vec{g})}{d} - \delta \frac{\norm{\vec{q}}}{\sqrt{d}} \frac{\norm{\vec{h}}}{\sqrt{d}}  - \frac{1}{d} \sum_{i=1}^n l^\ast(y_i, q_i) + \frac{\lambda (\mu^2 + \delta^2)}{2}\right\}\\
&\overset{(a)}= \min_{\mu, \delta \ge 0} \, \sup_{\tau > 0} \, \sup_{\vec{q}} \left\{-\frac{\tau \norm{\vec{q}}^2}{2d} - \frac{\delta^2 \norm{\vec{h}}^2}{2\tau d} +\frac{\vec{q}^\intercal (\mu\vec{s} + \delta \vec{g})}{d}- \frac{1}{d} \sum_{i=1}^n l^\ast(y_i, q_i) + \frac{\lambda (\mu^2 + \delta^2)}{2}\right\}\\
&=\min_{\mu, \delta \ge 0} \, \sup_{\tau > 0}\left\{ \frac{\lambda (\mu^2 + \delta^2)}{2}-\frac{\delta^2 \norm{\vec{h}}^2}{2\tau d} - \frac{\alpha}{n}\inf_{\vec{q}} \Big[\frac{\tau \norm{\vec{q}}^2}{2} - \vec{q}^\intercal (\mu\vec{s} + \delta \vec{g})+ \sum_{i=1}^n l^\ast(y_i, q_i)\Big]\right\}\\
&\overset{(b)}=\min_{\mu, \delta \ge 0} \, \sup_{\tau > 0}\left\{ \frac{\lambda (\mu^2 + \delta^2)}{2}-\frac{\delta^2 \norm{\vec{h}}^2}{2\tau d} - \frac{\alpha}{n}\sum_{i=1}^n \mM_{\tau}[l(y_i,.)](\mu s_i + \delta g_i)\right\}.
\end{align*}
In $(a)$, we have introduced an auxiliary variable $\tau$ to rewrite $- \delta \frac{\norm{\vec{q}}}{\sqrt{d}} \frac{\norm{\vec{h}}}{\sqrt{d}}$ as
\begin{align*}
- \delta \frac{\norm{\vec{q}}}{\sqrt{d}} \frac{\norm{\vec{h}}}{\sqrt{d}} = \sup_{\tau > 0} \left\{-\frac{\tau \norm{\vec{q}}^2}{2d} - \frac{\delta^2 \norm{\vec{h}}^2}{2\tau d}\right\}\,,
\end{align*}
and to get $(b)$, we use the identity 
\begin{align*}
\inf_q \left\{\frac{\tau}{2} q^2 - q z + \ell^\ast(q)\right\} = - \inf_x \left\{\frac{(z-x)^2}{2\tau} + \ell(x)\right\}
\end{align*}
that holds for any $z$ and for any convex function $\ell(x)$ and its conjugate $\ell^\ast(q)$. As $d \to \infty$, standard concentration arguments give us $\frac{\norm{\vec{h}}^2}{d} \to 1$ and $\frac{1}{n}\sum_{i=1}^n \mM_{\tau}[l(y_i,.)](\mu s_i + \delta g_i) \to \mathbb{E}_{g, s} \mM_{\tau}[l(y,.)](\mu s + \delta g)$ locally uniformly over $\tau, \mu$ and $\delta$. Using \eqref{eq:cgmt} and recalling \eqref{eq:s2y}, we can then conclude that the normalized cost of the ERM $\Phi_d$ converges to the optimal value of the deterministic optimization problem in \eqref{eq:pot_func_general}. Finally, since $\lambda > 0$, one can show that the cost function of \eqref{eq:pot_func_general} has a unique global minima at $\mu^\ast$ and $\delta^\ast$. It follows that the empirical values of $(\mu, \delta)$ associated with the surrogate optimization problem \eqref{eq:Phi_s} converge to their corresponding deterministic limits $(\mu^\ast, \delta^\ast)$. Finally, the convergence of $(\mu, \delta)$ associated with the original optimization problem \eqref{eq:Phi_o} towards the same limits can be established by evoking standard arguments (see, \emph{e.g.}, \cite[Theorem 6.1, statement (iii)]{Thrampoulidis16}).
\end{proof}

\subsubsection{For binary outputs - Classification with $\rL_2$ regularization}

In what follows, we specialize the previous theorem to the case of binary classification, with a convex loss function in the form of $l(y, z) = \ell(yz)$ for some function $\ell(\cdot)$.

\begin{theorem}[Thm.~\ref{main:thm:gordon_fixed_points:classification} in the main text. Gordon's min-max fixed point - Classification with $\rL_2$ regularization]
\label{appendix:thm:gordon_fixed_points:classification}
As $\nsamples, \ndim \to \infty$ with $\nsamples/ \ndim = \alpha = \Theta(1)$, the overlap parameters $m, q$ concentrate to
\begin{align}
m & \underlim{\ndim}{\infty} \sqrt{\rho_{\w^\star}} \mu^\ast\,, && q \underlim{\ndim}{\infty} (\mu^\ast)^2 + (\delta^\ast)^2\,, 
\end{align}
where parameters $\mu^\ast, \delta^\ast$ are solutions of 
\begin{equation}\label{appendix:eq:pot_func}
(\mu^\ast, \delta^\ast) = \underset{\mu, \delta \ge 0}{\arg\min} \ \sup_{\tau > 0} \left\{\frac{\lambda(\mu^2 + \delta^2)}{2} - \frac{\delta^2}{2\tau} + \alpha \EE_{g, s} \mM_\tau[\delta g + \mu s \varphi_{\out^\star}(\sqrt{\rho_{\w^\star}} s)]\right\},
\end{equation}
and $g, s$ are two \iid standard normal random variables. The solutions $(\mu^\ast, \delta^\ast, \tau^\ast)$ of \eqref{appendix:eq:pot_func} can be reformulated as a set of fixed point equations 
\begin{align}
\begin{aligned}
	\mu^\ast &= \frac{\alpha}{\lambda \tau^\ast + \alpha } \EE [s \cdot \varphi_{\out^\star}(\sqrt{\rho_{\w^\star}} s) \cdot \mP_{\tau^\ast}(\delta^\ast g+ \mu^\ast s \varphi_{\out^\star}(\sqrt{\rho_{\w^\star}} s))]\,, \spacecase
	\delta^\ast &= \frac{\alpha}{\lambda \tau^\ast + \alpha -1} \EE [g \cdot \mP_{\tau^\ast}(\delta^\ast g+ \mu^\ast s \varphi_{\out^\star}(\sqrt{\rho_{\w^\star}} s))]\,,\spacecase
	(\delta^\ast)^2 &= \alpha \EE [\(\delta^\ast g + \mu^\ast s \varphi_{\out^\star}(\sqrt{\rho_{\w^\star}} s) - \mP_{\tau^\ast}(\delta^\ast g + \mu^\ast s \varphi_{\out^\star}(\sqrt{\rho_{\w^\star}} s)) \)^2] \,,
\label{appendix:fixed_point_equations_gordon}
\end{aligned}
\end{align}
where $\mM_\tau$ and $\mP_\tau$ denote the Moreau-Yosida regularization and the proximal map of a convex loss function $(y, z) \mapsto \ell(yz)$:
\begin{align*}
\mM_\tau(z) = \min_x \left\{\ell(x) + \frac{(x-z)^2}{2\tau}\right\}, \qquad \mP_\tau(z) = \underset{x}{\arg\,\min} \left\{\ell(x) + \frac{(x-z)^2}{2\tau}\right\}.
\end{align*}
\end{theorem}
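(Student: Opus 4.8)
The plan is to obtain this statement as a specialization of the regression result, Theorem~\ref{apppendix:thm:gordon_fixed_points:classification}, whose Gordon-based proof already yields the concentration $m \to \sqrt{\rho_{\w^\star}}\mu^\ast$, $q \to (\mu^\ast)^2 + (\delta^\ast)^2$ together with the min-max characterization \eqref{eq:pot_func_general} for a generic convex loss $l(y,z)$. First I would substitute the classification loss $l(y,z) = \ell(yz)$ and simplify the Moreau-Yosida term, exploiting that the labels $y = \varphi_{\out^\star}(\sqrt{\rho_{\w^\star}} s)$ are binary. The change of variables $u = yx$ in the defining minimization, using $y^2 = 1$ so that $(yx-z)^2 = (u-yz)^2$, gives
\begin{align*}
\mM_\tau[\ell(y\,\cdot\,)](z) = \min_x \Big\{ \ell(yx) + \frac{(x-z)^2}{2\tau} \Big\} = \min_u \Big\{ \ell(u) + \frac{(u - yz)^2}{2\tau} \Big\} = \mM_\tau(yz).
\end{align*}
With $z = \mu s + \delta g$ this equals $\mM_\tau(y\mu s + y\delta g)$, and since $g$ is a symmetric Gaussian independent of $s$ (hence of $y$), replacing $yg$ by $g$ under $\EE_{g,s}$ produces exactly the argument $\delta g + \mu s\,\varphi_{\out^\star}(\sqrt{\rho_{\w^\star}} s)$ appearing in \eqref{appendix:eq:pot_func}. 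This establishes the first half of the theorem with no further work.

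For the fixed point equations I would impose first-order stationarity on the objective in \eqref{appendix:eq:pot_func} at its saddle point $(\mu^\ast,\delta^\ast,\tau^\ast)$, abbreviating $\phi \equiv \varphi_{\out^\star}(\sqrt{\rho_{\w^\star}} s)$ and $\omega \equiv \delta^\ast g + \mu^\ast s\phi$. The computation rests on the envelope identities for the Moreau-Yosida regularization \eqref{appendix:definitions:moreau}, namely $\mM_\tau'(z) = (z - \mP_\tau(z))/\tau$ and $\partial_\tau \mM_\tau(z) = -(z - \mP_\tau(z))^2/(2\tau^2)$, which hold because $\mM_\tau$ is continuously differentiable even when $\ell$ is not. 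Differentiating in $\mu$, in $\delta$, and in $\tau$ then gives
\begin{align*}
\lambda \mu^\ast + \frac{\alpha}{\tau^\ast} \EE_{g,s}\big[ s\phi\,(\omega - \mP_{\tau^\ast}(\omega)) \big] &= 0, \\
\Big(\lambda - \frac{1}{\tau^\ast}\Big)\delta^\ast + \frac{\alpha}{\tau^\ast} \EE_{g,s}\big[ g\,(\omega - \mP_{\tau^\ast}(\omega)) \big] &= 0, \\
(\delta^\ast)^2 &= \alpha\,\EE_{g,s}\big[ (\omega - \mP_{\tau^\ast}(\omega))^2 \big],
\end{align*}
where the third line is already the last equation of \eqref{appendix:fixed_point_equations_gordon}.

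To bring the first two lines into the stated form I would evaluate their linear pieces in closed form: using $\phi^2 = 1$ together with the independence and zero mean of $g, s$, one has $\EE_{g,s}[s\phi\,\omega] = \mu^\ast$ and $\EE_{g,s}[g\,\omega] = \delta^\ast$. Substituting these collapses the $\mu$-condition to $\mu^\ast(\lambda\tau^\ast + \alpha) = \alpha\,\EE_{g,s}[s\phi\,\mP_{\tau^\ast}(\omega)]$ and the $\delta$-condition to $\delta^\ast(\lambda\tau^\ast + \alpha - 1) = \alpha\,\EE_{g,s}[g\,\mP_{\tau^\ast}(\omega)]$, which are precisely the first two equations of \eqref{appendix:fixed_point_equations_gordon}.

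I expect the genuine difficulty to be analytic rather than algebraic. One must check that the optimum $(\mu^\ast,\delta^\ast,\tau^\ast)$ lies in the interior of the feasible region (so that unconstrained stationarity applies), that it is unique --- which follows for $\lambda > 0$ as already argued in the proof of Theorem~\ref{apppendix:thm:gordon_fixed_points:classification} --- and that differentiation may be exchanged with $\EE_{g,s}$. The last point follows from local Lipschitz and domination bounds on $\mM_\tau$ and $\mP_\tau$ that hold uniformly on compact sets of $(\mu,\delta,\tau)$, so that dominated convergence justifies interchanging $\partial$ and $\EE_{g,s}$; the boundary cases $\delta^\ast = 0$ or $\mu^\ast = 0$, if they occur, would have to be treated separately but are excluded by the nondegeneracy of the solution for the losses of interest.
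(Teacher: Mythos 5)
Your proposal follows essentially the same route as the paper's proof: specialize the regression result of Theorem~\ref{apppendix:thm:gordon_fixed_points:classification} using $y^2=1$ to reduce $\mM_\tau[l(y,\cdot)](z)$ to $\mM_\tau(yz)$ (plus the sign-flip $yg\to g$ of the symmetric Gaussian), and then obtain the fixed-point equations by stationarity of the min-max objective in $(\mu,\delta,\tau)$ together with the envelope identities $\partial_z\mM_\tau(z)=(z-\mP_\tau(z))/\tau$ and $\partial_\tau\mM_\tau(z)=-(z-\mP_\tau(z))^2/(2\tau^2)$. Your algebra (in particular $\EE[s\phi\,\omega]=\mu^\ast$, $\EE[g\,\omega]=\delta^\ast$) is correct and in fact spells out steps the paper leaves implicit, and the analytic caveats you flag (interior optimum, interchange of $\partial$ and $\EE$) are likewise glossed over in the paper.
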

\begin{proof}
	\label{appendix:proof:gordon_classification}
We start by deriving \eqref{appendix:eq:pot_func} as a special case of \eqref{eq:pot_func_general}. To that end, we note that
\begin{align*}
\mM_{\tau}[l(y,.)](z) &= \min_x \left\{l(y; x) + \frac{(x-z)^2}{2\tau}\right\}\\
&= \min_x \left\{\ell(yx) + \frac{(x-z)^2}{2\tau}\right\}\\
&=\min_x \left\{\ell(x) + \frac{(x-yz)^2}{2\tau}\right\} = \mM_\tau(yz),
\end{align*}
where to reach the last equality we have used the fact that $y \in \{\pm 1\}$. Substituting this special form into \eqref{eq:pot_func_general} and recalling \eqref{eq:s2y}, we reach \eqref{appendix:eq:pot_func}.

Finally, to obtain the fixed point equations \eqref{appendix:fixed_point_equations_gordon}, we simply take the partial derivatives of the cost function in \eqref{appendix:eq:pot_func} with respect to $\mu, \delta, \tau$, and use the following well-known calculus rules for the Moreau-Yosida regularization \cite{hiriartUrruty1993}:
\begin{align*}
\frac{\partial \mM_\tau(z)}{\partial z} &= \frac{z - \mP_\tau(z)}{\tau}\,,\\
\frac{\partial \mM_\tau(z)}{\partial \tau} &= -\frac{(z - \mP_\tau(z))^2}{2\tau^2}.
\end{align*}
\end{proof}

\subsection{Replica's formulation}
\label{appendix:proof:replicas_formulation}
The replica computation presented in Sec.~\ref{appendix:replicas} boils down to the characterization of the overlaps $m,q$ in the high-dimensional limit $\nsamples, \ndim \to \infty$ with $\alpha = \frac{\nsamples}{\ndim} = \Theta(1)$, given by the solution of a set of, in the most general case, six fixed point equations over $m, q, Q, \hat{m}, \hat{q}, \hat{Q}$.  Introducing the natural variables $\Sigma \equiv Q - q$, $\hat{\Sigma} \equiv \hat{Q} +  \hat{q}$, $\eta \equiv \frac{m^2}{\rho_{\w^\star}q}$ and $\hat{\eta} \equiv \frac{\hat{m}^2}{\hat{q}}$, the set of fixed point equations for arbitrary $P_{\w^\star}, P_{\out^\star}$, convex loss $l(y,z)$ and regularizer $r(w)$, is finally given by
\begin{align}
\begin{aligned}
	m&= \EE_{\xi} \[ \mZ_{\w^\star} \(\sqrt{\hat{\eta}}  \xi , \hat{\eta} \) f_{\w^\star}\(\sqrt{\hat{\eta}}  \xi , \hat{\eta} \) f_{\w} \(  \hat{q}^{1/2}\xi  , \hat{\Sigma} \)   \]\,, \\
	q &= \EE_{\xi} \[ \mZ_{\w^\star}\( \sqrt{\hat{\eta}}  \xi , \hat{\eta}  \) f_{\w} \(  \hat{q}^{1/2}\xi  , \hat{\Sigma} \)^2   \]\,, \\
	\Sigma &=  \EE_{\xi} \[ \mZ_{\w^\star}\( \sqrt{\hat{\eta}}  \xi , \hat{\eta}  \)  \partial_\gamma f_{\w} \(  \hat{q}^{1/2}\xi  , \hat{\Sigma} \) \]\,, \\
	\hat{m} &= \alpha \EE_{y, \xi } \[ \mZ_{\out^\star}(.) \cdot f_{\out^\star} \(y,  \sqrt{\rho_{\w^\star} \eta} \xi, \rho_{\w^\star}\(1 - \eta\)\)   f_{\out} \( y,  q^{1/2}\xi, \Sigma \) \]\,, \\
	\hat{q} &= \alpha \EE_{y, \xi } \[ \mZ_{\out^\star} \( y,  \sqrt{\rho_{\w^\star} \eta} \xi, \rho_{\w^\star}\(1 - \eta\)  \)   f_{\out} \( y,  q^{1/2}\xi, \Sigma \)^2 \]\,, \\
	\hat{\Sigma} &=  - \alpha \EE_{y, \xi } \[ \mZ_{\out^\star} \( y,  \sqrt{\rho_{\w^\star} \eta} \xi, \rho_{\w^\star}\(1 - \eta\)  \)  \partial_\omega f_{\out} \( y,  q^{1/2}\xi, \Sigma \) \]\,.
\end{aligned}
\label{appendix:fixed_point_replicas}
\end{align}
The above equations depend on the Bayes-optimal partition functions $\mZ_{\w^\star}, \mZ_{\out^\star}$ defined in eq.~\eqref{appendix:definitions:update_bayes}, the updates $f_{\w^\star}$, $f_{\out^\star}$ in eq.~\eqref{appendix:definitions:update_functions_bayes} and the ERM updates $f_{\w}$, $f_{\out}$ eq.~\eqref{appendix:definitions:update_functions_map}.

\subsection{Equivalence Gordon-Replica's formulation - $\rL_2$ regularization and Gaussian weights}
\label{appendix:proof:equivalence_gordon_replicas_formulation_l2}
\subsubsection{Replica's formulation for $\rL_2$ regularization}
The proximal for the $\rL_2$ penalty with strength $\lambda$ can be computed explicitly in eq.~\eqref{appendix:definitions:application:L2} and the corresponding denoising function is simply given by $f_{\w}^{\rL_2, \lambda} \(\gamma , \Lambda \) =\frac{\gamma}{\lambda + \Lambda }$.
Therefore, for a Gaussian teacher \eqref{appendix:model:gaussian_weights} already considered in Thm.~\eqref{appendix:fixed_point_equations_gordon} with second moment $\rho_{\w^\star}$, using the denoising function \eqref{appendix:definitions:application:gaussian},
	the fixed point equations over $m,q,\Sigma$ can be computed analytically and lead to
\begin{align}
\begin{aligned}
	m &= \frac{\rho_{\w^\star}\hat{m}}{\lambda + \hat{\Sigma}}\,, \hhspace \hhspace 
	&& q = \frac{\rho_{\w^\star}\hat{m}^2+\hat{q}}{(\lambda + \hat{\Sigma})^2}\,, \hhspace \hhspace 
	&& \Sigma =   \frac{1}{\lambda + \hat{\Sigma}}\,.
\end{aligned}
\end{align}	
Hence, removing the \emph{hat} variables in eqs.~\eqref{appendix:fixed_point_replicas}, the set of fixed point equations can be rewritten in a more compact way leading to the Corollary.~\ref{main:corollary:equivalence_gordon_replicas_formulation_l2} that we recall here:

\begin{corollary}[Corollary.~\ref{main:corollary:equivalence_gordon_replicas_formulation_l2} in the main text. Equivalence Gordon-Replicas]
\label{appendix:corollary:equivalence_gordon_replicas_formulation_l2}

The set of fixed point equations \eqref{appendix:fixed_point_equations_gordon} in Thm.~\ref{appendix:thm:gordon_fixed_points:classification} that govern the asymptotic behaviour of the overlaps $m$ and $q$ is equivalent to the following set of equations, obtained from the heuristic replica computation:
\begin{align}
	m &= \alpha \Sigma \rho_{\w^\star} \cdot \EE_{y, \xi } \[ \mZ_{\out^\star}\(.\) \cdot f_{\out^\star} \(y,  \sqrt{\rho_{\w^\star}\eta} \xi, \rho_{\w^\star}\(1 - \eta\)\) \cdot f_{\out} \( y,  q^{1/2}\xi, \Sigma \)    \]  \nonumber \\
	q &= m^2/\rho_{\w^\star} + \alpha \Sigma^2 \cdot \EE_{y, \xi } \[ \mZ_{\out^\star} \( y,  \sqrt{\rho_{\w^\star}\eta} \xi, \rho_{\w^\star}\(1 - \eta\)  \)  \cdot f_{\out} \( y,  q^{1/2}\xi, \Sigma \)^2    \]   \label{appendix:fixed_point_equations_replicas} \\
	\Sigma &=   \(\lambda - \alpha \cdot \EE_{y, \xi } \[ \mZ_{\out^\star} \( y,  \sqrt{\rho_{\w^\star}\eta} \xi, \rho_{\w^\star}\(1 - \eta\)  \) \cdot  \partial_\omega f_{\out} \( y,  q^{1/2}\xi, \Sigma \)    \] \)^{-1} \nonumber
\end{align}
with $\eta \equiv \frac{m^2}{\rho_{\w^\star}q}$, $\xi \sim \mN(0,1)$ and $\EE_y$ the continuous or discrete sum over all possible values $y$ according to $P_{\out^\star}$.
\end{corollary}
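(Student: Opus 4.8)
The plan is to show that the stationarity conditions of the Gordon potential \eqref{appendix:eq:pot_func} — equivalently the fixed-point equations \eqref{appendix:fixed_point_equations_gordon} — can be transformed into the replica equations \eqref{appendix:fixed_point_equations_replicas} by a Gaussian change of variables combined with repeated Gaussian integration by parts (Stein's identity). The one genuinely new object that must be manufactured along the way is the teacher score $f_{\out^\star}$, which is absent from the Gordon side; it will be produced by integrating by parts against the teacher-specific noise. First I would differentiate the potential $\Psi(\mu,\delta,\tau)$ in \eqref{appendix:eq:pot_func}, written in the general regression form with student field $\omega=\mu s+\delta g$ and label $y=\varphi_{\out^\star}(\sqrt{\rho_{\w^\star}}s)$, using the Moreau--Yosida rules $\partial_\omega\mM_\tau[l(y,.)](\omega)=-f_{\out}(y,\omega,\tau)$ and $\partial_\tau\mM_\tau[l(y,.)](\omega)=-\tfrac12 f_{\out}(y,\omega,\tau)^2$. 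Setting $\partial_\mu\Psi=\partial_\delta\Psi=\partial_\tau\Psi=0$ gives
\begin{gather*}
\lambda\mu^\ast = \alpha\,\EE_{g,s}[\,s\,f_{\out}(y,\omega,\tau^\ast)\,], \qquad \left(\lambda-\tfrac{1}{\tau^\ast}\right)\delta^\ast = \alpha\,\EE_{g,s}[\,g\,f_{\out}(y,\omega,\tau^\ast)\,], \\
(\delta^\ast)^2 = \alpha(\tau^\ast)^2\,\EE_{g,s}[f_{\out}(y,\omega,\tau^\ast)^2].
\end{gather*}
These coincide with \eqref{appendix:fixed_point_equations_gordon} through $f_{\out}(y,\omega,V)=V^{-1}(\mP_V[l(y,.)](\omega)-\omega)$ and the identity $\mP_V[l(y,.)](\omega)=y\,\mP_V(y\omega)$ (valid since $y^2=1$), which swaps the regression and classification proximals.

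Second, I would set up the Gaussian decomposition. The teacher and student fields $h^\star=\sqrt{\rho_{\w^\star}}s$ and $\omega=\mu^\ast s+\delta^\ast g$ are jointly Gaussian with variances $\rho_{\w^\star}$ and $q=(\mu^\ast)^2+(\delta^\ast)^2$ and covariance $m=\sqrt{\rho_{\w^\star}}\mu^\ast$, so their squared correlation is exactly $\eta=m^2/(\rho_{\w^\star}q)$. Writing $\xi=\omega/\sqrt{q}$, which is standard normal, I would decompose $h^\star=\sqrt{\rho_{\w^\star}\eta}\,\xi+\sqrt{\rho_{\w^\star}(1-\eta)}\,t$ with $t\sim\mN(0,1)$ independent of $\xi$, so that $s=\sqrt{\eta}\,\xi+\sqrt{1-\eta}\,t$. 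Integrating out $t$ and the channel noise, the definition $\mZ_{\out^\star}(y,\omega,V)=\EE_z[P_{\out^\star}(y|\sqrt{V}z+\omega)]$ yields, for any $F$ depending only on $(y,\omega)$,
\begin{equation*}
\EE_{g,s}[F(y,\omega)] = \EE_{y,\xi}\big[\,\mZ_{\out^\star}(y,\sqrt{\rho_{\w^\star}\eta}\,\xi,\rho_{\w^\star}(1-\eta))\,F(y,\sqrt{q}\,\xi)\,\big].
\end{equation*}
Applied to $F=f_{\out}^2$, together with $q=(\mu^\ast)^2+(\delta^\ast)^2=m^2/\rho_{\w^\star}+(\delta^\ast)^2$, this turns the third stationarity equation directly into the replica $q$-equation; applied to $F=\partial_\omega f_{\out}$ it produces the quantity $B\equiv\EE_{y,\xi}[\mZ_{\out^\star}\,\partial_\omega f_{\out}]$ that features in the $\Sigma$-equation.

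Third I would handle the remaining two equations by Stein's identity. For the $\delta^\ast$ equation, $y$ is independent of $g$ and $\partial_g f_{\out}=\delta^\ast\partial_\omega f_{\out}$, so integrating by parts in $g$ gives $\EE_{g,s}[g f_{\out}]=\delta^\ast B$; hence $\lambda-1/\tau^\ast=\alpha B$, i.e. $\tau^\ast=\Sigma=(\lambda-\alpha B)^{-1}$, which is simultaneously the identification $\tau^\ast=\Sigma$ and the replica $\Sigma$-equation. The $\mu^\ast$ equation is the crux. Since $s=\sqrt{\eta}\,\xi+\sqrt{1-\eta}\,t$ couples to the student field through $\xi$ and to the teacher noise through $t$, I would split $\EE_{g,s}[s f_{\out}]$ into a $t$-part and a $\xi$-part. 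The $t$-part is treated by applying Stein in $t$ at the level of $P_{\out^\star}(y|h^\star)$ — not on $f_{\out}$, which depends on $t$ only through the discrete label $y$ — converting $\EE_t[t\,P_{\out^\star}]$ into $\sqrt{\rho_{\w^\star}(1-\eta)}\,\partial_\omega\mZ_{\out^\star}=\sqrt{\rho_{\w^\star}(1-\eta)}\,\mZ_{\out^\star}f_{\out^\star}$, which is exactly where $f_{\out^\star}$ is born. The $\xi$-part uses Stein in $\xi$ and produces one further $f_{\out^\star}$ term plus a $\partial_\omega f_{\out}$ (hence $B$) term. Collecting everything and using $\sqrt{\eta q}=\mu^\ast$ gives $\EE_{g,s}[s f_{\out}]=\sqrt{\rho_{\w^\star}}\,A+\mu^\ast B$ with $A\equiv\EE_{y,\xi}[\mZ_{\out^\star}f_{\out^\star}f_{\out}]$; substituting into $\lambda\mu^\ast=\alpha\,\EE_{g,s}[s f_{\out}]$ and using $\lambda-\alpha B=\Sigma^{-1}$ yields $m=\sqrt{\rho_{\w^\star}}\mu^\ast=\alpha\Sigma\rho_{\w^\star}A$, the replica $m$-equation.

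The main obstacle will be precisely this $m$-equation. Unlike the other two, the variable $s$ is not a function of $(y,\omega)$ alone, so the simple marginalization formula of Step 2 does not apply; and because the label is discrete one cannot differentiate $f_{\out}$ in $t$, so the score $f_{\out^\star}$ has to be created by integrating by parts at the level of the channel likelihood $P_{\out^\star}$ rather than of the denoiser. The delicate point is that the $\xi$-Stein step generates a spurious $\partial_\omega f_{\out}$ contribution that must recombine exactly with the $\Sigma$-equation, and that the $\eta$- and $(1-\eta)$-weighted $f_{\out^\star}$ contributions must sum to a clean $\sqrt{\rho_{\w^\star}}A$; keeping careful track of the factors $\sqrt{\eta}$, $\sqrt{1-\eta}$ and $\sqrt{q}$ is what makes the coefficients collapse to the stated replica form.
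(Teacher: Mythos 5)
Your proposal is correct, and at bottom it performs the same computation as the paper, only run in the opposite direction: the paper starts from the replica equations, inserts the Gaussian-integral representation of $\mZ_{\out^\star}$ and of $f_{\out^\star}\mZ_{\out^\star}=\partial_\omega\mZ_{\out^\star}$, and rotates $(\xi,z)\to(g,s)$ to land on the Gordon equations, whereas you start from the stationarity conditions of the Gordon potential, decompose $s=\sqrt{\eta}\,\xi+\sqrt{1-\eta}\,t$ with $\xi=\omega/\sqrt{q}$ (the inverse rotation), marginalize $t$ to manufacture $\mZ_{\out^\star}$, and apply Stein's identity at the level of the likelihood to manufacture $f_{\out^\star}$. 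The essential ingredients are identical in both: the overlap $\eta$ as the squared correlation of teacher and student fields, Gaussian integration by parts producing both $f_{\out^\star}$ and the $\partial_\omega f_{\out}$ coupling, the cross-use of the $\delta$-equation inside the $\mu$-equation, and the gauge identity $\mP_V[l(y,.)](\omega)=y\,\mP_V(y\omega)$ for $y=\pm1$ (the paper's ``gauge transformation''). What your direction buys is cleaner bookkeeping: you work with the raw stationarity conditions $\lambda\mu^\ast=\alpha\,\EE[s f_{\out}]$, $(\lambda-1/\tau^\ast)\delta^\ast=\alpha\,\EE[g f_{\out}]$, $(\delta^\ast)^2=\alpha(\tau^\ast)^2\EE[f_{\out}^2]$ rather than the rearranged form \eqref{appendix:fixed_point_equations_gordon}, so the $q$- and $\Sigma$-equations follow immediately from the marginalization formula and a single Stein step in $g$, and the only genuinely delicate point (the birth of $f_{\out^\star}$ in the $m$-equation) is isolated rather than distributed across the paper's longer chain of substitutions. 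Your final accounting $\EE[s f_{\out}]=\sqrt{\rho_{\w^\star}}\,A+\mu^\ast B$ with $A=\EE_{y,\xi}[\mZ_{\out^\star}f_{\out^\star}f_{\out}]$ and $B=\EE_{y,\xi}[\mZ_{\out^\star}\partial_\omega f_{\out}]$ is exactly right: the $\eta$- and $(1-\eta)$-weighted contributions to $A$ do sum to one, and $(\lambda-\alpha B)^{-1}=\tau^\ast=\Sigma$ closes the $m$-equation.
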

\begin{proof}[Proof of Corollary.~\ref{appendix:corollary:equivalence_gordon_replicas_formulation_l2}(Corollary.~\ref{main:corollary:equivalence_gordon_replicas_formulation_l2})]
For the sake of clarity, we use the abusive notation $\mP_V(y, \omega) = \mP_{V}[l(y,.)](\omega)$, and we remove the $\ast$.
\item 
\paragraph{Dictionary} 		
We first map the Gordon's parameters ($\mu, \delta, \tau)$ in eq.~\eqref{appendix:fixed_point_equations_gordon} to ($m,q,\Sigma$) in eq.~\eqref{appendix:fixed_point_equations_replicas}:
\begin{align*}
	&\sqrt{\rho_{\w^\star}} \mu \leftrightarrow m\,, && \mu^2 + \delta^2 \leftrightarrow q \,,  && \tau \leftrightarrow \Sigma\,.
\end{align*}
so that 
\begin{align*}
	\eta &= \frac{m^2}{\rho_{\w^\star}q} = \frac{\mu^2}{\mu^2 + \delta^2}\,, && 1-\eta = \frac{\delta^2}{\mu^2 + \delta^2}\,.
\end{align*}
From eq.~\eqref{appendix:definitions:update_bayes}, we can rewrite the channel partition function $\mZ_{\out^\star}$ and its derivative 
\begin{align}
\begin{aligned}
	\mZ_{\out^\star}\(y, \omega, V\) &= \EE_{z} \[ P_{\out^\star}\(y| \sqrt{V} z + \omega \)  \],\\
	\partial_\omega \mZ_{\out^\star}\(y, \omega, V\) &= \frac{1}{\sqrt{V}} \EE_{z} \[ z P_{\out^\star}\(y| \sqrt{V} z + \omega \)  \]\,,
\end{aligned}
\label{appendix:proof:channel_reformulation}
\end{align}
where $z$ denotes a standard normal random variable. 
\item
\paragraph{Equation over $m$}
Let us start with the equation over $m$ in eq.~\eqref{appendix:fixed_point_equations_replicas}:
\begin{align*}
	m &= \Sigma \alpha \rho_{\w^\star} \EE_{y, \xi } \[ \mZ_{\out^\star} \(y,  \sqrt{\rho_{\w^\star}\eta} \xi, \rho_{\w^\star}\(1 - \eta\)\) f_{\out^\star} \(y,  \sqrt{\rho_{\w^\star}\eta} \xi, \rho_{\w^\star}\(1 - \eta\)\)  \right. \\
	&  \qquad \qquad \qquad \qquad \qquad \qquad \qquad \qquad \qquad \qquad \qquad \qquad \qquad \qquad  \left. \times  f_{\out} \( y,  q^{1/2}\xi, \Sigma \)    \]\\
	&= \Sigma \alpha \frac{\sqrt{\rho_{\w^\star}}}{\sqrt{1-\eta}}  \EE_{y, \xi, z} \[  z  P_{\out^\star}\(y| \sqrt{\rho_{\w^\star}} \(\sqrt{1-\eta} z + \sqrt{\eta} \xi\) \)   \Sigma^{-1} ( \mP_{\Sigma} \(y, \sqrt{q}\xi\) - \sqrt{q}\xi) \] \tag{Using eq.~\eqref{appendix:proof:channel_reformulation}} \\
	\Leftrightarrow  \mu  &= \frac{\sqrt{\mu^2 + \delta^2}}{\delta}  \alpha \EE_{y, \xi, z} \[  z  P_{\out^\star}\[y \vert \sqrt{\rho_{\w^\star}}\frac{\delta z + \mu \xi}{\sqrt{\mu^2 + \delta^2}}  \]  \( \mP_{\tau} \(y, \sqrt{\mu^2 + \delta^2}\xi\) - \sqrt{\mu^2 + \delta^2}\xi \) \] \tag{Dictionary} \\
	&= \frac{\sqrt{\mu^2 + \delta^2}}{\delta} \alpha \EE_{\xi, z} \[  z \( \mP_{\tau} \( \varphi_{\out^\star}\(\sqrt{\rho_{\w^\star}}\frac{ \delta z + \mu \xi}{\sqrt{\mu^2 + \delta^2}} \), \sqrt{\mu^2 + \delta^2}\xi\) - \sqrt{\mu^2 + \delta^2}\xi\)     \] \tag{Integration over $y$}\\
	&= \alpha \EE_{s, g} \[ \(s- \frac{\mu}{\delta} g \) \( \mP_{\tau} \( \varphi_{\out^\star}\(\sqrt{\rho_{\w^\star}} s \), \delta g + \mu s\)  - (\delta g + \mu s)\)      \] \tag{Change of variables $(\xi, z) \to (g, s)$} \\
	&= \alpha \EE_{s, g} \[ \(s - \frac{\mu}{\delta} g \) \( \mP_{\tau} \( \varphi_{\out^\star}\(\sqrt{\rho_{\w^\star}} s \), \delta g + \mu s\) \)  \]\tag{Gaussian integrations} \\
	\Leftrightarrow  \mu &=  \frac{\alpha \EE_{s, g} \[ s \cdot   \mP_{\tau} \( \varphi_{\out^\star}\(\sqrt{\rho_{\w^\star}}  s \), \delta g + \mu s\)  \] }{1 + \frac{\alpha }{\delta}  \EE_{s, g} \[ g \cdot \mP_{\tau} \( \varphi_{\out^\star}\(\sqrt{\rho_{\w^\star}}  s \), \delta g + \mu s\)       \]  } \\
	&= \frac{\alpha}{ \lambda \tau + \alpha } \EE_{s, g} \[ s \cdot \varphi_{\out^\star}\(\sqrt{\rho_{\w^\star}} s \)  \( \mP_{\tau} \( \delta g + \mu s\) \varphi_{\out^\star}\(\sqrt{\rho_{\w^\star}} s \) \)    \]\tag{Second fixed point equation}\,,
\end{align*}
where we used the fact that $P_{\out^\star}\(y|z\) = \delta(y-\varphi_{\out^\star}(z))$,  the change of variables
\begin{align}
\begin{cases}
	s = \frac{\mu \xi + \delta z}{\sqrt{\mu^2 + \delta^2}} \spacecase
	g = \frac{\delta \xi - \mu z}{\sqrt{\mu^2 + \delta^2}}
\end{cases}	
\Leftrightarrow
\begin{cases}
	\xi = \frac{\delta g+ \mu s}{\sqrt{\mu^2+\delta^2}} \spacecase
	z=  \frac{\delta s -\mu g}{\sqrt{\mu^2+\delta^2}}
\end{cases}\,,
\end{align}
and finally in the last equality the definition of the second fixed point equation in eqs.~\eqref{appendix:fixed_point_equations_gordon}:
\begin{align}
	\delta = \alpha \frac{\EE_{s, g} \[ g  \cdot \mP_{\tau} \( \varphi_{\out^\star}\(\sqrt{\rho_{\w^\star}} s \), \delta g + \mu s\)  \]}{\lambda \tau + \alpha - 1}\,.
	\label{appendix:equivalence:Sigma}
\end{align}
\item
\paragraph{Equation over $q$}
Let us now compute the equation over $q$ in eq.~\eqref{appendix:fixed_point_equations_replicas}:
\begin{align*}
	q- m^2/\rho_{\w^\star} &= \Sigma^2 \alpha \EE_{y, \xi } \[ \mZ_{\out^\star} \( y,  \sqrt{\rho_{\w^\star}\eta} \xi,\rho_{\w^\star}\(1 - \eta\)  \)   f_{\out} \( y,  q^{1/2}\xi, \Sigma \)^2    \] \\
	&= \Sigma^2 \alpha \EE_{y, \xi, z } \[ P_{\out^\star}\(y| \sqrt{\rho_{\w^\star}}\(\sqrt{1-\eta} z + \sqrt{\eta}\xi\) \)  \frac{1}{\Sigma^2} \(p_{\Sigma}\(y, \sqrt{q}\xi\)  - \sqrt{q}\xi  \)^2 \] \tag{Using eq.~\eqref{appendix:proof:channel_reformulation}}\\
\Leftrightarrow \delta^2 &=  \alpha \EE_{y, \xi, z } \[ P_{\out^\star}\(y| \sqrt{\rho_{\w^\star}} \frac{\delta z + \mu \xi}{\sqrt{\mu^2 + \delta^2}} \)  \(p_{\tau}\(y, \sqrt{\mu^2 + \delta^2}\xi\)  - \sqrt{\mu^2 + \delta^2}\xi  \)^2 \] \tag{Dictionary} \\	
	&=  \alpha \EE_{\xi, z} \[ \(p_{\tau}\(\varphi_{\out^\star}\(\sqrt{\rho_{\w^\star}}\frac{ \delta z + \mu \xi}{\sqrt{\mu^2 + \delta^2}}\), \sqrt{\mu^2 + \delta^2}\xi \)  - \sqrt{\mu^2 + \delta^2}\xi  \)^2 \]\tag{Integration over $y$} \\
	&= \alpha \EE_{g, s} \[ \(p_{\tau}\(\varphi_{\out^\star}\(\sqrt{\rho_{\w^\star}}s\), \delta g + \mu s \)  - \(\delta g + \mu s\)  \)^2 \] \tag{Change of variables $(\xi, z) \to (g, s)$} 
\end{align*}

\item
\paragraph{Equation over $\Sigma$}
Let us conclude with the equation over $\Sigma$ in eq.~\eqref{appendix:fixed_point_equations_replicas} that we encountered in eq.~\eqref{appendix:equivalence:Sigma}. Let us first compute
\begin{align*}
&\alpha \EE_{y, \xi } \[ \mZ_{\out^\star} \( y,  \sqrt{\rho_{\w^\star}\eta} \xi, \rho_{\w^\star}\(1 - \eta\)  \)  \partial_\omega f_{\out} \( y,  q^{1/2}\xi, \Sigma \)    \]\\
&= \alpha \EE_{y, \xi, z } \[ P_{\out^\star}\(y| \sqrt{\rho_{\w^\star}}\(\sqrt{1-\eta} z + \sqrt{\eta}\xi \)\)  \frac{1}{\Sigma} \( \partial_\omega p_{\Sigma}\(y, \sqrt{q}\xi\)  - 1  \) \] \tag{Using eq.~\eqref{appendix:proof:channel_reformulation}}\\
	&=  \frac{\alpha}{\tau}  \EE_{y, \xi, z } \[ P_{\out^\star}\(y|  \sqrt{\rho_{\w^\star}}\frac{\delta z + \mu \xi}{\sqrt{\mu^2 + \delta^2}}\)  \(\partial_\omega \mP_{\tau}\(y, \sqrt{\mu^2 + \delta^2}\xi\)  - 1  \) \] \tag{Dictionary} \\	
	&=  \frac{\alpha}{\tau} \EE_{\xi, z} \[ \partial_\omega \mP_{\tau}\(\varphi_{\out^\star}\( \sqrt{\rho_{\w^\star}}\frac{\delta z + \mu \xi}{\sqrt{\mu^2 + \delta^2}}\), \sqrt{\mu^2 + \delta^2}\xi \)    \] - \frac{\alpha}{\tau} \tag{Integration over $y$} \\
	&= \frac{1}{\tau} \alpha \( \EE_{g, s} \[ \partial_\omega \mP_{\tau}\(\varphi_{\out^\star}\( \sqrt{\rho_{\w^\star}}s\), \delta g + \mu s \)  \] -1 \)  \tag{Change of variables $(\xi, z) \to (g, s)$} 
\end{align*}
therefore, the last equation over $\Sigma$ in eq.~\eqref{appendix:fixed_point_equations_replicas}  reads
\begin{align*} 
	\Sigma &=  \( \lambda - \alpha \EE_{y, \xi } \[ \mZ_{\out^\star} \( y,  \sqrt{\rho_{\w^\star} \eta} \xi, \rho_{\w^\star}\(1 - \eta\)  \)  \partial_\omega f_{\out} \( y,  q^{1/2}\xi, \Sigma \)    \] \)^{-1}\\
	& \Leftrightarrow\\
	\tau &= \(\lambda - \frac{1}{\tau} \alpha \( \EE_{g, s} \[ \partial_\omega \mP_{\tau}\(\varphi_{\out^\star}\(\sqrt{\rho_{\w^\star}}s\), \delta g + \mu s \)  \] -1 \)  \)^{-1} \\
	&\Leftrightarrow\\
	\alpha & \EE_{g, s} \[ \partial_\omega \mP_{\tau}\(\varphi_{\out^\star}\(\sqrt{\rho_{\w^\star}}s\), \delta g + \mu s \)  \]  = \tau \lambda + \alpha -1\,.
\end{align*}
Noting that 
\begin{align*}
	&\EE_{g, s} \[ \partial_\omega \mP_{\tau}\(\varphi_{\out^\star}\(\sqrt{\rho_{\w^\star}}s\), \delta g + \mu s \) \] = \frac{1}{\delta} \EE_{g, s} \[ d \partial_\omega \mP_{\tau}\(\varphi_{\out^\star}\(\sqrt{\rho_{\w^\star}}s\), \delta g + \mu s  \) \]\\
	&= \frac{1}{\delta} \EE_{g, s} \[ \partial_g \mP_{\tau}\(\varphi_{\out^\star}\(\sqrt{\rho_{\w^\star}}s\), \delta g + \mu s \) \] = \frac{1}{\delta} \EE_{g, s} \[ g \mP_{\tau}\(\delta g + \mu s\varphi_{\out^\star}\(\sqrt{\rho_{\w^\star}}s\) \) \] \tag{Stein's lemma}
\end{align*}
where we used the Stein's lemma in the last equality, we finally obtain
\begin{align*}
	&\alpha  \EE_{g, s} \[ \partial_\omega \mP_{\tau}\(\varphi_{\out^\star}\(\sqrt{\rho_{\w^\star}}s\), \delta g + \mu s \)  \]  = \tau \lambda + \alpha -1\\
	\Leftrightarrow & \delta =\frac{\alpha}{\tau \lambda + \alpha -1 } \EE_{g, s} \[ g \cdot \mP_{\tau}\(\varphi_{\out^\star}\(\sqrt{\rho_{\w^\star}}s\), \delta g + \mu s \) \]\,.
\end{align*}
\item
\paragraph{Gauge transformation}
We still remain to prove that
\begin{align}
\begin{aligned}
	\EE_{s, g} \[ g  \cdot \mP_{\tau} \( \varphi_{\out^\star}\( \sqrt{\rho_{\w^\star}} s \), \delta g + \mu s\)  \]&= \EE_{s, g} \[ g  \cdot \mP_{\tau} \( \delta g + \mu s \varphi_{\out^\star}\(\sqrt{\rho_{\w^\star}} s \)\)  \]\\
	\EE_{s, g} \[ s \cdot  \mP_{\tau} \( \varphi_{\out^\star}\(\sqrt{\rho_{\w^\star}} s \), \delta g + \mu s\)  \] &= \EE_{s, g} \[ s \cdot  \mP_{\tau} \( \delta g + \mu s \varphi_{\out^\star}\(\sqrt{\rho_{\w^\star}} s \) \)  \]\\
	\EE_{g, s} \[ \(p_{\tau}\(\varphi_{\out^\star}\( \sqrt{\rho_{\w^\star}}s\), \delta g + \mu s \)  - \(\delta g + \mu s\)  \)^2 \] &= \EE_{g, s} \[ \right. \\
	& \qquad \left.  \(\(p_{\tau} - \id \)\(\delta g + \mu s \varphi_{\out^\star}\(\sqrt{\rho_{\w^\star}}s\) \) \)^2 \]   
\end{aligned}
\end{align}
As $\varphi_{\out^\star}\(\sqrt{\rho_{\w^\star}} s \) = \pm 1$, we can transform $s \to s \varphi_{\out^\star}\(\sqrt{\rho_{\w^\star}} s \)=\td{s}$. It does not change the distribution of the random variable $\td{s}$ that is still a normal random variable. Finally denoting 
$\mP_{\tau}\(1, \delta g + \mu s \varphi_{\out^\star}\(\sqrt{\rho_{\w^\star}} s \) \) = \mP_{\tau}\(\delta g + \mu s \varphi_{\out^\star}\(\sqrt{\rho_{\w^\star}} s \) \)$, we obtain the equivalence with eq.~\eqref{appendix:fixed_point_equations_gordon}, which concludes the proof.
\end{proof}

\newpage
\section{Replica computation for Bayes-optimal and ERM estimations}
\label{appendix:replicas}

In this section, we present the statistical physics framework and the replica computation leading to the general set of fixed point equations \eqref{main:fixed_point_equations_replicas} and to the Bayes-optimal fixed point equations \eqref{main:fixed_point_equations_bayes}. 
\subsection{Statistical inference and free entropy}
\label{appendix:bayesian}
 
As stressed in Sec.~\ref{appendix:definitions}, both ERM and Bayes-optimal estimations can be analyzed in a unified framework that consists in studying the joint distribution $\bbP\(\vec{y}, \mat{X}\)$ in the following posterior distribution
\begin{align}
	\bbP\(\vec{w} | \vec{y}, \mat{X}\)  &= \frac{\bbP\(\vec{y} | \vec{w}, \mat{X} \) \bbP\(\vec{w}\) }{\bbP\(\vec{y},\mat{X}\)}   \,,
	\label{appendix:bayes_formula}
\end{align}
known as the so-called \emph{partition function} in the physics literature. It is the generating function of many useful statistical quantities and is defined by 
\begin{align}
\begin{aligned}
\mZ\(\vec{y}, \mat{X} \) &\equiv P\(\vec{y},\mat{X}\) = \int_{\bbR^\ndim} \d\vec{w} P_{\out} \(\vec{y} | \vec{w}, \mat{X} \) P_{\w}\(\vec{w}\) \spacecase
	&= \int_{\bbR^\nsamples} \d\vec{z} P_{\out}\(\vec{y} | \vec{z} \) \int_{\bbR^\ndim} \d\vec{w} P_{\w}\(\vec{w}\) \delta\(\vec{z} - \frac{1}{\sqrt{\ndim}} \mat{X} \vec{w}\)\,,
\end{aligned}
\end{align}
where we introduced the variable $\vec{z} = \frac{1}{\sqrt{\ndim}} \mat{X} \vec{w}$.
However in the considered high-dimensional regime ($\ndim \to \infty, \nsamples \to\infty, \alpha = \Theta(1)$), we are interested instead in the \emph{averaged} (over instances of input data $\mat{X}$ and teacher weights $\vec{w}^\star$ or equivalently over the output labels $\vec{y}$) \emph{free entropy} $\Phi$ defined as
\begin{align}
	\Phi(\alpha) \equiv \EE_{\vec{y},\mat{X}} \[ \lim_{\ndim \to \infty} \frac{1}{\ndim} \log  \mZ\(\vec{y}, \mat{X}\) \]\,.
	\label{appendix:free_entropy}
\end{align}
The replica method is an heuristic method of statistical mechanics that allows to compute the above average over the random dataset $\{\vec{y}, \mat{X}\}$. We show in the next section the classical computation for the Generalized Linear Model hypothesis class and \iid data $\mat{X}$.

\subsection{Replica computation}
\label{appendix:replicas_computation}

\subsubsection{Derivation}
We present here the replica computation of the averaged free entropy $\Phi(\alpha)$ in eq.~\eqref{appendix:free_entropy} for general prior distributions $P_{\w},P_{\w^\star}$ and channel distributions $P_{\out},P_{\out^\star}$, so that the computation remain valid for both Bayes-optimal and ERM estimation (with any convex loss $l$ and regularizer $r$).

\paragraph{Replica trick}
\label{appendix:sec:replicas:replica_trick}
The average in eq.~\eqref{appendix:free_entropy} is intractable in general, and the computation relies on the so called \emph{replica trick} that consists in applying the identity  
\begin{align}
	\EE_{\vec{y},\mat{X}} \[ \lim_{\ndim \to \infty} \frac{1}{\ndim} \log  \mZ\(\vec{y}, \mat{X}\) \] =  \lim_{r \to 0} \[ \lim_{\ndim \to \infty}  \frac{1}{\ndim}  \frac{\partial \log \EE_{\vec{y},\mat{X}} \[  \mZ\(\vec{y}, \mat{X}\)^r\] }{\partial r} \]\,.
	\label{appendix:replicas:replica_trick}
\end{align}
This is interesting in the sense that it reduces the intractable average to the computation of the moments of the averaged partition function, which are easiest quantities to compute. Note that for $r \in \bbN$, $\mZ\(\vec{y}, \mat{X}\)^r$ represents the partition function of $r \in \mathbb{N}$ identical non-interacting copies of the initial system, called \emph{replicas}. Taking the average will then correlate the replicas, before taking the number of replicas $r\to 0$.
Therefore, we assume there exists an analytical continuation so that $r\in \bbR$ and the limit is well defined. Finally, note we exchanged the order of the limits $r \to 0$ and $\ndim \to \infty$. These technicalities are crucial points but are not rigorously justified and we will ignore them in the rest of the computation.

Thus the replicated partition function in eq.~\eqref{appendix:replicas:replica_trick} can be written as 
\begin{align}
\begin{aligned}
\EE_{\vec{y},\mat{X}} \[  \mZ\(\vec{y}, \mat{X}\)^r \] &=  \EE_{\vec{w}^\star,\mat{X}} \[  \prod_{a=1}^r \int_{\bbR^\nsamples} \d\vec{z}^a P_{\out^a}\(\vec{y} | \vec{z}^a \) \int_{\bbR^\ndim} \d\vec{w}^a P_{\w^a}\(\vec{w}^a\) \delta\(\vec{z}^a - \frac{1}{\sqrt{\ndim}}\mat{X} \vec{w}^a\)\] \\ 
&= \EE_{\mat{X}} \int_{\bbR^\nsamples} \d \vec{y} \int_{\bbR^\nsamples} \d\vec{z}^\star P_{\out^\star}\(\vec{y} | \vec{z}^\star \)  \int_{\bbR^\ndim} \d\vec{w}^\star P_{\w^\star}\(\vec{w}^\star\) \delta\(\vec{z}^\star - \frac{1}{\sqrt{\ndim}}\mat{X} \vec{w}^\star\) \\
& \hspace{1cm}  \times \[  \prod_{a=1}^r \int_{\bbR^\nsamples} \d\vec{z}^a P_{\out^a}\(\vec{y} | \vec{z}^a \) \int_{\bbR^\ndim} \d\vec{w}^a P_{\w^a}\(\vec{w}^a\) \delta\(\vec{z}^a - \frac{1}{\sqrt{\ndim}}\mat{X} \vec{w}^a\)\]  \\
&= \EE_{\mat{X}} \int_{\bbR^\nsamples} \d \vec{y}  \prod_{a=0}^r \int_{\bbR^\nsamples} \d\vec{z}^a P_{\out^a}\(\vec{y} | \vec{z}^a \) \int_{\bbR^\ndim} \d\vec{w}^a P_{\w^a}\(\vec{w}^a\) \delta\(\vec{z}^a - \frac{1}{\sqrt{\ndim}}\mat{X} \vec{w}^a\)
\label{appendix:average_Zn}
\end{aligned}
\end{align}
with the decoupled channel $P_{\out}\(\vec{y} | \vec{z} \) = \displaystyle \prod_{\mu=1}^\nsamples P_{\out}\(y_{\mu} | z_\mu \)$.
Note that the average over $\vec{y}$ is equivalent to the one over the ground truth vector $\vec{w}^\star$, which can be considered as a new replica $\vec{w}^0$ with index $a=0$ leading to a total of $r+1$ replicas.

We suppose that inputs are drawn from an \iid distribution, for example a Gaussian $\mathcal{N}\left(0,1\right)$. More precisely, for $i,j\in [1:\ndim]$, $\mu,\nu \in [1:\nsamples]$, $\mathbb{E}_\mat{X} \[ x_{i}^{(\mu)} x_{j}^{(\nu)} \] =  \delta_{\mu\nu} \delta_{ij}$. Hence $z_{\mu}^a =\frac{1}{\sqrt{\ndim}} \sum_{i=1}^\ndim x^{(\mu)}_{i} w_i^a$ is the sum of \iid random variables. The central limit theorem insures that $z_{\mu}^a \sim \mathcal{N}\left(\mathbb{E}_{\mat{X}}[z_\mu^a]  ,\mathbb{E}_{\mat{X}}[z_\mu^a z_\mu^b] \right)$, with the two first moments given by:
\begin{equation}
	\begin{cases}
		\mathbb{E}_{\mat{X}}[z_\mu^a] = \frac{1}{\sqrt{\ndim}} \sum_{i=1}^\ndim \mathbb{E}_{\mat{X}}\[x^{(\mu)}_{i}\] w_i^a =0 \spacecase
		\mathbb{E}_{\mat{X}}[z_\mu^a z_\mu^b] = \frac{1}{\ndim} \sum_{ij} \mathbb{E}_{\mat{X}}\[x^{(\mu)}_{i} x^{(\mu)}_{j}\] w_i^a w_j^b  = \frac{1}{\ndim} \sum_{ij}  \delta_{ij} w_i^a w_j^b = \frac{1}{\ndim} \vec{w}^a \cdot \vec{w}^b   \, .
	\end{cases}
\end{equation}
In the following we introduce the symmetric \emph{overlap} matrix $Q(\{\vec{w}^a\})\equiv\(\frac{1}{\ndim} \vec{w}^a \cdot \vec{w}^b\)_{a,b=0..r}$. Let us define $\tbf{\tilde{z}}_{\mu} \equiv (z^a_{\mu})_{a=0..r}$ and $\tbf{\tilde{w}}_i \equiv (w_i^a)_{a=0..r}$. The vector $\tbf{\tilde{z}}_{\mu}$ follows a multivariate Gaussian distribution $\tbf{\tilde{z}}_{\mu} \sim P_{\tilde{\rm z}}(\tilde{\vec{z}};Q) = \mathcal{N}_{\tilde{\vec{z}}}( \tbf{0}_{r+1}, Q)$ and as $
        P_{\tilde{\rm w}}(\tbf{\tilde{w}}) = \prod_{a=0}^r
        P_{\w}(\td{w}^a)$ it follows
 \begin{align*}       
  \EE_{\vec{y},\mat{X}} \[  \mZ\(\vec{y}, \mat{X}\)^r \] &= \EE_{\mat{X}}  \int_{\bbR^\nsamples} \d \vec{y} \prod_{a=0}^r \int_{\bbR^\nsamples} \d\vec{z}^a P_{\out^a}\(\vec{y} | \vec{z}^a \) \int_{\bbR^\ndim} \d\vec{w}^a P_{\w^a}\(\vec{w}^a\) \delta\(\vec{z}^a -\frac{1}{\sqrt{\ndim}} \mat{X} \vec{w}^a\)\\
  &= \[ \int_{\bbR} \d y \int_{\bbR^{r+1}} \d\td{\vec{z}} P_{\out}\(y | \td{\vec{z}} \) P_{\tilde{\rm z}}(\tilde{\vec{z}};Q(\td{\vec{w}}))    \]^{\nsamples} \[\int_{\bbR^{r+1}} \d\td{\vec{w}} P_{\td{\w}}\(\td{\vec{w}}\)\]^{\ndim}\,,
\end{align*}     
because the channel and the prior distributions factorize. Introducing the change of variable and the Fourier representation of the $\delta$-Dirac function, which involves a new ad-hoc parameter $\hat{Q}$:
 \begin{align*}
 	1 &= \int_{\bbR^{r+1 \times r+1}} \d Q \prod_{a \leq b} \delta \left(\ndim Q_{ab}-\sum_{i=1}^\ndim w_i^a w_i^b \right)\\
	&\propto \int_{\bbR^{r+1 \times r+1}} \d Q \int_{\bbR^{r+1 \times r+1}} \d\hat{Q} \exp \left(- \ndim \tr{Q \hat{Q}} \right)  \exp\left(\frac{1}{2}\sum_{i=1}^\ndim \tbf{\tilde{w}}_i^{\intercal} \hat{Q} \tbf{\tilde{w}}_i\right) \,,
 \end{align*}
the replicated partition function becomes an integral over the symmetric matrices $Q \in \bbR^{r+1 \times r+1}$ and $\hat{Q} \in \bbR^{r+1 \times r+1}$, that can be evaluated using a Laplace method in the $\ndim \to \infty$ limit,
\begin{align}
	\EE_{\vec{y},\mat{X}} \[  \mZ\(\vec{y}, \mat{X}\)^r \] &= \int_{\bbR^{r+1 \times r+1}} \d Q \int_{\bbR^{r+1 \times r+1}} \d\hat{Q} e^{\ndim \Phi^{(r)} (Q,\hat{Q} ) }\\
	&\underset{\ndim \to \infty}{\simeq} \exp\(\ndim \cdot  \extr_{Q, \hat{Q}} \left\{ \Phi^{(r)}(Q,\hat{Q}) \right\}\), 
	\label{appendix:expectation_Zn}
\end{align}
where we defined
\begin{equation}
     \begin{cases}
     \Phi^{(r)} (Q,\hat{Q}) = -\tr{Q\hat{Q}} +\log \Psi_{\w}^{(r)} (\hat{Q})+\alpha\log \Psi_{\out}^{(r)}(Q)
      \spacecase
      \Psi_{\w}^{(r)} (\hat{Q}) = \displaystyle \int_{\mathbb{R}^{r+1}} \d \tbf{\tilde{w}} P_{\tilde{\rm w}}(\tbf{\tilde{w}})  e^{ \frac{1}{2}\tbf{\tilde{w}}^{\intercal} \hat{Q} \tbf{\tilde{w}} }   \spacecase
      \Psi_{\out}^{(r)}(Q) =  \displaystyle \int \d y \int_{\mathbb{R}^{r+1}}  \d\tbf{\tilde{z}} P_{\tilde{z}}(\tbf{\tilde{z}};Q) P_{\out}(y| \tbf{\tilde{z}})\,,
     \end{cases}
     \label{appendix:replicas:Phi_r}
\end{equation}
and $P_{\td{z}} (\td{\vec{z}};Q) = \displaystyle \frac{e^{-\frac{1}{2}\td{\vec{z}}^\intercal Q^{-1} \td{\vec{z}} }}{\det(2\pi Q)^{1/2}}$.

Finally switching the two limits $r\to 0$ and $\ndim \to \infty$, the quenched free entropy $\Phi$ simplifies as a saddle point equation
\begin{equation}
\Phi (\alpha) = \extr_{ Q, \hat{Q} } \left\{\lim_{r\rightarrow 0} \frac{\partial \Phi^{(r)}(Q,\hat{Q})}{\partial  r} \right\} ,
\label{appendix:replicas:free_entropy_noansatz}
\end{equation}
over symmetric matrices $Q\in \bbR^{r+1 \times r+1}$ and $\hat{Q} \in \bbR^{r+1 \times r+1}$. In the following we will assume a simple ansatz for these matrices in order to first obtain an analytic expression in $r$ before taking the derivative with respect to $r$.

\paragraph{RS free entropy}
\label{appendix:replicas:rs_free_entropy}

Let's compute the functional $\Phi^{(r)}(Q,\hat{Q})$ appearing in the free entropy eq.~\eqref{appendix:replicas:free_entropy_noansatz} in the simplest ansatz: the Replica Symmetric ansatz. This later assumes that all replica remain equivalent with a common overlap $q = \frac{1}{\ndim} \vec{w}^a \cdot \vec{w}^b$ for $a \ne b$, a norm $Q= \frac{1}{\ndim} \|\vec{w}^a\|_2^2$, and an overlap with the ground truth $m =\frac{1}{\ndim} \vec{w}^a \cdot \vec{w}^\star$, leading to the following expressions of the replica symmetric matrices $Q_{\rm rs} \in \mathbb{R}^{r+1\times r+1}$ and $\hat{Q}_{\rm rs} \in \mathbb{R}^{r+1\times r+1}$:
\begin{equation}
\begin{aligned}[c]
Q_{\rm rs} =\begin{pmatrix} 
Q^0 & m & ... & m \\
 m & Q & ... & ...  \\
 ... &... & ... & q  \\
 m &... & q & Q    \\
\end{pmatrix}
\end{aligned}
\hspace{0.5cm}
\textrm{ and } 
\hspace{0.5cm}
\begin{aligned}[c]
\hat{Q}_{\rm rs}=\begin{pmatrix} 
 \hat{Q}^0 & \hat{m} & ... & \hat{m}\\
\hat{m} &-\frac{1}{2}\hat{Q} & ... & ...  \\
 ... &... & ... & \hat{q}  \\
\hat{m} &... & \hat{q} & -\frac{1}{2}\hat{Q}\\  
\end{pmatrix}\,,
\end{aligned}
\end{equation}
with $Q^0=\rho_{\w^\star} = \frac{1}{\ndim} \|\vec{w}^\star\|_2^2$.
Let's compute separately the terms involved in the functional $\Phi^{(r)}(Q,\hat{Q})$ eq.~\eqref{appendix:replicas:Phi_r} with this ansatz: the first is a trace term, the second a term  $\Psi_{\w}^{(r)}$ depending on the prior distributions $P_\w$, $P_{\w^\star}$ and finally the third a term $\Psi_{\out}^{(r)}$ that depends on the channel distributions $P_{\out^\star}$,$P_\out$.

\paragraph{Trace term} 
The trace term can be easily computed and takes the following form:
\begin{equation}
	\left.\Tr(Q\hat{Q}) \right|_{\rm rs} =  Q^0\hat{Q}^0 + r m \hat{m} - \frac{1}{2}r Q \hat{Q} + \frac{r(r-1)}{2} q \hat{q}\,.
\end{equation}

\paragraph{Prior integral} Evaluated at the RS fixed point, and using a Gaussian identity also known as a Hubbard-Stratonovich transformation $\EE_{\xi}\exp(\sqrt{a}\xi)=e^{\frac{a}{2}}$, the prior integral can be further simplified 
\begin{align}
\begin{aligned}
	\left.\Psi_{\w}^{(r)} (\hat{Q})\right|_{\rm rs} &= \displaystyle \int_{\mathbb{R}^{r+1}} d\tbf{\tilde{w}} P_{\tilde{\rm w}}(\tbf{\tilde{w}})  e^{ \frac{1}{2}\tbf{\tilde{w}}^{\intercal} \hat{Q}_{\rm rs} \tbf{\tilde{w}} }  \\
	&= \EE_{w^\star} e^{\frac{1}{2}\hat{Q}^0 (w^\star)^2} \int_{\mathbb{R}^{r}} d\tbf{\tilde{w}} P_{\tilde{\rm w}}(\tbf{\tilde{w}})  e^{ w^\star \hat{m} \sum_{a=1} ^r \td{w}^a - \frac{1}{2}(\hat{Q}+\hat{q}) \sum_{a=1}^r (\td{w}^a)^2 + \frac{1}{2}\hat{q} (\sum_{a=1}^r \td{w}^a)^2 } \\
	&= \displaystyle \EE_{\xi, w^\star} e^{\frac{1}{2}\hat{Q}^0 (w^\star)^2} \[ \EE_w  \exp \(\[ \hat{m} w^\star w - \frac{1}{2}(\hat{Q} + \hat{q}) w^2 + \hat{q}^{1/2} \xi w \] \) \]^r\,.
\end{aligned}
\end{align}

\paragraph{Channel integral}
Let's focus on the inverse matrix
\begin{equation}
   Q_{\rm rs}^{-1}=\begin{bmatrix}
   Q^{-1}_{00} & Q^{-1}_{01} & Q^{-1}_{01} & Q^{-1}_{01}  \\
   Q^{-1}_{01} & Q^{-1}_{11} & Q^{-1}_{12} & Q^{-1}_{12} \\
   Q^{-1}_{01} & Q^{-1}_{12} & Q^{-1}_{11} & Q^{-1}_{12} \\
   Q^{-1}_{01} & Q^{-1}_{12} & Q^{-1}_{12} & Q^{-1}_{11} \\
  \end{bmatrix}
\end{equation}

with 
\begin{align*}
\begin{cases}
	Q_{00}^{-1} &= \(Q^0 - r m (Q + (r-1) q)^{-1} m \)^{-1}   \spacecase
	Q_{01}^{-1} &= -\(Q^0 - r m(Q + (r-1) q)^{-1} m \)^{-1} m ( q + (r-1) q)^{-1}  \spacecase
	Q_{11}^{-1} &= (Q-q)^{-1} - (Q +(r-1) q)^{-1} q (Q-q)^{-1} \\
	& + ( Q + (r-1) q )^{-1} m  \(Q^0 - r m ( Q + (r-1) q)^{-1} 	m \)^{-1} m( Q + (r-1)q)^{-1}\spacecase
	Q_{12}^{-1} &= - ( Q + (r-1) q)^{-1} q (Q-q)^{-1} \\
	  &+  ( Q + (r-1)q)^{-1}m \(Q - r m(Q + (r-1) q)^{-1} m \)^{-1} m( Q + (r-1)q)^{-1}
	\end{cases}
\end{align*}
and its determinant:
\begin{align*}
	\det Q_{\rm rs} = \(Q-q\)^{r-1} \(Q + (r-1)q\) \(Q^0 - r m( Q + (r-1)q )^{-1} m  \)
\end{align*}
Using the same kind of Gaussian transformation, we obtain
\begin{align*}
\left. \Psi_{\out}^{(r)}(Q) \right|_{\rm rs} &= \displaystyle \int \d y \int_{\mathbb{R}^{r+1}}  d\tbf{\tilde{z}} e^{-\frac{1}{2}\td{\vec{z}}^\intercal Q_{\rm rs}^{-1} \td{\vec{z}} - \frac{1}{2} \log\(\det(2\pi Q_{\rm rs}) \)} P_{\out}(y| \tbf{\tilde{z}})\\
&= \displaystyle \EE_{y,\xi} \e^{- \frac{1}{2}\log(\det \( 2\pi Q_{\rm rs}\) )} \\
& \times \int \d z^\star P_{\out^\star}\(y | z^\star \) e^{ -\frac{1}{2} Q^{-1}_{00} (z^\star)^2 } \[ \int dz  P_{\out}\(y | z\) e^{ -  Q^{-1}_{01} z^\star z -\frac{1}{2}   \( Q^{-1}_{11} - Q^{-1}_{12} \)z^2  - Q^{-1/2}_{12} \xi z } \]^r
\end{align*}

\subsection{ERM and Bayes-optimal free entropy}

Taking carefully the derivative and the $r\to 0$ limit imposes $\hat{Q}^0=0$ and we finally obtain the replica symmetric free entropy $\Phi_{\rm rs}$:
\begin{align}
	\Phi_{\rm rs}(\alpha) &\equiv \EE_{\vec{y},\mat{X}} \[ \lim_{\ndim \to \infty} \frac{1}{\ndim} \log\( \mZ\(\vec{y}, \mat{X}\) \) \] \label{appendix:replicas:free_entropy_non_bayes}\\
	&=  \extr_{Q,\hat{Q},q,\hat{q},m,\hat{m}} \left \{  - m \hat{m} + \frac{1}{2} Q \hat{Q} + \frac{1}{2} q \hat{q}  + \Psi_\w\(\hat{Q},\hat{m},\hat{q}  \) + \alpha \Psi_\out\(Q,m,q ;  \rho_{\w^\star}\) \right\} \nonumber \,,
\end{align}
where $\rho_{\w^\star}= \lim_{\ndim \to \infty} \EE_{\vec{w}^\star} \frac{1}{\ndim} \|\vec{w}^\star\|_2^2$ and the channel and prior integrals are defined by
\begin{align}
\begin{aligned}
	\Psi_\w\(\hat{Q},\hat{m},\hat{q}  \) &\equiv \EE_{\xi} \[ \mZ_{\w^\star}\( \hat{m}  \hat{q}^{-1/2}  \xi , \hat{m} \hat{q}^{-1} \hat{m}  \) \log \mZ_{\w} \(  \hat{q}^{1/2}\xi  , \hat{Q} + \hat{q} \)   \]\,, \spacecase
	\Psi_\out\(Q,m,q; \rho_{\w^\star}\) &\equiv \EE_{y, \xi } \[ \mZ_{\out^\star} \( y,  m q^{-1/2}\xi, \rho_{\w^\star} - m q^{-1}m  \) \log \mZ_{\out} \( y,  q^{1/2}\xi, Q - q \)  \]\,,
	\label{appendix:replicas:free_entropy_terms_non_bayes}
\end{aligned}
\end{align}
where again $\mZ_{\out^\star}$ and $\mZ_{\w^\star}$ are defined in eq.~\eqref{appendix:definitions:update_bayes} and depend on the \emph{teacher}, while the denoising functions $\mZ_{\out}$ and $\mZ_{\w}$ depend on the inference model. In particular, we explicit in the next sections the above free entropy in the case of ERM and Bayes-optimal estimation.

\subsubsection{ERM estimation}
As described in eq.~\eqref{appendix:ERM_map_mapping}, the free entropy for ERM estimation is therefore given by eq.~\eqref{appendix:replicas:free_entropy_non_bayes} if we take $-\log \bbP\(\vec{y} | \vec{z} \) = l(\vec{y}, \vec{z}) $ and $- \log \bbP\(\vec{w}\) = r(\vec{w})$. As described in Sec.~\ref{appendix:definitions:map_updates} they lead to the following partition functions:
\begin{align}
\begin{aligned}
	\mZ_\w^\lambda \(\gamma, \Lambda\) &=  \lim_{\Delta \to 0}  e^{-\frac{1}{\Delta} \mM_{\Lambda^{-1}}\[ r(\lambda,.) \](\Lambda^{-1}\gamma) } e^{-\frac{1}{2\Delta} \gamma^2 \Lambda^{-1}}\,,\spacecase 
	\mZ_\out \(y, \omega,V\) &= \lim_{\Delta \to 0} \frac{e^{-\frac{1}{\Delta} \mM_{\frac{V}{\Delta}}[l(y,.)](\omega) } }{\sqrt{2\pi V }\sqrt{2\pi \Delta}}\,,
\end{aligned}	
\end{align}
with the Moreau-Yosida regularization \eqref{appendix:definitions:moreau}.

\subsubsection{Bayes-optimal estimation}
In the Bayes-optimal case, we have access to the ground truth distributions $\bbP\(\vec{y} | \vec{z} \) = P_{\out^\star}\(\vec{y} | \vec{z}\) $ and $\bbP\(\vec{w}\) = P_{\w^\star}(\vec{w})$, and therefore $\mZ_{\out}=\mZ_{\out^\star}$, $\mZ_{\w}=\mZ_{\w^\star}$. 
Nishimori conditions in the Bayes-optimal case \cite{Nishimori_1980} imply that $Q=\rho_{\w^\star}$, $m=q=q_\bayes$, $\hat{Q}=0$, $\hat{m}=\hat{q}=\hat{q}_\bayes$.
Therefore the free entropy eq.~\eqref{appendix:replicas:free_entropy_non_bayes} simplifies as an optimization problem over two scalar \emph{overlaps} $q_\bayes,\hat{q}_\bayes$:
\begin{align}
	\Phi^\bayes(\alpha) &=  \extr_{q_\bayes,\hat{q}_\bayes} \left \{ - \frac{1}{2} q_\bayes \hat{q}_\bayes  + \Psi_{\w}^\bayes\(\hat{q}_\bayes  \) + \alpha \Psi_{\out}^\bayes\(q_\bayes; \rho_{\w^\star}\) \right\} \,,
	\label{appendix:free_entropy_bayes}
\end{align}
with free entropy terms $\Psi_{\w}^\bayes$ and $\Psi_{\out}^\bayes$ given by
\begin{align*}
		\Psi_{\w}^\bayes\(\hat{q}\) &= \EE_{\xi} \[ \mZ_{\w^\star} \(  \hat{q}^{1/2}\xi,   \hat{q} \) \log \mZ_{\w^\star} \(  \hat{q}^{1/2}\xi,   \hat{q} \)   \] \,, \spacecase 
		\Psi_{\out}^\bayes \(q; \rho_{\w^\star}\) &= \EE_{y, \xi } \[ \mZ_{\out^\star} \( y,  q^{1/2}\xi, \rho_{\w^\star} - q \) \log \mZ_{\out^\star} \( y,  q^{1/2}\xi, \rho_{\w^\star} - q \)  \]\,.
\end{align*} 
and again $\mZ_{\out^\star}$ and $\mZ_{\w^\star}$ are defined in eq.~\eqref{appendix:definitions:update_bayes}. The above replica symmetric free entropy in the Bayes-optimal case has been rigorously proven in \cite{Barbier5451}.

\subsection{Sets of fixed point equations}
As highlighted in Sec.~\ref{appendix:generalization_error}, the asymptotic overlaps $m, q$ measure the performances of the ERM or Bayes-optimal statistical estimators, whose behaviours are respectively characterized by extremizing the free entropy \eqref{appendix:replicas:free_entropy_non_bayes} and \eqref{appendix:free_entropy_bayes}.
This section is devoted to derive the corresponding sets of fixed point equations.

\subsubsection{ERM estimation}
\label{appendix:fixed_point_erm}
Extremizing the free entropy eq.~\eqref{appendix:replicas:free_entropy_non_bayes}, we easily obtain the set of six fixed point equations
\begin{align}
\begin{aligned}
	\hat{Q} &= -2 \alpha \partial_Q \Psi_{\out}\,, \hspace{2cm}
			&& Q = - 2  \partial_{\hat{Q}} \Psi_{\w}\spacecase
	\hat{q} &= -2 \alpha \partial_q \Psi_{\out}\,, 
			&& q =   -2\partial_{\hat{q}} \Psi_{\w} \,,\spacecase
	\hat{m} &= \alpha \partial_{m} \Psi_{\out}\,,
			&& m = \partial_{\hat{m}} \Psi_{\w} \,.
\end{aligned}
\label{appendix:se_equations_generic_not_simplified}
\end{align}
These equations can be formulated as functions of the partition functions $\mZ_{\out^\star}$, $\mZ_{\w^\star}$ and the denoising functions $f_{\out^\star}, f_{\w^\star}, f_\out, f_{\w}$ defined in eq.~\eqref{appendix:definitions:update_functions_bayes} and eq.~\eqref{appendix:definitions:update_functions_map}. 
The derivation is shown in Appendix.~\ref{appendix:annex:state_evolution:simplications} and defining the natural variables $\Sigma = Q - q$, $\hat{\Sigma}= \hat{Q}+\hat{q}$, $\eta \equiv \frac{m^2}{\rho_{\w^\star}q}$ and $\hat{\eta} \equiv \frac{\hat{m}^2}{\hat{q}}$, it can be written as
\begin{align}
\begin{aligned}
	m&= \EE_{\xi} \[ \mZ_{\w^\star} \(\sqrt{\hat{\eta}}  \xi , \hat{\eta} \) f_{\w^\star}\(\sqrt{\hat{\eta}}  \xi , \hat{\eta} \) f_{\w} \(  \hat{q}^{1/2}\xi  , \hat{\Sigma} \)   \]\,, \\
	q &= \EE_{\xi} \[ \mZ_{\w^\star}\( \sqrt{\hat{\eta}}  \xi , \hat{\eta}  \) f_{\w} \(  \hat{q}^{1/2}\xi  , \hat{\Sigma} \)^2   \]\,, \\
	\Sigma &=  \EE_{\xi} \[ \mZ_{\w^\star}\( \sqrt{\hat{\eta}}  \xi , \hat{\eta}  \)  \partial_\gamma f_{\w} \(  \hat{q}^{1/2}\xi  , \hat{\Sigma} \) \]\,, \\
	\hat{m} &= \alpha \EE_{y, \xi } \[ \mZ_{\out^\star}(.) \cdot f_{\out^\star} \(y,  \sqrt{\rho_{\w^\star} \eta} \xi, \rho_{\w^\star}\(1 - \eta\)\)   f_{\out} \( y,  q^{1/2}\xi, \Sigma \) \]\,, \\
	\hat{q} &= \alpha \EE_{y, \xi } \[ \mZ_{\out^\star} \( y,  \sqrt{\rho_{\w^\star} \eta} \xi, \rho_{\w^\star}\(1 - \eta\)  \)   f_{\out} \( y,  q^{1/2}\xi, \Sigma \)^2 \]\,, \\
	\hat{\Sigma} &=  - \alpha \EE_{y, \xi } \[ \mZ_{\out^\star} \( y,  \sqrt{\rho_{\w^\star} \eta} \xi, \rho_{\w^\star}\(1 - \eta\)  \)  \partial_\omega f_{\out} \( y,  q^{1/2}\xi, \Sigma \) \]\,,
\end{aligned}
\label{appendix:se_equations_generic}
\end{align}
and we finally obtain the set of equations eqs.~\eqref{appendix:fixed_point_replicas}.

\subsubsection{Bayes-optimal estimation}
Extremizing the Bayes-optimal free entropy eq.~\eqref{appendix:free_entropy_bayes}, we easily obtain the set of 2 fixed point equations over the scalar parameters $q_\bayes, \hat{q}_\bayes$. In fact, it can also be deduced from eq.~\eqref{appendix:se_equations_generic} using the Nishimori conditions $f_{\w}=f_{\w^\star}$, $f_{\out}=f_{\out^\star}$, $m=q=q_\bayes, \Sigma = \rho_{\w^\star} - q, \hat{m}=\hat{q}=\hat{q}_\bayes$ and $\hat{Q}=0$ that lead to the result \eqref{main:fixed_point_equations_bayes} in  Thm.~\ref{main:thm:fixed_point_equations_bayes}, from \cite{Barbier5451}
\begin{align}
\begin{aligned}
	\hat{q}_\bayes &= \alpha \EE_{y, \xi } \[ \mZ_{\out^\star} \( y,  q_\bayes^{1/2}\xi, \rho_{\w^\star} - q_\bayes \)   f_{\out^\star} \( y,  q_\bayes^{1/2}\xi, \rho_{\w^\star} - q_\bayes \)^2    \] \,, \spacecase
	q_\bayes &= \EE_{\xi} \[ \mZ_{\w^\star}\( \hat{q}_\bayes^{1/2}  \xi , \hat{q}_\bayes  \) f_{\w^\star} \(  \hat{q}_\bayes^{1/2}\xi, \hat{q}_\bayes\)^2   \] \,.
\end{aligned}
\label{appendix:se_equations_generic:bayes}
\end{align}

\subsection{Useful derivations}
In this section, we give useful computation steps that we used to transform the sets of fixed point equations \eqref{appendix:se_equations_generic_not_simplified}.

\subsubsection{Prior free entropy term}
\label{appendix:annex:free_entropy}
In specific simple cases, the prior free entropy term 
\begin{align*}
	\Psi_\w\(\hat{Q},\hat{m},\hat{q}  \) &\equiv \EE_{\xi} \[ \mZ_{\w^\star}\( \hat{m}  \hat{q}^{-1/2}  \xi , \hat{m} \hat{q}^{-1} \hat{m}  \) \log \mZ_{\w} \(  \hat{q}^{1/2}\xi  , \hat{Q} + \hat{q} \)   \]
\end{align*}
in \eqref{appendix:replicas:free_entropy_terms_non_bayes} can be computed explicitly. This is the case of Gaussian and binary priors $P_{\w^\star}$ with $\rL_2$ regularization. In particular, they lead surprisingly to the same expression meaning that choosing a binary or Gaussian teacher distribution does not affect the ERM performances with $\rL_2$ regularization. 

\paragraph{Gaussian prior}
\label{appendix:sec:annex:free_entropy:gaussian}
Let us compute the corresponding free entropy term with partition functions 
$\mZ_{\w^\star}$ for a Gaussian prior $P_{\w^\star}(w^\star) = \mN_{w^\star}(0,\rho_{\w^\star})$ and $\mZ_{\w}^{\rL_2, \lambda}$ for a $\rL_2$ regularization respectively given by eq.~\eqref{appendix:definitions:application:gaussian} and eq.~\eqref{appendix:definitions:application:L2}:
\begin{align*}
	\begin{aligned}
		\mZ_{\w^\star}\( \gamma , \Lambda  \) &= \frac{e^{\frac{\gamma^2\rho_{\w^\star}}{2\( \Lambda\rho_{\w^\star} + 1 \)}}}{\sqrt{\Lambda\rho_{\w^\star} + 1 }} \,, 
		&& \mZ_{\w}^{\rL_2, \lambda}\( \gamma , \Lambda  \) = \frac{e^{\frac{\gamma^2}{2\( \Lambda + \lambda \)}}}{\sqrt{\Lambda + \lambda }} \,.
	\end{aligned}
\end{align*}
The prior free entropy term reads
\begin{align}
\begin{aligned}
	\Psi_{\w}\(\hat{Q},\hat{m},\hat{q}  \) &= \EE_{\xi} \[ \mZ_{\w^\star}\( \hat{m}  \hat{q}^{-1/2}  \xi , \hat{m} \hat{q}^{-1} \hat{m}  \) \log \mZ_{\w}^{\rL_2, \lambda} \(  \hat{q}^{1/2}\xi  , \hat{q} + \hat{Q} \)   \]\\
	&= \EE_{\xi} \[ \mZ_{\w^\star}\( \hat{m}  \hat{q}^{-1/2}  \xi , \hat{m} \hat{q}^{-1} \hat{m}  \) \(\frac{ \hat{q} \xi^2 }{2\(\lambda + \hat{Q} + \hat{q}  \)} - \frac{1}{2} \log\(\lambda + \hat{Q} + \hat{q} \) \) \]\\
	&= \int \d\xi \mN_\xi\(0,1+\rho_{\w^\star}\hat{m}^2\hat{q}^{-1}\) \(\frac{ \hat{q} \xi^2 }{2\(\lambda + \hat{Q} + \hat{q} \)} - \frac{1}{2} \log\(\lambda + \hat{Q} + \hat{q}\) \)\\
	&= \frac{1}{2}\(  \frac{\hat{q}+\rho_{\w^\star}\hat{m}^2}{\lambda + \hat{Q} + \hat{q}} - \log\( \lambda + \hat{Q} + \hat{q} \) \)
\end{aligned}
\label{appendix:annex:free_entropy:gaussian}
\end{align}

In the Bayes-optimal case for $\rho_{\w^\star}=1$, the computation is similar and is given by the above expression with $\lambda=1$, $\hat{Q}=0$, $\hat{m}=\hat{q}$:
\begin{align}
\begin{aligned}
	\Psi_{\w}^{\rm bayes}\(\hat{q}\) &=
	&= \frac{1}{2}\( \hat{q} - \log\( 1 +  \hat{q} \) \)
\end{aligned}
\label{appendix:annex:free_entropy:gaussian:bayes}
\end{align}

\paragraph{Binary prior}
\label{appendix:sec:annex:free_entropy:binary}

Let us compute the corresponding free entropy term with partition functions 
$\mZ_{\w^\star}$ for a binary prior $P_{\w^\star}(w^\star) = \frac{1}{2}\(\delta(w^\star-1) + \delta(w^\star+1) \)$ and $\mZ_{\w}^{\rL_2, \lambda}$ for a $\rL_2$ regularization respectively given by eq.~\eqref{appendix:definitions:application:binary} and eq.~\eqref{appendix:definitions:application:L2}:
\begin{align*}
	\begin{aligned}
		\mZ_{\w^\star}\( \gamma , \Lambda  \) &=e^{-\frac{\Lambda }{2}} \cosh (\gamma)\,, && \mZ_{\w}^{\rL_2, \lambda} \( \gamma , \Lambda  \) = \frac{e^{\frac{\gamma^2}{2\( \Lambda + \lambda \)}}}{\sqrt{\Lambda + \lambda }} \,.
	\end{aligned}
\end{align*}
The entropy term $\Psi_{\w}$ reads
\begin{align}
\begin{aligned}
	\Psi_{\w}\(\hat{Q},\hat{m},\hat{q}  \) &= \EE_{\xi} \[ \mZ_{\w^\star}\( \hat{m}  \hat{q}^{-1/2}  \xi , \hat{m} \hat{q}^{-1} \hat{m}  \) \log \mZ_{\w}^{\rL_2, \lambda} \(  \hat{q}^{1/2}\xi  , \hat{q} + \hat{Q} \)   \]\\
	&= \EE_{\xi} \[ \mZ_{\w^\star}\( \hat{m}  \hat{q}^{-1/2}  \xi , \hat{m} \hat{q}^{-1} \hat{m}  \) \(\frac{ \hat{q} \xi^2 }{2\(\lambda + \hat{Q} + \hat{q}  \)} - \frac{1}{2} \log\(\lambda + \hat{Q} + \hat{q} \) \) \]\\
	&= \int d\xi \frac{e^{-\frac{\xi^2}{2}}}{\sqrt{2\pi}} e^{-\frac{ \hat{m} \hat{q}^{-1} \hat{m}}{2}} \cosh \( \hat{m}  \hat{q}^{-1/2}  \xi \)  \(\frac{ \hat{q} \xi^2 }{2\(\lambda + \hat{Q} + \hat{q} \)} - \frac{1}{2} \log\(\lambda + \hat{Q} + \hat{q}\) \)\\
	&= \frac{1}{2}\(  \frac{\hat{q}+\hat{m}^2}{\lambda + \hat{Q} + \hat{q}} - \log\( \lambda + \hat{Q} + \hat{q} \) \)
\end{aligned}
\label{appendix:annex:free_entropy:binary}
\end{align}
We recover exactly the same free entropy term than for Gaussian prior teacher eq.~\eqref{appendix:annex:free_entropy:gaussian} for $\rho_{\w^\star}=1$.

\subsubsection{Updates derivatives}
Let's compute, in full generality, the derivative of the partition functions defined in Sec.~\ref{appendix:definitions:update_generic} and that will be useful to simplify the set \eqref{appendix:se_equations_generic_not_simplified}.
\begin{align}
	\begin{aligned}
	\partial_\gamma \mZ_{\w}\(\gamma, \Lambda\) &= \mZ_{\w}\(\gamma, \Lambda\) \times  \EE_{Q_{\w}}\[w\] = \mZ_{\w}\(\gamma, \Lambda\) f_{\w}\(\gamma, \Lambda\)\\
	\partial_\Lambda \mZ_{\w}\(\gamma, \Lambda\) &= -\frac{1}{2} \mZ_{\w}\(\gamma, \Lambda\) \times \EE_{Q_{\w}}\[w^2\] = -\frac{1}{2} \( \partial_\gamma f_{\w} (\gamma ,\Lambda) + f_{\w}^2(\gamma ,\Lambda)  \)\\
	\partial_\omega \mZ_{\out}\(y, \omega, V\)  &= \mZ_{\out}\(y, \omega, V\) \times  V^{-1} \EE_{Q_{\out}} \[ z - \omega\] \\
		&= \mZ_{\out}\(y, \omega, V\) f_{\out}\(y, \omega, V\) \\
	\partial_V \mZ_{\out}\(y, \omega, V\)  &= \frac{1}{2} \mZ_{\out}\(y, \omega, V\) \times \( \EE_{Q_{\out}}\[V^{-2}(z-\omega)^2\] - V^{-1}\)\\
		&= \frac{1}{2} \mZ_{\out}\(y, \omega, V\) \( \partial_\omega f_{\out}\(y, \omega, V\)  + f_{\out}^2\(y, \omega, V\)  \)
	\end{aligned}
	\label{appendix:annex:derivatives}
\end{align}

\subsubsection{Simplifications of the fixed point equations}
\label{appendix:annex:state_evolution:simplications}
We recall the set of fixed point equations eq.~\eqref{appendix:se_equations_generic_not_simplified}
\begin{align}
\begin{aligned}
	\hat{Q} &= -2 \alpha \partial_Q \Psi_{\out}\,, \hspace{2cm}
			&& Q = - 2  \partial_{\hat{Q}} \Psi_{\w}\spacecase
	\hat{q} &= -2 \alpha \partial_q \Psi_{\out}\,, 
			&& q =   -2\partial_{\hat{q}} \Psi_{\w} \,,\spacecase
	\hat{m} &= \alpha \partial_{m} \Psi_{\out}\,,
			&& m = \partial_{\hat{m}} \Psi_{\w} \,,
\end{aligned}
\end{align}
that can be simplified and formulated as functions of $\mZ_{\out^\star}$, $\mZ_{\w^\star}$,$f_{\out^\star}, f_{\w^\star}, f_\out$, and $f_{\w}$ defined in eq.~\eqref{appendix:definitions:update_functions_bayes} and eq.~\eqref{appendix:definitions:update_functions_map}, using the derivatives in \eqref{appendix:annex:derivatives}.

\paragraph{Equation over $\hat{q}$}
\begin{align*}
	\partial_q \Psi_{\out} &= \partial_q \EE_{y, \xi } \[  \mZ_{\out^\star} \( y,  m q^{-1/2}\xi, \rho_{\w^\star} - m q^{-1}m  \)  \log \mZ_{\out} \( y,  q^{1/2}\xi, Q - q \)  \] \\
	&= \EE_{y, \xi} \[  \partial_q \omega^\star \partial_\omega \mZ_{\out^\star} \log \mZ_{\out} + \partial_q V^\star \partial_V \mZ_{\out^\star} \log \mZ_{\out}\right. \\
	& \hhspace \left. + \frac{\mZ_{\out^\star}}{\mZ_{\out}} \(\partial_q \omega \partial_\omega \mZ_{\out} + \partial_q V \partial_V \mZ_{\out} \)    \] \\
	&= \EE_{y, \xi} \[ - \frac{m}{2}q^{-3/2} \xi f_{\out^\star} \mZ_{\out^\star} \log \mZ_{\out} + \frac{m^2 q^{-2}}{2}\( \partial_\omega f_{\out^\star} + f_{\out^\star}^2 \)  \mZ_{\out^\star} \log \mZ_{\out} \right. \\
	& \hhspace \left. +  \frac{\mZ_{\out^\star}}{\mZ_{\out}} \( \frac{1}{2}q^{-1/2} \xi f_{\out} \mZ_{\out} - \frac{1}{2} \(  \partial_\omega f_{\out} + f_{\out}^2  \)\mZ_{\out} \)    \] \\
	&= \frac{1}{2} \EE_{y, \xi} \[ - m^2q^{-2}  \partial_\xi \( f_{\out^\star} \mZ_{\out^\star} \log \mZ_{\out}\) + m^2q^{-2}\( \partial_\omega f_{\out^\star} + f_{\out^\star}^2 \) \mZ_{\out^\star} \log \mZ_{\out} \right. \\
	& \hhspace \left. +   \( \partial_\xi \( f_{\out} \mZ_{\out^\star}\)   - \(  \partial_\omega f_{\out} + f_{\out}^2  \) \mZ_{\out^\star} \)    \]  \tag{Stein lemma} \\
	&= \frac{1}{2} \EE_{y, \xi} \[ - m^2q^{-2} \( \partial_\omega f_{\out^\star} \log \mZ_{\out} + \mZ_{\out^\star} f_{\out^\star}^2 \log \mZ_{\out} \right. \right. \\
	& \hhspace\left. \left. - \( \partial_\omega f_{\out^\star} + f_{\out^\star}^2 \) \mZ_{\out^\star} \log \mZ_{\out}   \)   \] +  \frac{1}{2} \EE_{y, \xi} \[ - mq^{-1} \mZ_{\out^\star} f_{\out^\star} f_{\out}  \] \\
	& \hhspace + \frac{1}{2} \EE_{y, \xi} \[ \partial_\omega f_{\out}  \mZ_{\out} + mq^{-1}  \mZ_{\out^\star} f_{\out^\star} f_{\out}  -  \(  \partial_\omega f_{\out} + f_{\out}^2  \) \mZ_{\out^\star}  \]  \\
	&= - \frac{1}{2} \EE_{y, \xi} \[    \mZ_{\out^\star}\( y,  m q^{-1/2}\xi, \rho_{\w^\star} - m q^{-1}m  \)  f_{\out}^2\( y,  q^{1/2}\xi, Q - q \)       \]\,, \tag{Simplifications with \eqref{appendix:annex:derivatives}}
\end{align*}
that leads to 
\begin{align}
	\hat{q} &= -2 \alpha \partial_q \Psi_{\out} = \alpha \EE_{y, \xi } \[ \mZ_{\out^\star} \( y,  m q^{-1/2}\xi, \rho_{\w^\star} - m q^{-1}m  \)   f_{\out} \( y,  q^{1/2}\xi, Q - q \)^2    \]\,.
\end{align}

\paragraph{Equation over $\hat{m}$}
\begin{align*}
	\partial_m \Psi_{\out} &= \EE_{y, \xi } \[ \partial_m \mZ_{\out^\star} \( y,  m q^{-1/2}\xi, \rho_{\w^\star} - m q^{-1}m  \)  \log \mZ_{\out} \( y,  q^{1/2}\xi, Q - q \)  \] \\
	&= \EE_{y, \xi} \[ \( \partial_m \omega^\star \partial_\omega \mZ_{\out^\star} + \partial_m V^\star \partial_V \mZ_{\out^\star}  \) \log \mZ_{\out}  \] \\
	&= \EE_{y, \xi} \[ \( q^{-1/2}\xi f_{\out^\star} \mZ_{\out^\star} - m q^{-1} \( \partial_\omega f_{\out^\star} + f_{\out^\star}^2 \) \mZ_{\out^\star}  \) \log \mZ_{\out}  \] \\
	&= \EE_{y, \xi} \[ \partial_\xi  \(f_{\out^\star} \mZ_{\out^\star} \log \mZ_{\out} \) -  \( \partial_\omega f_{\out^\star} + f_{\out^\star}^2 \) \mZ_{\out^\star}  \log \mZ_{\out}  \] \tag{Stein Lemma} \\
	&= \EE_{y, \xi } \[ m q^{-1}\(\partial_\omega f_{\out^\star} \mZ_{\out^\star} \log \mZ_{\out} + f_{\out^\star} \partial_\omega\mZ_{\out^\star} \log \mZ_{\out} \right.\right. \\
	& \hhspace \left. \left. -  \( \partial_\omega f_{\out^\star} + f_{\out^\star}^2 \) \mZ_{\out^\star}  \) \log \mZ_{\out}    \] + \EE_{y, \xi } \[  \mZ_{\out^\star}  f_{\out^\star} f_{\out}  \]  \\ 
	&= \EE_{y, \xi} \[ \mZ_{\out^\star} \( ., ., . \) f_{\out^\star} \( y,  m q^{-1/2}\xi, \rho_{\w^\star} - m q^{-1}m  \)  f_{\out} \( y,  q^{1/2}\xi, Q - q \)  \] \tag{Simplifications with \eqref{appendix:annex:derivatives}} 
\end{align*}
that leads to 
\begin{align}
\begin{aligned}
	\hat{m} &=\alpha \partial_{m} \Psi_{\out} \\
	&= \alpha \EE_{y, \xi } \[ \mZ_{\out^\star} \(., ., . \) f_{\out^\star} \( y,  m q^{-1/2}\xi, \rho_{\w^\star} - m q^{-1}m  \)   f_{\out} \( y,  q^{1/2}\xi, Q - q \)    \]\,.
\end{aligned}
\end{align}

\paragraph{Equation over $\hat{Q}$}
\begin{align*}
\partial_Q \Psi_{\out} &= \EE_{y, \xi } \[ \mZ_{\out^\star} \( y,  m q^{-1/2}\xi, \rho_{\w^\star} - m q^{-1}m  \) \partial_Q \log \mZ_{\out} \( y,  q^{1/2}\xi, Q - q \)  \] \\
	&= \EE_{y, \xi } \[ \mZ_{\out^\star} \( y,  m q^{-1/2}\xi, \rho_{\w^\star} - m q^{-1}m  \) \partial_Q V \partial_V \log \mZ_{\out} \( y,  q^{1/2}\xi, Q-q \)  \] \\
	&= \frac{1}{2}  \EE_{y, \xi } \[ \mZ_{\out^\star} \( y,  m q^{-1/2}\xi, \rho_{\w^\star} - m q^{-1}m  \) \(  \partial_\omega f_{\out} + f_{\out}^2  \) \( y,  q^{1/2}\xi, Q-q \) \]
\end{align*}
leading to
\begin{align}
\begin{aligned}
	\hat{Q} &= -2 \alpha \partial_Q \Psi_{\out}	\\
	&= - \alpha \EE_{y, \xi } \[ \mZ_{\out^\star} \( y,  m q^{-1/2}\xi, \rho_{\w^\star} - m q^{-1}m  \)  \partial_\omega f_{\out} \( y,  q^{1/2}\xi, Q - q \)    \] - \hat{q} \,.
\end{aligned}
\end{align}

\paragraph{Equation over $q$}
\begin{align*}
	\partial_{\hat{q}} \Psi_{w} &= \partial_{\hat{q}} \EE_{\xi} \[  \mZ_{\w^\star}\( \hat{m}  \hat{q}^{-1/2}  \xi , \hat{m} \hat{q}^{-1} \hat{m} \)  \log \mZ_{\w}\(\hat{q}^{1/2}\xi,\hat{Q} + \hat{q}\)  \] \\
	&= \EE_{\xi} \[  \partial_{\hat{q}} \omega^\star \partial_\omega \mZ_{\w^\star} \log \mZ_{\w} + \partial_{\hat{q}} V^\star \partial_V \mZ_{\w^\star} \log \mZ_{\w} + \frac{\mZ_{\w^\star}}{\mZ_{\w}} \(\partial_{\hat{q}} \omega \partial_\omega \mZ_{\w} + \partial_{\hat{q}} V \partial_V \mZ_{\w} \)    \] \\
	&= \EE_{\xi} \[ - \frac{\hat{m}}{2}\hat{q}^{-3/2} \xi f_{\w^\star} \mZ_{\w^\star} \log \mZ_{\w} + \frac{\hat{m}^2 \hat{q}^{-2}}{2}\( \partial_\omega f_{\w^\star} + f_{\w^\star}^2 \)  \mZ_{\w^\star} \log \mZ_{\w} \right. \\
	&\left. +  \frac{\mZ_{\w^\star}}{\mZ_{\w}} \( \frac{1}{2}\hat{q}^{-1/2} \xi f_{\w} \mZ_{\w} - \frac{1}{2} \(  \partial_\omega f_{\w} + f_{\w}^2  \)\mZ_{\w} \)    \] \\
	&= \EE_{\xi} \[ - \frac{\hat{m}}{2}\hat{q}^{-3/2} \partial_\xi \( f_{\w^\star} \mZ_{\w^\star} \log \mZ_{\w}\) + \frac{\hat{m}^2 \hat{q}^{-2}}{2}\( \partial_\omega f_{\w^\star} + f_{\w^\star}^2 \)  \mZ_{\w^\star} \log \mZ_{\w} \right. \\
	&\left. +   \( \frac{1}{2}\hat{q}^{-1/2} \partial_\xi \( f_{\w} \mZ_{\w^\star}\) - \frac{1}{2} \(  \partial_\omega f_{\w} + f_{\w}^2  \)\mZ_{\w^\star} \)    \] \tag{Stein lemma} \\
	&= \frac{1}{2} \EE_{\xi} \[ - \hat{m}^2\hat{q}^{-2} \(  \partial_\omega f_{\w^\star} \mZ_{\w^\star} \log \mZ_{\w} + \mZ_{\w^\star} f_{\w^\star}^2 \log \mZ_{\w}  -  \( \partial_\omega f_{\w^\star} + f_{\w^\star}^2 \) \mZ_{\w^\star} \log \mZ_{\w} \)  \right. \\
	&\left. - \hat{m}\hat{q}^{-1} \mZ_{\w^\star} f_{\w^\star} f_{\w} +    \( \hat{m} \hat{q}^{-1}  \mZ_{\w^\star} f_{\w} f_{\w^\star}   + \mZ_{\w^\star} \partial_\omega f_{\w}   - \(  \partial_\omega f_{\w} + f_{\w}^2  \) \mZ_{\w^\star} \)    \] \\
	&= - \frac{1}{2} \EE_{\xi} \[    \mZ_{\w^\star}\( \hat{m}  \hat{q}^{-1/2}  \xi , \hat{m} \hat{q}^{-1} \hat{m} \)   f_{\w}\(\hat{q}^{1/2}\xi,\hat{Q} + \hat{q}\)^2       \] \tag{Simplifications with \eqref{appendix:annex:derivatives}}
\end{align*}
leading to
\begin{align}
	q &= - 2\partial_{\hat{q}} \Psi_{\w}= \EE_{\xi} \[ \mZ_{\w^\star}\( \hat{m}  \hat{q}^{-1/2}  \xi , \hat{m} \hat{q}^{-1} \hat{m}  \) f_{\w} \(  \hat{q}^{1/2}\xi  , \hat{q} + \hat{Q} \)^2   \]
\end{align}

\paragraph{Equation over $m$}
\begin{align*}
	\partial_{\hat{m}} \Psi_{w} &= \partial_m \EE_{\xi} \[  \mZ_{\w^\star}\( \hat{m}  \hat{q}^{-1/2}  \xi , \hat{m} \hat{q}^{-1} \hat{m} \)  \log \mZ_{\w}\(\hat{q}^{1/2}\xi,\hat{Q} + \hat{q}\)  \] \\
	&= \EE_{\xi} \[ \( \partial_{\hat{m}} \omega^\star \partial_\omega \mZ_{\w^\star} + \partial_{\hat{m}} V^\star \partial_V \mZ_{\w^\star}  \) \log \mZ_{\w}  \] \\
	&= \EE_{\xi} \[ \( \hat{q}^{-1/2}\xi f_{\w^\star} \mZ_{\w^\star} - \hat{m} \hat{q}^{-1} \( \partial_\omega f_{\w^\star} + f_{\w^\star}^2 \) \mZ_{\w^\star}  \) \log \mZ_{\w}  \]  \\
	&= \EE_{\xi} \[ \hat{m} \hat{q}^{-1} \partial_\xi  \(f_{\w^\star} \mZ_{\w^\star} \log \mZ_{\w} \) -  \( \partial_\omega f_{\w^\star} + f_{\w^\star}^2 \) \mZ_{\w^\star}  \log \mZ_{\w}  \] \tag{Stein Lemma} \\
	&= \EE_{\xi} \[ \hat{m} \hat{q}^{-1} \( \partial_\omega f_{\w^\star} \mZ_{\w^\star} \log \mZ_{\w} +  \mZ_{\w^\star} f_{\w^\star}^2  \log \mZ_{\w}   -  \( \partial_\omega f_{\w^\star} + f_{\w^\star}^2 \) \mZ_{\w^\star}  \log \mZ_{\w}    \) \right.\\
	& \left. \hspace{1cm}
	  +  \mZ_{\w^\star}  f_{\w^\star} f_{\w} \] \\
	&= \EE_{\xi } \[ \mZ_{\w^\star}\( \hat{m}  \hat{q}^{-1/2}  \xi , \hat{m} \hat{q}^{-1} \hat{m} \) f_{\w^\star}\( \hat{m}  \hat{q}^{-1/2}  \xi , \hat{m} \hat{q}^{-1} \hat{m} \)  f_{\w}\(\hat{q}^{1/2}\xi,\hat{Q} + \hat{q}\)  \] \tag{Simplifications with \eqref{appendix:annex:derivatives}}
\end{align*}
leading to
\begin{align}
	m &= 2\partial_{\hat{m}} \Psi_{\w} = \EE_{\xi} \[ \mZ_{\w^\star}\( \hat{m}  \hat{q}^{-1/2}  \xi , \hat{m} \hat{q}^{-1} \hat{m}  \) f_{\w^\star}\( \hat{m}  \hat{q}^{-1/2}  \xi , \hat{m} \hat{q}^{-1} \hat{m}  \) f_{\w} \(  \hat{q}^{1/2}\xi  , \hat{q} + \hat{Q} \)   \]
\end{align}
\paragraph{Equation over $Q$}
\begin{align*}
	\partial_{\hat{Q}} \Psi_w\(\hat{Q},\hat{m},\hat{q}  \) &= \partial_{\hat{Q}}  \EE_{\xi} \[ \mZ_{\w^\star}  \( \hat{m}  \hat{q}^{-1/2}  \xi , \hat{m} \hat{q}^{-1} \hat{m}  \) \log \mZ_{\w} \(  \hat{q}^{1/2}\xi  ,    \hat{Q} + \hat{q} \)   \] \\
	&= \EE_{\xi} \[ \mZ_{\w^\star}\( \hat{m}  \hat{q}^{-1/2}  \xi ,\hat{m} \hat{q}^{-1} \hat{m} \) \frac{1}{\mZ_w}  \partial_{\hat{Q}} \Lambda \partial_{\Lambda} \mZ_{\w} \(  \hat{q}^{1/2}\xi  ,    \hat{Q} + \hat{q} \)  \] \\
	&= - \frac{1}{2} \EE_{\xi} \[ \mZ_{\w^\star}\( \hat{m}  \hat{q}^{-1/2}  \xi ,\hat{m} \hat{q}^{-1} \hat{m} \)    \(\partial_\gamma f_w  + f_w^2  \)  \] \tag{with \eqref{appendix:annex:derivatives}}
\end{align*}
hence
\begin{align}
\begin{aligned}
	Q &= - 2  \partial_{\hat{Q}} \Psi_{\w} =  \EE_{\xi} \[ \mZ_{\w^\star}\( \hat{m}  \hat{q}^{-1/2}  \xi , \hat{m} \hat{q}^{-1} \hat{m}  \)  \partial_\gamma f_{\w} \(  \hat{q}^{1/2}\xi  , \hat{q} + \hat{Q} \) \] + q \,.
\end{aligned}
\end{align}

\newpage
\section{Applications}
\label{appendix:applications}

In this section, we provide details of the results presented in Sec.~\ref{sec:applications}. In particular as an illustration, we consider a Gaussian teacher ($\rho_{\w^\star}=1$) with a noiseless sign activation:
\begin{align}
	P_{\out^\star}(y|z) &= \delta\(y - \sign(z)\)\,, && P_{\w^\star}(w^\star) = \mN_{w^\star}\(0,\rho_{\w^\star}\)\,,
	\label{appendix:applications:teacher_channel_weights}
\end{align}
whose corresponding denoising functions are derived in eq.~\eqref{appendix:definitions:application:sign} and eq.~\eqref{appendix:definitions:application:gaussian}.\\

\begin{remark}
	Note that performances of ERM with $\rL_2$ regularization for a teacher with Gaussian weights $P_{\w^\star}(w) = \mN_w\(0,1\)$ or binary weights $P_{\w^\star}(w) =\frac{1}{2}\( \delta(w-1) + \delta(w+1)\)$, will be similar. Indeed free entropy terms $\Psi_{\w}$ eq.~\eqref{appendix:replicas:free_entropy_terms_non_bayes} for a Gaussian prior \eqref{appendix:annex:free_entropy:gaussian} and for binary weights \eqref{appendix:annex:free_entropy:binary} are equal in this setting, so do the set of fixed point equations.
\end{remark}

\subsection{Bayes-optimal estimation}
\label{appendix:applications:bayes}
Using expressions 
eq.~\eqref{appendix:definitions:application:sign} and eq.~\eqref{appendix:definitions:application:gaussian}, corresponding to the \emph{teacher} model eq.~\eqref{appendix:applications:teacher_channel_weights}, 
the prior equation eq.~\eqref{appendix:se_equations_generic:bayes} can be simplified while the channel one has no analytical expression. Hence the set of fixed point equations eqs.~\eqref{appendix:annex:free_entropy:gaussian:bayes} for the model eq.~\eqref{appendix:applications:teacher_channel_weights} read 
\begin{align}
	q_\bayes &=  \frac{\hat{q}_\bayes}{1+\hat{q}_\bayes}\,,&& \hat{q}_\bayes = \alpha \EE_{y, \xi } \[ \mZ_{\out^\star} \( y,  q_\bayes^{1/2}\xi, \rho_{\w^\star} - q_\bayes \)   f_{\out^\star} \( y,  q_\bayes^{1/2}\xi, \rho_{\w^\star} - q_\bayes \)^2  \]\,.
\label{appendix:application:se_equations_bayes}
\end{align}

\paragraph{Large $\alpha$ behaviour}
\label{appendix:applications:bayes_large_alpha}
Let us derive the large $\alpha$ behaviour of the Bayes-optimal generalization error eq.~\eqref{appendix:generalization_error:bayes} that depends only on the overlap $q_\bayes$ solution of eq.~\eqref{appendix:application:se_equations_bayes}.
$q_\bayes$ measures the correlation with the ground truth, so we expect that in the limit $\alpha \to \infty$, $q_\bayes \to 1$. Therefore, we need to extract the behaviour of $\hat{q}_\bayes$ in eq.~\eqref{appendix:application:se_equations_bayes}. Injecting expressions $\mZ_{\out^\star}$ and $f_{\out^\star}$ from eq.~\eqref{appendix:definitions:application:sign}, we obtain

\begin{align*}
	\hat{q}_\bayes &=  \alpha \EE_{y, \xi } \[ \mZ_{\out^\star} \( y,  q_\bayes^{1/2}\xi, 1 - q_\bayes \)   f_{\out^\star} \( y,  q_\bayes^{1/2}\xi, 1 - q_\bayes \)^2    \]\\
	&= 2 \alpha \int D\xi y^2 \frac{\mN_{\sqrt{q} \xi }(0,1-q_\bayes)^2}{\frac{1}{2} \(1 + \erf\(\frac{\sqrt{q_\bayes}\xi }{\sqrt{2(1-q_\bayes)}} \) \)} =\frac{2}{\pi} \frac{\alpha}{1-q_\bayes}   \int D\xi \frac{ e^{-\frac{q_\bayes \xi^2}{1-q_\bayes}} }{ \(1 + \erf\(\frac{\sqrt{q_\bayes}\xi }{\sqrt{2(1-q_\bayes)}} \) \)}\,,
\end{align*}
where the last integral can be computed in the limit $q_\bayes \to 1$:
\begin{align*}
	 &\int D\xi \frac{ e^{-\frac{q_\bayes \xi^2}{1-q_\bayes}} }{ \(1 + \erf\(\frac{\sqrt{q_\bayes}\xi }{\sqrt{2(1-q_\bayes)}} \) \)}  = \int \d\xi \frac{\frac{- e^{\frac{\xi ^2 (q_\bayes+1)}{2 (1-q_\bayes)}}}{\sqrt{2 \pi }}}{\(1 + \erf\(\frac{\sqrt{q_\bayes}\xi }{\sqrt{2(1-q_\bayes)}} \) \)}\\
	&\simeq \int \d\xi \frac{\frac{- e^{\frac{\xi ^2 }{1-q_\bayes}}}{\sqrt{2 \pi }}}{\(1 + \erf\(\frac{\xi }{\sqrt{2 (1-q_\bayes)}} \) \)} = \frac{\sqrt{1-q_\bayes}}{\sqrt{2\pi}} \int \d\eta \frac{e^{-\eta^2}}{1+\erf\(\frac{\eta}{\sqrt{2}}\)} = \frac{c_0}{\sqrt{2\pi}} \sqrt{1-q_\bayes}\,,
\end{align*}
with $c_0 \equiv \int \d\eta \frac{e^{-\eta^2}}{1+\erf\(\frac{\eta}{\sqrt{2}}\)} \simeq 2.83748$.
Finally, we obtain in the large $\alpha$ limit:
\begin{align*}
	\hat{q}_\bayes &= k \frac{\alpha}{\sqrt{1-q_\bayes}} \,, && q_\bayes = \frac{\hat{q}_\bayes}{1+\hat{q}_\bayes}\,,
\end{align*}
with $k\equiv \frac{2 c_0}{\pi \sqrt{2 \pi}} \simeq 0.720647 $. The above equations can be solved analytically and lead to:
\begin{align*}
	q_\bayes &= \frac{1}{2} \(\alpha  k \sqrt{\alpha ^2 k^2+4}-\alpha ^2 k^2\) \underset{\alpha \to \infty}{\simeq}  1-\frac{1}{\alpha ^2 k^2}\,, && \hat{q}_\bayes = k^2 \alpha^2\,,
\end{align*}
and therefore the Bayes-optimal asymptotic generalization error is given by
\begin{align}
	e_{\rm g}^{\rm bayes}(\alpha) = \frac{1}{\pi} \textrm{acos}\( \sqrt{q_\bayes} \)  \underset{\alpha \to \infty}{\simeq} \frac{1}{k \pi} \frac{1}{\alpha} \simeq \frac{0.4417}{\alpha} \,.
\end{align}

\subsection{Generalities on ERM with $\rL_2$ regularization}
\label{appendix:applications:L2_regularization}
Combining the teacher update for Gaussian weights eq.~\eqref{appendix:definitions:application:gaussian} with the update associated to the $\rL_2$ regularization eq.~\eqref{appendix:definitions:application:gaussian}, the free entropy term can be explicitly derived in \eqref{appendix:annex:free_entropy:gaussian}. Taking the corresponding derivatives, the fixed point equations for $m, q, \Sigma$ eq.~\eqref{appendix:se_equations_generic_not_simplified} are thus explicit and simply read
\begin{align}
\begin{aligned}
	\Sigma &= \frac{1}{\lambda + \hat{\Sigma}}\,,\hhspace && q =  \frac{\rho_{\w^\star}\hat{m}^2+\hat{q}}{(\lambda + \hat{\Sigma})^2}\,,\hhspace && m= \frac{\rho_{\w^\star}\hat{m}}{\lambda + \hat{\Sigma}}\,.
\end{aligned}
\label{appendix:application:se_equations:L2}
\end{align}
All the following examples have been performed with a $\rL_2$ regularization, so that the above equations \eqref{appendix:application:se_equations:L2} remain valid for the different losses considered in Sec.~\ref{sec:applications}. In the next subsections, we provide some details on the asymptotic performances of ERM with various losses with $\rL_2$ regularization and $\rho_{\w^\star} = 1$.

In general for a generic loss, the proximal eq.~\eqref{appendix:definitions:update_functions_map} has no analytical expression, just as the fixed point equations \eqref{appendix:se_equations_generic}.
The square loss is particular in the sense eqs.~\eqref{appendix:se_equations_generic} have a closed form solution. Also the Hinge loss has an analytical proximal. Apart from that, eqs.~\eqref{appendix:se_equations_generic} must be solved numerically. However it is useful to notice that the proximal can be easily found for a two times differentiable loss using eq.~\eqref{appendix:definitions:application:differentiable}. This is for example the case of the logistic loss.

\subsection{Ridge regression - Square loss with $\rL_2$ regularization}
\label{appendix:applications:ridge}
The prior equations over $m, q, \Sigma$ are already derived in eq.~\eqref{appendix:application:se_equations:L2} and remain valid. Combining eq.~\eqref{appendix:definitions:application:sign} for the considered sign channel with a potential additional Gaussian noise $\Delta^\star$ in \eqref{appendix:applications:teacher_channel_weights} and the square loss eq.~\eqref{appendix:definitions:application:square}, the channel fixed point equations for $\hat{q}, \hat{m}, \hat{\Sigma}$ eqs.~\eqref{appendix:se_equations_generic} lead to 
\begin{align}
\begin{aligned}
	\Sigma &= \frac{1}{\lambda + \hat{\Sigma}}\,,  \hspace{4cm} && \hat{\Sigma}  = \frac{\alpha}{\Sigma + 1}\,, \\
	q &=  \frac{\hat{m}^2+\hat{q}}{(\lambda + \hat{\Sigma})^2}\,, && \hat{q} = \alpha  \frac{( 1+q +\Delta^\star ) - 2 \sqrt{\frac{2 m^2}{\pi }} }{(\Sigma +1)^2\,,} \\
	m&= \frac{\hat{m}}{\lambda + \hat{\Sigma}}\,,
	&& \hat{m} = \frac{\alpha \sqrt{\frac{2}{\pi }}}{\Sigma + 1}\,.
\end{aligned}
\label{appendix:applications:se_equations:ridge}
\end{align}

\subsubsection{Pseudo-inverse estimator}
\label{appendix:applications:analytical_ridge_pseudo}
We analyze the fixed point equations eqs.~\eqref{appendix:applications:se_equations:ridge} for the \emph{pseudo-inverse} estimator, that is in the limit $\lambda \to 0$.

\paragraph{Solving $\Sigma$}
Combining the two first equations over $\Sigma$ and $\hat{\Sigma}$ in \eqref{appendix:applications:se_equations:ridge}, we obtain
\begin{align}
 	\Sigma &= \frac{\sqrt{(\alpha +\lambda -1)^2+4 \lambda }-\alpha -\lambda +1}{2 \lambda }  \underset{\lambda \to 0}{\simeq} \frac{1-\alpha + |\alpha -1| }{2 \lambda }+\frac{1}{2} \(\frac{\alpha +1}{|\alpha -1|}-1\)\,,
 	\label{appendix:applications:ridge:Sigma}
\end{align}
that exhibits two different behaviour depending if $\alpha < 1$ or $\alpha > 1$.

\paragraph{Regime $\alpha < 1$}
In this regime $\alpha < 1$, eq.~\eqref{appendix:applications:ridge:Sigma} becomes
\begin{align*}
 	\Sigma &= \frac{1 - \alpha}{\lambda} + \frac{\alpha }{1-\alpha}\,,
\end{align*}
that leads to the closed set of equations in the limit $\lambda \to 0$
\begin{align}
\begin{aligned}
	\Sigma &= \frac{(1-\alpha)^2 + \alpha \lambda}{\lambda\(1-\alpha\)} \underset{\lambda \to 0}{\simeq} \frac{1-\alpha}{\lambda}\,,   
	\hspace{2cm} && \hat{\Sigma}  = \frac{(1-\alpha) \alpha  \lambda }{(\alpha -1)^2+\lambda } \underset{\lambda \to 0}{\simeq} \frac{\lambda \alpha}{1-\alpha}\,, \\
	m&=\frac{\alpha(1-\alpha)}{\lambda + (1-\alpha)} \sqrt{\frac{2}{\pi}} \underset{\lambda \to 0}{\simeq} \alpha  \sqrt{\frac{2}{\pi}}\,,
	&& \hat{m} = \frac{\lambda \alpha \sqrt{\frac{2}{\pi }}}{\lambda + (1-\alpha)} \underset{\lambda \to 0}{\simeq} \frac{\lambda \alpha \sqrt{\frac{2}{\pi }}}{1-\alpha} \,,  \\
	q & \underset{\lambda \to 0}{\simeq} \frac{\alpha  ( \pi (1+\Delta^\star) -2 \alpha  )}{\pi  (1 -\alpha)}   \,,
	&& \hat{q} \underset{\lambda \to 0}{\simeq} \frac{\alpha  \lambda ^2 (2 (\alpha -2) \alpha +\pi  (\Delta^\star +1))}{\pi  (1- \alpha ) (1-\alpha +\lambda)^2} \,.
\end{aligned}
\end{align}
Hence we obtain for $\alpha < 1$:
\begin{align}
	m^{\rm pseudo} &=  \alpha  \sqrt{\frac{2}{\pi}}  &&	 q^{\rm pseudo}=  \frac{\alpha  ( \pi (1+\Delta^\star) -2 \alpha  )}{\pi  (1 -\alpha)} 
\end{align}
and the corresponding generalization error 
\begin{align}
	e_{\rm g}^{\rm pseudo} \(\alpha\) &= \frac{1}{\pi} \acos\( \sqrt{\frac{2\alpha(1-\alpha)}{\pi\(1+\Delta^\star\) - 2\alpha}} \)	 \textrm{ if } \alpha < 1\,.
\end{align}
Note in particular that $e_{\rm g}^{\rm pseudo} \(\alpha\) \underlim{\alpha}{1} 0.5$, meaning that the interpolation peak at $\alpha=1$ reaches the maximum generalization error.

\paragraph{Regime $\alpha > 1$}
Eq.~\eqref{appendix:applications:ridge:Sigma} becomes
\begin{align*}
 	\Sigma &= \frac{1}{2} \(\frac{\alpha +1}{\alpha -1}-1\) =  \frac{1}{2} \(\frac{\alpha +1}{\alpha -1}-1\) = \frac{1}{\alpha-1}\,.
\end{align*}
In the limit $\lambda \to 0$, the fixed point equations eqs.~\eqref{appendix:applications:se_equations:ridge} reduce to
\begin{align}
\begin{aligned}
	\Sigma + 1  &= \frac{\alpha}{\alpha - 1}\,,  \hspace{2cm} 
	&& \hat{\Sigma}  = \alpha - 1\,, \\
	q &=  \frac{\( \alpha - 1\)^2 \frac{2}{\pi } +\hat{q}}{\(\alpha - 1\)^2}\,,  \hspace{2cm} 
	&& \hat{q} = \frac{(\alpha-1)^2}{\alpha} \((1+q+\Delta^\star) - \frac{4}{\pi}  \)\,,  \\
	m&=  \sqrt{\frac{2}{\pi }}\,, \hspace{2cm} 
	&& \hat{m} =\( \alpha - 1\) \sqrt{\frac{2}{\pi }}\,.  \\
\end{aligned}
\label{main:se_equations:pseudoinverse}
\end{align}

In particular we obtain for $\alpha > 1$:
\begin{align}
	m^{\rm pseudo} &= \sqrt{\frac{2}{\pi}}\,, &&	 q^{\rm pseudo}= \frac{1}{\alpha - 1}\( 1 + \Delta^\star + \frac{2}{\pi}\(\alpha-2\) \)\,,
\end{align}
and the corresponding generalization error
\begin{align}
	e_{\rm g}^{\rm pseudo}\(\alpha\) &= \frac{1}{\pi} \acos\( \sqrt{\frac{\alpha-1}{\frac{\pi}{2}\(1 + \Delta^\star\) +\(\alpha-2\) }} \) \textrm{ if } \alpha > 1 \,.
\end{align}

\paragraph{Large $\alpha$ behaviour}
From this expression we easily obtain the large $\alpha$ behaviour of the pseudo-inverse estimator:
\begin{align*}
	e_{\rm g}^{\rm pseudo}(\alpha) &= \frac{1}{\pi} \acos\( \sqrt{\frac{\alpha-1}{\frac{\pi}{2}\(1 + \Delta^\star\) +\(\alpha-2\) }} \) = \frac{1}{\pi} \acos\( \( 1 + \frac{C}{\alpha-1} \)^{1/2} \) \underset{\alpha \to \infty}{\simeq}  \frac{c}{\sqrt{\alpha}}
\end{align*}
where $C=\frac{\pi}{2}\(1+\Delta^\star\)-1$ and $c= \frac{\sqrt{C}}{\pi}$. In particular for a noiseless teacher $\Delta^\star=0$, $c= \sqrt{\frac{\pi-2}{2\pi^2}} \simeq 0.240487$, leading to 
\begin{align}
	e_{\rm g}^{\rm pseudo}(\alpha)  \underset{\alpha \to \infty}{\simeq}  \frac{0.2405}{\sqrt{\alpha}}\,.
\end{align}

\subsubsection{Ridge at finite $\lambda$}
\label{appendix:applications:analytical_ridge_map}
Let us now consider the set of fixed point equation eq.~\eqref{appendix:applications:se_equations:ridge} for finite $\lambda \ne 0 $.  Defining
\begin{align*}
	t_0 & \equiv \sqrt{(\alpha +\lambda -1)^2+4 \lambda }\\
	t_1 &\equiv \(t_0+\alpha +\lambda +1\)^{-1}\\
	t_2 &\equiv \sqrt{2 (\alpha +1) \lambda +(\alpha -1)^2+\lambda ^2}\\
	t_3 &\equiv \(t_2+\alpha +\lambda +1\)^{-1}	 \\
	t_4 & \equiv \sqrt{\alpha ^2+2 \alpha  (\lambda -1)+(\lambda +1)^2}\,,
\end{align*}
the equations can be in fact fully solved analytically and read
\begin{align*}
	\Sigma &=  \frac{1}{2} \frac{t_0-\alpha -\lambda +1}{\lambda }\spacecase 
	\hat{\Sigma} &= \frac{1}{2} \big(t_0+\alpha -\lambda -1\big) \spacecase
	q &= \frac{2 \alpha  \(-8 \alpha ^2 t_1 + 2 \alpha +\pi  \Delta^\star +\pi \)}{\pi  \(\alpha ^2+\alpha  \(t_2+2 \lambda -2\)+(\lambda +1) \(t_2+\lambda +1\)\)}\,, \spacecase \spacecase
	\hat{q} &= \big( 4 \alpha  \lambda ^2 \big(\pi  (\Delta^\star +1) \big( t_4+(\alpha +\lambda ) \big(t_2+\alpha +\lambda \big)+2 \lambda +1\big)\\
	& \hspace{1cm} -8 \alpha t_3  \big(t_4+(\alpha +\lambda ) \big(\sqrt{2 (\alpha +1) \lambda 
	+(\alpha -1)^2+\lambda ^2}+\alpha +\lambda \big)+2 \lambda \big) -8 \alpha t_3+4 \alpha ^2\big) \big)\,,  \spacecase
	m&= \frac{2 \sqrt{\frac{2}{\pi }} \alpha }{t_2+\alpha +\lambda +1}\,, \spacecase
	\hat{m} &= \frac{2 \sqrt{\frac{2}{\pi }} \alpha  \lambda }{t_0-\alpha +\lambda +1}\,.
\end{align*}

\paragraph{Generalization error behaviour at large $\alpha$}
Expanding the ratio $\frac{m}{\sqrt{q}}$ in the large $\alpha$ limit, we obtain
\begin{align*}
	\frac{m}{\sqrt{q}} \simeq  1 - \frac{C}{2 \alpha } \textrm{ with } C = \frac{\pi}{2} \(1+ \Delta^\star\) - 1
\end{align*}
leading to
\begin{align}
	e_{\rm g}^{\rm ridge, \lambda}\(\alpha\) &= \frac{1}{\pi} \acos\( \frac{m}{\sqrt{q}} \) \underset{\alpha \to \infty}{\simeq}  \frac{c}{\sqrt{\alpha}}  \textrm{ with } c= \frac{\sqrt{C}}{\pi}\,.
\end{align}
Thus, the asymptotic generalization error for ridge regression with any regularization strength $\lambda \geq 0$ decrease as $\frac{0.2405}{\sqrt{\alpha}}$, similarly to the pseudo-inverse result. 
 
\paragraph{Optimal regularization}
The optimal value $\lambda^{\rm opt}(\alpha)$, introduced in Sec.~\ref{sec:applications}, which minimizes the generalization error at a given $\alpha$ can be found taking the derivative of $\frac{m}{\sqrt{q}}$ and is written as the root of the following functional
\begin{align*}
	F[\alpha, \lambda, \Delta^\star]&= \partial_\lambda \( \frac{m}{\sqrt{q}}\) =  \frac{a_1 a_2}{ a_3 a_4^2}\,,\\
	\text{with}&\\
	a_1 &= - 4 \alpha  \sqrt{\frac{a_4}{\alpha ^2 + \alpha \(t_2+2 \lambda -2\) +(\lambda +1) \(t_2+\lambda +1\)}} \,,\\ 
	a_2 &= 2 \(\alpha ^2 t_3 + \alpha  \(2 \lambda  t_3 + \(t_2+2\) t_3-1 \)+(\lambda +1) \(t_2+\lambda +1\) t_3 \) -\pi  (1+\Delta^\star) \,, \\
	a_3 &= \frac{t_0}{t_1}\,,\\
	a_4 &= \alpha\( 2 -8  t_1\)  + \pi\( 1 + \Delta^\star\)  \,.
\end{align*}
Unfortunately, this functional cannot be analyzed analytically. Instead we plot its value for a wide range of $\alpha$ as a function of $\lambda$ (for $\Delta^\star=0$) and we observe in particular that there exists a unique value $\lambda^{\rm opt}\simeq 0.570796$ as illustrated in Fig.~\ref{fig:ridge_opt_lambda} (\textbf{left}) that is independent of $\alpha$. As an illustration, we show the generalization error of ridge regression with the optimal regularization $\lambda^{\rm opt}=0.5708$ compared to the Bayes-optimal performances in Fig.~\ref{fig:ridge_opt_lambda} (\textbf{right}).
\begin{figure}
	\centering
	\includegraphics[scale=0.5]{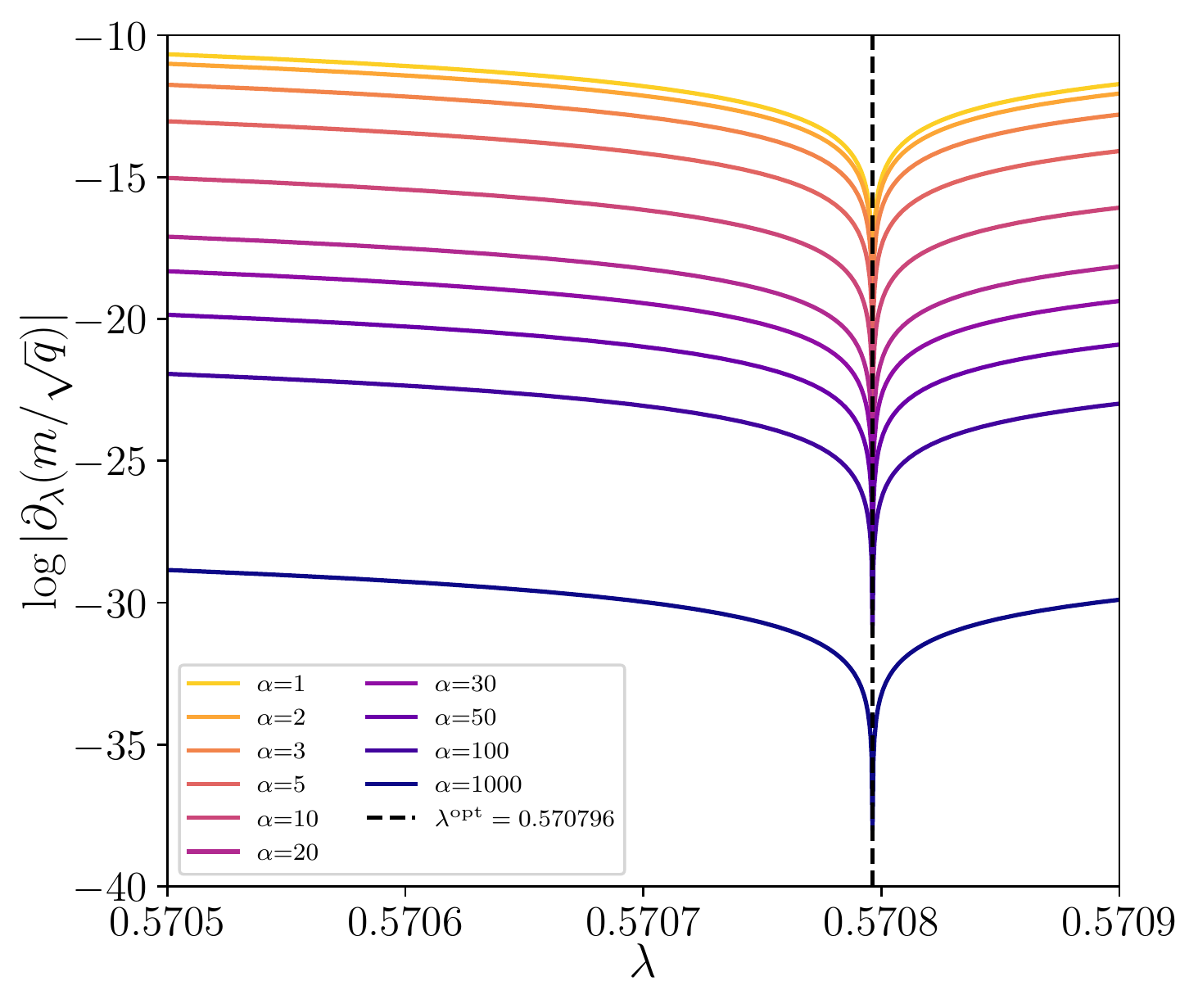}
	\hfill
	\includegraphics[scale=0.5]{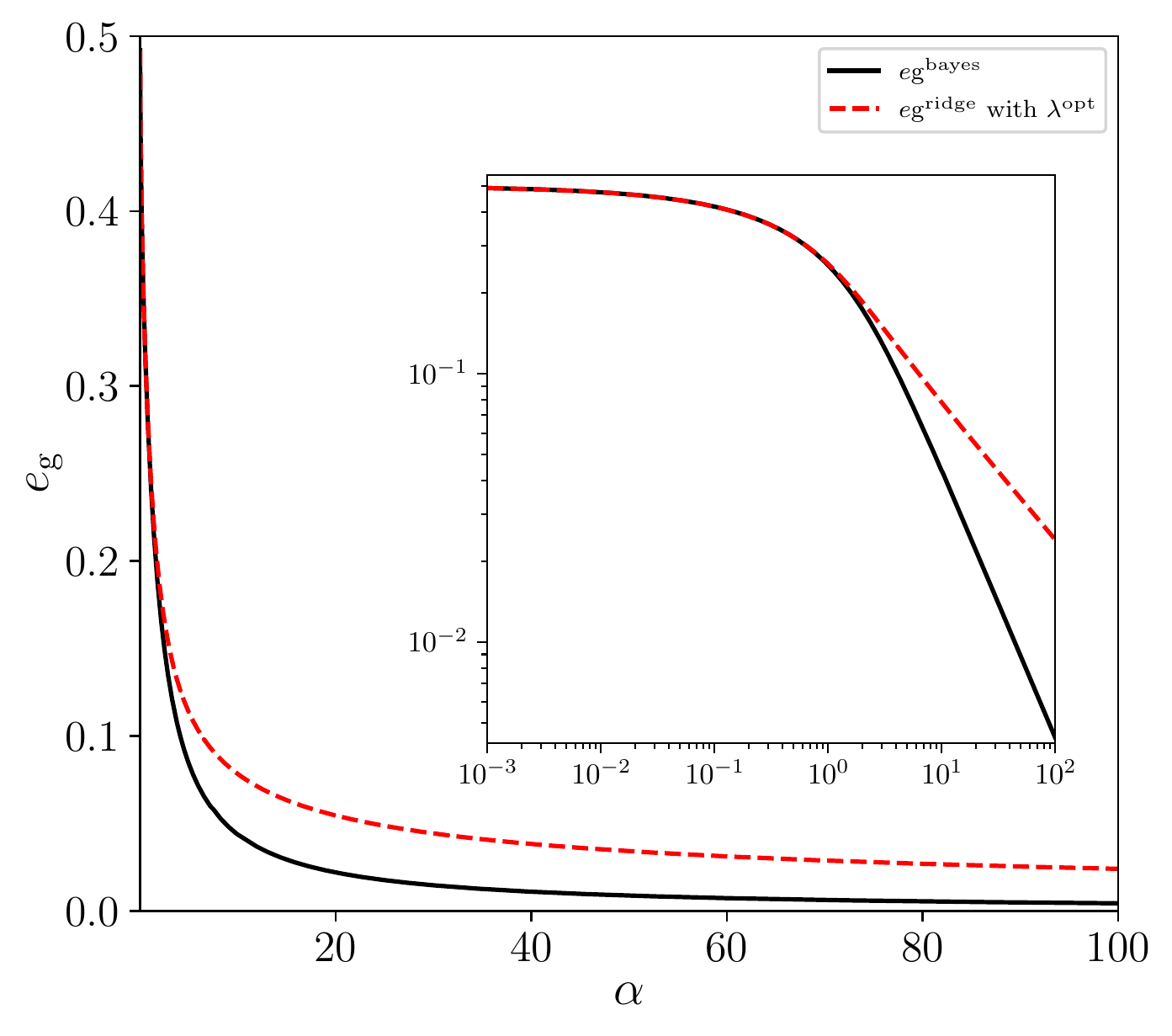}
	\caption{(\textbf{Left})  Absolute value of the derivative of $m/\sqrt{q}$ with respect to $\lambda$ plotted in a logarithmic scale. $\lambda^{\rm opt}$ is reached at the root of the functional $F[\alpha, \lambda]$ that corresponds to the divergence in the logarithmic scale. Plotted for a wide range of $\alpha$, the optimal value is clearly constant and independent of $\alpha$. Its value is approximately $\lambda^{\rm opt}\simeq 0.570796$. (\textbf{Right}) Bayes-optimal (black) vs ridge regression  (dashed red) generalization errors with optimal $\rL_2$ regularization $\lambda^{\rm opt}\simeq 0.570796$.}	
	\label{fig:ridge_opt_lambda}
\end{figure}

\newpage
\subsection{Hinge regression / SVM - Hinge loss with $\rL_2$ regularization}
\label{appendix:applications:hinge}
The hinge loss $l^{\rm hinge}(y,z) = \max\(0, 1- yz\)$ is linear by part and is therefore another simple example of analytical loss to analyze.
In particular its proximal map can computed in eq.~\eqref{appendix:definitions:application:hinge} and the corresponding denoising functions read:
\begin{align}
\begin{aligned}
	f_{\out} \( y,  q^{1/2}\xi, \Sigma \) &=\begin{cases}
		 y  \textrm{ if } \xi y < \frac{1 -\Sigma}{\sqrt{q}} \spacecase 
		\frac{y  -\sqrt{q} \xi }{\Sigma}   \textrm{ if }  \frac{1 -\Sigma}{\sqrt{q}} < \xi y < \frac{1}{\sqrt{q}} \spacecase 
		0 \textrm{ otherwise }
		\end{cases}\,,	\spacecase
	\partial_\omega f_{\out} \( y,  q^{1/2}\xi, \Sigma \)&= \begin{cases}
		-\frac{1}{\Sigma}   \textrm{ if }  \frac{1 -\Sigma}{\sqrt{q}} < \xi y < \frac{1}{\sqrt{q}} \spacecase 
		0 \textrm{ otherwise } 
		\end{cases}\,.
\end{aligned}
\label{appendix:application:hinge:denoising}
\end{align}

The fixed point equations eq.~\eqref{appendix:se_equations_generic} have unfortunately no closed form and need to be solved numerically.

\subsubsection{Max-margin estimator}
As proven in \cite{Rosset04} both the hinge and logistic estimators converge to the \emph{max-margin} solution in the limit $\lambda\to0$ as soon as the data are linearly separable. We will start with the fixed point equations for hinge, whose denoising functions \eqref{appendix:application:hinge:denoising} are analytical. Taking the $\lambda \to 0$ limit is non-trivial and we need therefore to introduce some rescaled variables to obtain a closed set of equations. Numerical evidences at finite $\alpha$ show that we shall use the following rescaled variables: 
\begin{align*}
	\hat{m} &= \Theta\(\lambda\),  &&\hat{q}= \Theta\(\lambda^2\), &&\hat{\Sigma} = \Theta\(\lambda\), && m=\Theta(1), && q= \Theta(1), &&\Sigma = \Theta\(\lambda^{—1}\)\,.
\end{align*}
The fixed point equations eq.~\eqref{appendix:se_equations_generic} simplify and become
\begin{align}
	\begin{aligned}
	m &= \frac{\hat{m}}{1+  \hat{\Sigma}}\,, && q = \frac{\hat{m}^2+\hat{q}}{(1 + \hat{\Sigma})^2}\,, &&\Sigma = \frac{1}{1 + \hat{\Sigma}}\,, \\
	\hat{m} &= \frac{2\alpha}{\Sigma} \mI_{\hat{m}}(q,\eta)\,, && \hat{q}= \frac{2\alpha}{\Sigma^2} \mI_{\hat{q}}(q,\eta) \,, && \hat{\Sigma} = \frac{2\alpha}{\Sigma} \mI_{\hat{\Sigma}}(q,\eta)\,,
	\end{aligned}
	\label{appendix:applications:hinge:max_margin:state_evollution}
\end{align}

with 
\begin{align}
	\begin{aligned}	
		\mI_{\hat{m}}(q,\eta) &\equiv \int_{-\infty}^{\frac{1}{\sqrt{q}}} \d \xi \mN_{\xi}(0,1) \mN_{\xi}\(0,\frac{1-\eta}{\sqrt{\eta}}\)  \(1-\sqrt{q}\xi\)\,, \\ 
		&=\frac{\sqrt{2 \pi } \(\text{erf}\(\frac{1}{\sqrt{2} \sqrt{q(1-\eta)}}\)+1\)+2 e^{-\frac{1}{2 q(1-\eta )}} \sqrt{q(1-\eta )}}{4 \pi }\\
		\mI_{\hat{q}}(q,\eta) &\equiv \int_{-\infty}^{\frac{1}{\sqrt{q}}} \d \xi \mN_{\xi}(0,1)  \frac{1}{2} \(1 + \erf\(\frac{\sqrt{\eta}\xi}{\sqrt{2(1-\eta)}} \) \) \(1-\sqrt{q}\xi\)^2 \,, \\
		\mI_{\hat{\Sigma}}(q,\eta) &\equiv \int_{-\infty}^{\frac{1}{\sqrt{q}}} \d \xi \mN_{\xi}(0,1)  \frac{1}{2} \(1 + \erf\(\frac{\sqrt{\eta}\xi}{\sqrt{2(1-\eta)}} \) \)\,.
	\end{aligned}
\end{align}

\paragraph{Large $\alpha$ expansion}
Numerically at large $\alpha$ (and $\lambda \to 0$), we obtain the following scalings
\begin{align}
\begin{aligned}
	q &= \Theta(\alpha^2)\,, && m= \Theta(\alpha)\,, && \Sigma= \Theta(1)\,, && \hat{q} = \Theta(1)\,, && \hat{m}= \Theta(\alpha)\,, && \hat{\Sigma}= \Theta(1)\,.
\end{aligned}
\end{align}
Therefore, in order to close the equations, we introduce new variables $(c_q, c_\eta)$ such that
\begin{align}
	q &\underset{\alpha \to \infty}{=} c_q \alpha^2\,, && \eta = 1 - \frac{c_\eta}{\alpha^2}\,.
	\label{appendix:applications:hinge:max_margin:scaling_assumption}
\end{align}
In this limit, we can extract the large $\alpha$ behaviours of integrals $\mI_{\hat{m}}, \mI_{\hat{q}}, \mI_{\hat{\Sigma}}$:
\begin{align}
	\begin{aligned}	
		\mI_{\hat{m}}(q,\eta) &= \mI_{\hat{m}}^\infty(c_q,c_\eta)\,, && \mI_{\hat{q}}(q,\eta) = \frac{\mI_{\hat{q}}^\infty(c_q,c_\eta)}{\alpha}\,, &&\mI_{\hat{\Sigma}}(q,\eta) = \frac{\mI_{\hat{\Sigma}}^\infty(c_q,c_\eta)}{\alpha}\,,\\
	\end{aligned}
\end{align}
where $\mI_{\hat{m}}^\infty, \mI_{\hat{q}}^\infty, \mI_{\hat{\Sigma}}^\infty$ are $\Theta(1)$ and read
\begin{align}
	\begin{aligned}
		\mI_{\hat{m}}^\infty(c_q,c_\eta)&\equiv \frac{\sqrt{2 \pi } \(\text{erf}\(\frac{1}{\sqrt{2} \sqrt{c_\eta c_q}}\)+1\)+2 e^{-\frac{1}{2 c_\eta c_q}} \sqrt{c_\eta c_q}}{4 \pi }\,, \\
		\mI_{\hat{q}}^\infty(c_q,c_\eta)&\equiv  \frac{e^{-\frac{1}{2 c_\eta c_q}} \(\sqrt{2 \pi } (3 c_\eta c_q+1) e^{\frac{1}{2 c_\eta c_q}} \(\text{erf}\(\frac{1}{\sqrt{2} \sqrt{c_\eta c_q}}\)+1\)+4 (c_\eta c_q)^{3/2}+2 \sqrt{c_\eta c_q}\)}{12 \pi   \sqrt{c_q}}\,, \\
		\mI_{\hat{\Sigma}}^\infty(c_q,c_\eta)&\equiv  \frac{\sqrt{2 \pi } \(\text{erf}\(\frac{1}{\sqrt{2} \sqrt{c_\eta c_q}}\)+1\)+2 e^{-\frac{1}{2 c_\eta c_q}} \sqrt{c_\eta c_q}}{4 \pi   \sqrt{c_q}}\,.
	\end{aligned}
\end{align}
Hence the set of fixed-point equations eq.~\eqref{appendix:applications:hinge:max_margin:state_evollution} simplifies to:
\begin{align}
	\begin{aligned}
		\hat{\Sigma} &= \frac{2\mI_{\hat{\Sigma}}^\infty(c_q,c_\eta)}{1 - 2\mI_{\hat{\Sigma}}^\infty(c_q,c_\eta)}\,, && \Sigma = 1 - 2\mI_{\hat{\Sigma}}^\infty(c_q,c_\eta)\\
		\hat{m} &= \frac{2 \alpha \mI_{\hat{m}}^\infty(c_q,c_\eta)}{ 1 - 2\mI_{\hat{\Sigma}}^\infty(c_q,c_\eta)} \,, && m =   2 \alpha \mI_{\hat{m}}^\infty(c_q,c_\eta)\\
		\hat{q} &= \frac{2 \mI_{\hat{q}}^\infty(c_q,c_\eta)}{\(1 - 2\mI_{\hat{\Sigma}}^\infty(c_q,c_\eta)\)^2}\,, && q = 4 \alpha^2 \(\mI_{\hat{m}}^\infty(c_q,c_\eta)\)^2 + 2 \mI_{\hat{q}}^\infty(c_q,c_\eta)\,,
	\end{aligned}	
\end{align}
which can be closed by rewriting the equations eqs.~\eqref{appendix:applications:hinge:max_margin:scaling_assumption}:
\begin{align}
	\begin{aligned}
	\eta &= \frac{m^2}{q} \equiv 1 - \frac{c_\eta}{\alpha^2} = 1 - \frac{\mI_{\hat{q}}^\infty(c_q,c_\eta)}{2 \(\mI_{\hat{m}}^\infty(c_q,c_\eta)\)^2}\frac{1}{\alpha^2}\,, \spacecase
	q &= c_q \alpha^2 \simeq 4 \alpha^2 \(\mI_{\hat{m}}^\infty(c_q,c_\eta)\)^2\,.
	\end{aligned}
\end{align}
Equivalently $(c_q^\star, c_\eta^\star)$ is the root of the set of non-linear fixed point equations $(F_\eta(c_q, c_\eta), F_q(c_q, c_\eta))$:
\begin{align}
	F_\eta(c_q, c_\eta) &\equiv  \frac{\mI_{\hat{q}}^\infty(c_q,c_\eta)}{2 \(\mI_{\hat{m}}^\infty(c_q,c_\eta)\)^2} - c_\eta \,, && F_q(c_q, c_\eta) \equiv 4 \(\mI_{\hat{m}}^\infty(c_q,c_\eta)\)^2  -c_q \,,
\end{align}
that cannot be solved analytically. However a unique numerical solution is found and lead to $(c_q^\star, c_\eta^\star) = (0.9911, 2.4722)$. Therefore the generalization error of the max-margin estimator in the large $\alpha$ regime is given by
\begin{align}
	e_{\rm g}^{\rm max-margin} (\alpha) = \frac{1}{\pi} \arccos \( \frac{m}{\sqrt{q}}\) \underset{\alpha \to \infty}{\simeq}  \frac{1}{\pi} \arccos \( 1 - \frac{c_\eta^\star}{\alpha^2} \) \underset{\alpha \to \infty}{\simeq}  \frac{K}{\alpha}\,,
\end{align}
with $K=\frac{\sqrt{c_\eta^\star}}{\pi} \simeq 0.5005$, leading to
\begin{align}
	e_{\rm g}^{\rm max-margin} (\alpha) \underset{\alpha \to \infty}{\simeq} \frac{0.5005}{\alpha}\,.
\end{align}

\subsection{Logistic regression}
\label{appendix:applications:logistic}
The logistic loss is a combination of the cross entropy loss  $l (y, z) = -y \log(\sigma(z)) -(1-y) \log(1-\sigma(z)) $ with as sigmoid activation function $\sigma$, that simplifies for binary labels $y\pm1$ to $l^{\rm logistic} (y,z) = \log(1+\exp(-y z))$ with the two first derivatives given by
\begin{align*}
	\partial_z l^{\rm logistic}(y,z) &= -\frac{y}{e^{z y}+1}\,, 
	&& \partial^2_{z} l^{\rm logistic}(y,z) = \frac{y^2}{2 (1 +  \textrm{cosh}\(z y\))} = \frac{y^2}{4 \textrm{cosh}\(\frac{yz}{2}\)} \,.
\end{align*}
Its proximal is not analytical, but it can be written as the solution of the implicit equation \eqref{appendix:definitions:application:proximal_differentiable}
providing the corresponding denoising functions \eqref{appendix:definitions:application:differentiable}. Solving the fixed point equations \eqref{appendix:se_equations_generic}, we obtain performances that approach closely the Bayes-optimal baseline as illustrated in Fig.~\ref{fig:appendix:gen_error_logistic} (\textbf{left}).
\begin{figure}[htb!]
	\centering
	\includegraphics[scale=0.5]{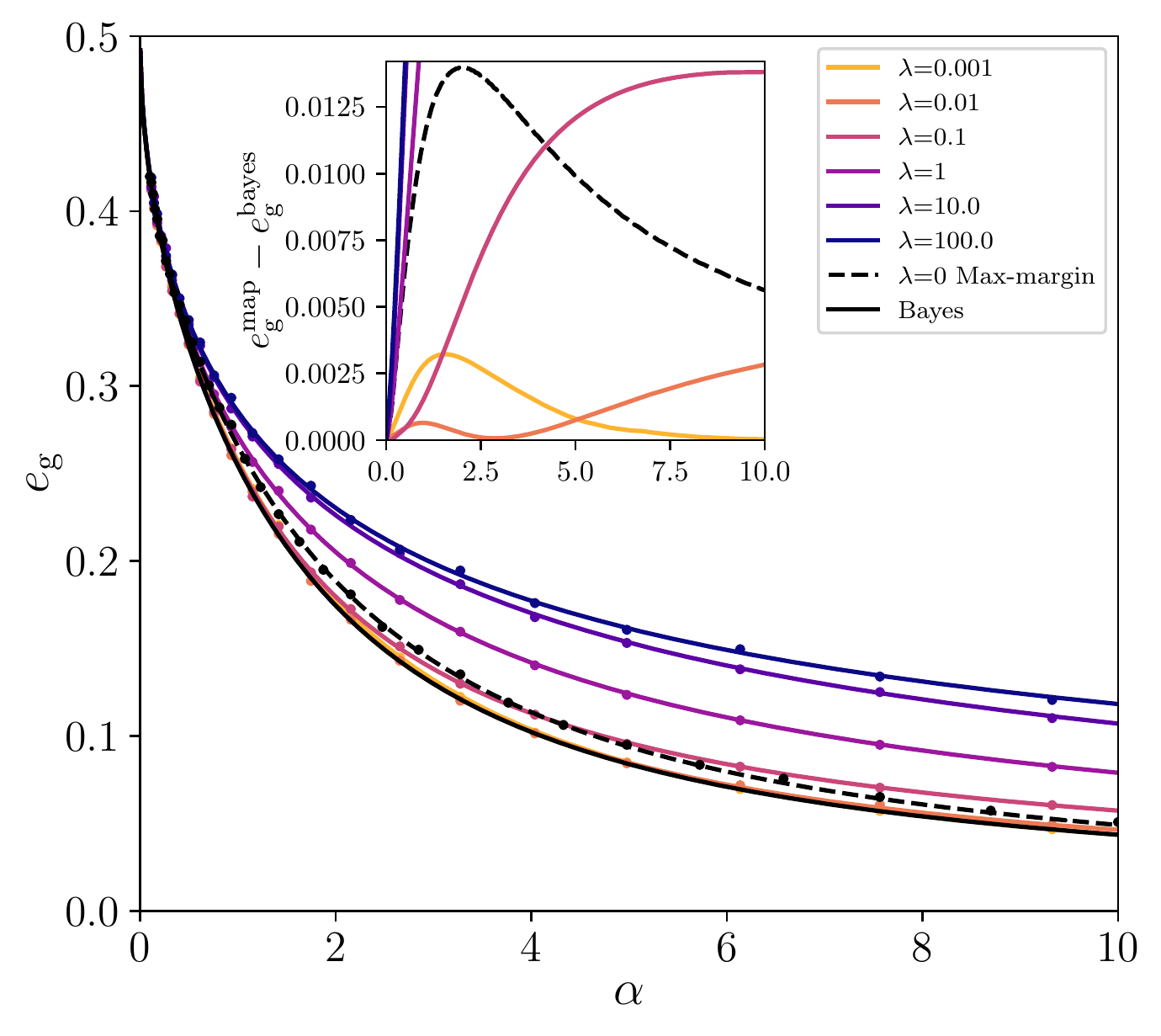}
	\hfil
	\includegraphics[scale=0.5]{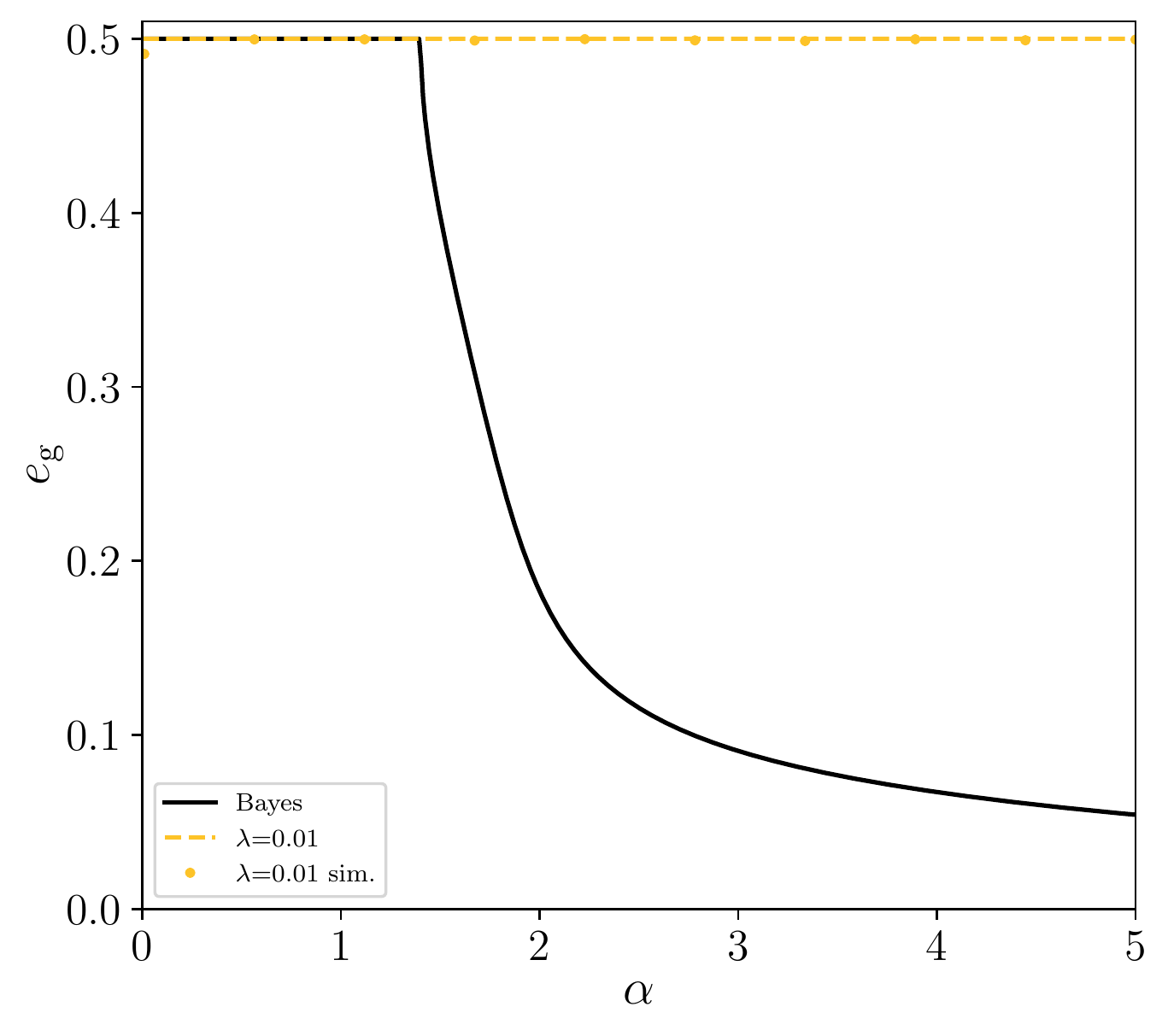}
	\caption{	
	 (\textbf{Left}) Logistic regression - Generalization error as a function of $\alpha$ for different regularizations strength $\lambda$. Decreasing $\lambda$, the generalization error approaches very closely the Bayes-optimal error (black line). The difference with the Bayes error is shown as an inset. Logistic flirts with Bayes error  but never achieves it exactly. The asymptotic behaviour is compared to numerical logistic regression with $\ndim=10^{3}$ and averaged over $n_s=20$ samples, performed with the default method \textit{LogisticRegression} of the \texttt{scikit-learn} package \cite{scikit-learn}.	 
	 (\textbf{Right}) Rectangle door teacher with $\kappa=0.6745$ - Bayes-optimal generalization error (black) compared to asymptotic generalization performances of $\rL_2$ logistic regression (dashed yellow line) and numerical ERM (crosses).
	 }
	\label{fig:appendix:gen_error_logistic}	
\end{figure}

\subsection{Logistic with non-linearly separable data - A rectangle door teacher}
\label{appendix:applications:rectangle}

The analysis of ERM for the linearly separable dataset generated by \eqref{appendix:applications:teacher_channel_weights} reveals that logistic regression with $\rL_2$ regularization was able to approach very closely Bayes-optimal error. Therefore it seems us very interesting to investigate if logistic regression could perform as well on a more complicated non-linearly separable dataset obtained by a \emph{rectangle door} channel
\begin{align}
\vec{y} = \sign\(\left|\frac{1}{\sqrt{\ndim}} \mat{X}\vec{w}^\star\right| -\kappa\).
\label{appendix:teacher_rectangle}	
\end{align}
This channel has been already considered in \cite{Barbier5451} and we fix the width of the door to $\kappa= 0.6745$ to obtain labels $\pm 1$ with probability $0.5$. 
We then compare the ERM performances of logistic regression with $\rL_2$ regularization to the Bayes-optimal performances given by \eqref{appendix:application:se_equations_bayes} with denoising functions derived in eq.~\eqref{appendix:definitions:application:door}. We show in Fig.~\ref{fig:appendix:gen_error_logistic} (\textbf{right}) the comparison only for an arbitrary hyper-parameter $\lambda=10^{-2}$, as results are similar for any regularization.
As we might expect, the logistic regression is not able to reach the Bayes-optimal generalization error. Both Bayes-optimal and ERM performances are stuck in the symmetric fixed point $m=0$ up to $\alpha_{\rm it} \simeq 1.393$. Above this threshold it becomes unstable and Bayes error decreases to zero in the $\alpha \to 0$ limit, while the logistic regression with arbitrary $\lambda$ remains stuck to its maximal generalization error, meaning that in this non-linearly separable case, the logistic regression largely underperforms Bayes-optimal performances.

\newpage
\section{Reaching Bayes optimality}
\label{appendix:optimal_loss_reg}
In this section, we propose a derivation inspired by 
\cite{Kinouchi1996, Opper1991, Advani2016, Bean2013, Donoho2016, gribonval2011should,NIPS2013_4868,gribonval2018characterization,gribonval2019bayesian, Advani2016b} of the fine-tuned loss and regularizer \eqref{main:opt_loss_reg} discussed in Sec.~\ref{sec:optimality}.  
We assume that the dataset is generated by a teacher \eqref{appendix:teacher_model} such that $\mZ_{\out^\star}(., \omega,.)$ and $\mZ_{\w^\star}(\gamma,.)$ are respectively log-concave in $\omega$ and $\gamma$.
The derivation is based on the GAMP algorithm introduced in \cite{Rangan2010} for the model eq.~\eqref{main:teacher_model}, that we start by recalling.

\subsection{Generalized Approximate Message Passing (GAMP) algorithm}
\label{appendix:optimal_loss_reg:amp}
The GAMP algorithm can be written as the following set of iterative equations that depend on the update functions \eqref{appendix:update_functions_generic}:
\begin{equation}
\begin{cases}
	\hat{\bw}^{t+1} = f_{\w}({\bgamma}^t , \Lambda^t)   \spacecase
	\hat{\bc}_{\w}^{t+1} = \partial_\gamma f_{\w}({\bgamma}^t , \Lambda^t)\spacecase
	\vec{f}_{\out}^{t} = f_{\rm out}\(y, {\bomega^{t}} ,V^{t} \)
	\end{cases}
	\andcase
	\begin{cases}
	\Lambda_i^t &= - \frac{1}{\ndim} \sum_{\mu =1}^\nsamples \mat{X}_{\mu
      i}^2  \partial_\omega f_{\out,\mu}^t \spacecase
	{\bgamma}_i^t &=  \frac{1}{\sqrt{\ndim}} \sum_{\mu =1}^\nsamples \mat{X}_{\mu
      i} f_{\out,\mu}^t  + \Lambda_i^t \hat{w}_{i}^t  \spacecase
	V_{\mu}^t &= \frac{1}{\ndim} \sum_{i=1}^\ndim \mat{X}_{\mu i}^2 \hat{c}_{w,i}^t \spacecase
	\omega_{\mu}^t &= \frac{1}{\sqrt{\ndim}} \sum_{i=1}^\ndim \mat{X}_{\mu
      i} \hat{w}_{i}^t -  V_{\mu}^t f_{\out,\mu}^{t-1}
\end{cases}\,.
	\label{appendix:amp:glm_}
\end{equation}
It has been proven in \cite{Barbier2017b} that the GAMP algorithm with Bayes-optimal update functions $f_{\w}=f_{\w^\star}$ and $f_{\out} = f_{\out^\star}$ \eqref{appendix:definitions:update_functions_bayes} converges to the Bayes-optimal performances in the large size limit.
Yet the GAMP denoising functions are generic and can be chosen as will depending on the statistical estimation method. In particular we may choose the denoising functions for Bayes-optimal estimation \eqref{appendix:definitions:update_functions_bayes} or the ones corresponding to ERM estimation \eqref{appendix:definitions:update_functions_map}
\begin{align}
\begin{aligned}
	& f_{\w}^{\rm bayes}(\gamma ,\Lambda) = \partial_\gamma \log\(\mZ_{\w^\star}\) \,, \spacecase
	& f_{\out}^{\rm bayes} (y,\omega,V) = \partial_\omega \log \( \mZ_{\out^\star} \) \,,\spacecase
	& f_{\rm w}^{{\rm erm}, r}(\gamma, \Lambda) =  \Lambda^{-1}\gamma - \Lambda^{-1} \partial_{\Lambda^{-1}\gamma}\mM_{\Lambda^{-1}}\[ r(.) \] (\Lambda^{-1}\gamma)\,, \spacecase
	& f_{\out}^{{\rm erm}, l} (y, \omega, V) =  - \partial_{\omega} \mM_{V}[l(y,.)](\omega) \,,
\end{aligned}
\end{align}
whose corresponding GAMP algorithms \eqref{appendix:amp:glm_} will achieve potentially different fixed points and thus different performances. 
As it is proven that GAMP with Bayes-optimal updates lead to the optimal generalization error, so that ERM matches the same performances it is sufficient to enforce that at each time step $t$ the Bayes-optimal and ERM denoising functions are equal $f^{\rm bayes}=f^{\rm erm}$. Enforcing these two constraints will lead to the expressions for the optimal loss $l^{\rm opt}$ and regularizer $r^{\rm opt}$, so that ERM matches Bayes-optimal performances.

\subsection{Matching Bayes-optimal and ERM performances}
Imposing the equality on the channel updates we obtain
 \begin{align*}
 	f_{\out}^{\rm bayes}\(y, \omega, V\) &= f_{\out}^{{\rm erm}, l}\(y, \omega, V\)
 	\Leftrightarrow
\partial_\omega \log \( \mZ_{\out^\star} \)\(y, \omega, V\) = - \partial_\omega \mM_V\[l^{\rm opt}\(y,.\)\] (\omega)\,.
\end{align*}
Integrating, leaving aside the constant that will not influence the final result, and taking the Moreau-Yosida regularization on both sides, we obtain:
\begin{align*}
	\mM_V\[\log  \mZ_{\out^\star} \(y, ., V\)\]\(\omega\) = \mM_V\[- \mM_V\[l^{\rm opt}\(y,.\)\] (\omega)\] = - l^{\rm opt}\(y,\omega\)\,,
\end{align*}
where we invert the Moreau-Yosida regularization in the last equality that is valid as long as $\mZ_{\out^\star}(y,\omega,V)$ is assumed to be log-concave in $\omega$, (see \cite{Advani2016} for a derivation).
We finally obtain
\begin{align}
		l^{\rm opt}\(y,z\) &= - \mM_V\[\log \( \mZ_{\out^\star} \)\(y, ., V\)\]\(z\) = - \min_\omega \( \frac{(z-\omega)^2}{2 V} + \log \mZ_{\out^\star} \(y,\omega,V\) \) \,.
	\label{appendix:optimality:loss}
\end{align}
Let us perform the same computation for the prior updates. First we introduce a rescaled denoising distribution:
\begin{align}
\begin{aligned}
		&\td{Q}_{\w^\star}(w; \gamma,\Lambda) \equiv \displaystyle \frac{1}{\td{\mZ}_{\w^\star} (\gamma,\Lambda)} P_{\w^\star}(w) e^{ - \frac{1}{2} \Lambda\(w - \Lambda^{-1} \gamma  \)^2  }\,, \\ 
		&\log\(\td{\mZ}_{\w^\star} (\gamma,\Lambda)\) = \log\(\mZ_{\w^\star} (\gamma,\Lambda)\) - \frac{1}{2}\Lambda^{-1}\gamma^2 \,,
\end{aligned}
\end{align}
so that the the prior updates read
\begin{align}
\begin{aligned}
 	f_{\w}^{\rm bayes}\( \gamma, \Lambda \) &= \partial_\gamma \log\(\mZ_{\w^\star}\) = \Lambda^{-1}\gamma + \Lambda^{-1} \partial_{\Lambda^{-1}\gamma} \log\(\td{\mZ}_{\w^\star}\)\,, \spacecase
 	f_{\w}^{{\rm erm},r}\( \gamma, \Lambda \) &= \mP_{\Lambda^{-1}}\[ r \](\Lambda^{-1}\gamma) = \Lambda^{-1}\gamma - \Lambda^{-1} \partial_{\Lambda^{-1}\gamma}\mM_{\Lambda^{-1}}\[ r \] (\Lambda^{-1}\gamma) \,.
\end{aligned}
\end{align}
Imposing the equivalence of the Bayes-optimal and ERM prior update, 
\begin{align}
		f_{\w}^{\rm bayes}\( \gamma, \Lambda \) =  f_{\w}^{{\rm erm}, r}\( \gamma, \Lambda \) \Leftrightarrow  \partial_{\Lambda^{-1}\gamma} \log\(\td{\mZ}_{\w^\star}\) = -\partial_{\Lambda^{-1}\gamma}\mM_{\Lambda^{-1}}\[ r^{\rm opt} \] (\Lambda^{-1}\gamma) \,,
\end{align}
and assuming that $\mZ_{\rm w}(\gamma, \Lambda)$ is log-concave in $\gamma$, we may invert the Moreau-Yosida regularization, that leads to:
\begin{align}
&	r^{\rm opt}\(\Lambda^{-1} \gamma\) = - \mM_{\Lambda^{-1}}\[ \log\(\td{\mZ}_{\w^\star}\)\(.,\Lambda^{-1}\) \]\(w\) \label{appendix:optimality:reg} \\
	&= - \min_{\Lambda^{-1}\gamma} \( \frac{(w-\Lambda^{-1}\gamma)^2}{2 \Lambda^{-1}} + \log \td{\mZ}_{\w^\star}\(\gamma, \Lambda\) \) = - \min_{\gamma} \( \frac{1}{2}\Lambda w^2 - \gamma w + \log \mZ_{\w^\star}\(\gamma, \Lambda\) \) \,. \nonumber
\end{align}

The last step, is to characterize the variances $V$ and $\Lambda$ involved in \eqref{appendix:optimality:loss} and \eqref{appendix:optimality:reg} that are so far undetermined. To achieve the Bayes-optimal performances, we therefore need to use the variances $V$ and $\Lambda$ solutions of the Bayes-optimal GAMP algorithm \eqref{appendix:amp:glm_}. In the large size limit, these quantities concentrate and are given by the State Evolution (SE) of the GAMP algorithm, that we recall herein.

\paragraph{State evolution of GAMP}
In the large size limit, the expectation of the parameter $V$ and $\Lambda$ over the ground truth $\vec{w}^\star$ and the input data $\mat{X}$ lead to \cite{Barbier2017b}: 
\begin{align}
\EE_{\vec{w}^\star, \mat{X}}\[\ V \] &= \rho_{\w^\star} - q_\bayes\,, &&\EE_{\vec{w}^\star, \mat{X}}\[\ \Lambda \] = \hat{q}_\bayes\,,
\end{align}
where $q_\bayes$ and $\hat{q}_\bayes$ are solutions of the Bayes-optimal set of fixed point equations eq.~\eqref{main:fixed_point_equations_bayes}.

\subsection{Summary and numerical evidences}
Choosing the fine-tuned (potentially non-convex depending on $\mZ_{\out^\star}$ and $\mZ_{\w^\star}$) loss and regularizer
\begin{align}
\begin{aligned}
		l^{\rm opt}\(y,z\) &=- \min_\omega \( \frac{(z-\omega)^2}{2 (\rho_{\w^\star}-q_\bayes)} + \log \mZ_{\out^\star}\(y,\omega,\rho_{\w^\star}-q_\bayes\) \) \spacecase
		r^{\rm opt}\(w\) &= - \min_{\gamma} \( \frac{1}{2} \hat{q}_\bayes w^2 - \gamma w + \log \mZ_{\w^\star}\(\gamma, \hat{q}_\bayes\) \)
	\label{appendix:opt_loss_reg}
\end{aligned}
\end{align}
with $q_\bayes$ and $\hat{q}_\bayes$ are solutions of the Bayes-optimal set of fixed point equations eq.~\eqref{main:fixed_point_equations_bayes}, we showed that ERM can provably match the Bayes-optimal performances.
In particular we illustrated the behaviour of the optimal loss and regularizer $\lambda^{\rm opt}$ and $r^{\rm opt}$ for the model \eqref{main:teacher_sign} in Fig.~\ref{fig:opt_loss_reg_sign_gaussian} of the main text. Note in particular that even though the loss $l^{\rm opt}$ is not convex (but seems quasi-convex), numerical simulations of ERM with (\ref{appendix:opt_loss_reg}) (black dots) presented in Fig.~\ref{fig:opt_loss_reg_sign_gaussian_numerics} show that ERM achieves indeed the Bayes-optimal performances (black line) even at finite dimension. 
\begin{figure}[!htb]
\centering
        \includegraphics[scale=0.5]{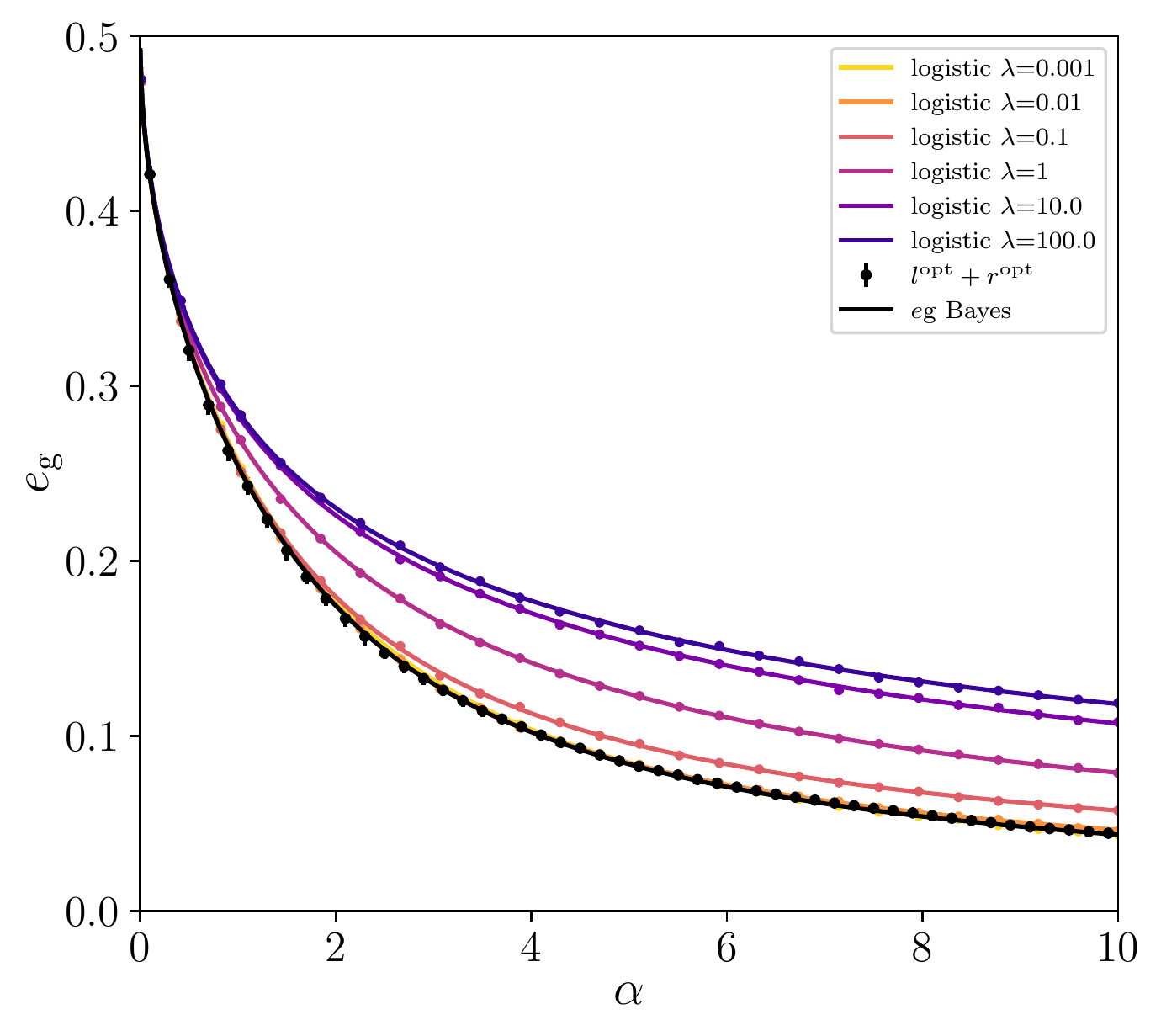}
        \caption{Generalization error obtained by optimization of the optimal loss $l^{\rm opt}$ and $r^{\rm opt}$ for the model \eqref{main:teacher_sign}, compared to $\rL_2$ logistic regression and Bayes-optimal performances. Numerics has been performed with \texttt{scipy.optimize.minimize} with the \texttt{L-BFGS-B} solver for $\ndim=10^3$ and averaged over $n_s=10$ instances. The error bars are barely visible.}
        \label{fig:opt_loss_reg_sign_gaussian_numerics}
\end{figure}

\end{document}